%
%
\documentclass[opre,sglanonrev]{informs4}

\RequirePackage{tgtermes}
\RequirePackage{newtxtext}
\RequirePackage{newtxmath}
\RequirePackage{bm}
\RequirePackage{endnotes}

\OneAndAHalfSpacedXII 

\usepackage{algorithm}
\usepackage{algpseudocode}
\usepackage{tikz}
\usetikzlibrary{decorations.text,shapes,snakes,arrows.meta}


\usepackage{natbib}
 \bibpunct[, ]{(}{)}{,}{a}{}{,}%

\newcommand{\br}{\bm{r}}
\newcommand{\bR}{\bm{R}}
\newcommand{\bz}{\bm{z}}
\newcommand{\by}{\bm{y}}
\newcommand{\E}{\operatorname{\mathbb{E}}}
\newcommand{\R}{\mathbb{R}}
\newcommand{\bx}{\bm{x}}
\newcommand{\util}{U}
\newcommand{\N}{\mathcal{N}}
\newcommand{\Si}{\mathcal{S}_{\textnormal{IM}}}
\DeclareMathOperator{\TV}{TV}
\newcommand{\goto}{\rightarrow}
\renewcommand\d{\operatorname{\mathrm{d}}}
\newcommand{\ba}{\bm{a}}
\newcommand{\bb}{\bm{b}}
\newcommand{\bI}{\bm{I}}
\newcommand{\ind}{\mathtt{ind}}
\newcommand{\I}{\mathcal{I}}
\newcommand{\succeqw}{\succeq_{\textnormal{w}}}
\newcommand{\succeqn}{\succeq_{\textnormal{no}}}
\newcommand{\bzz}{\bm{c}}
\newcommand{\zz}{c}
\newcommand{\bv}{\bm{v}}
\newcommand{\bH}{\bm{H}}
\newcommand{\bu}{\bm{u}}
\newcommand{\e}{\mathrm{e}}

\newtheorem{othertheorem}{Theorem}[section]

\EquationsNumberedThrough    

\TheoremsNumberedThrough     
\ECRepeatTheorems  %

\MANUSCRIPTNO{MOOR-0001-2024.00}

\begin{document}

\newcommand\blfootnote[1]{%
  \begingroup
  \renewcommand\thefootnote{}\footnote{#1}%
  \addtocounter{footnote}{-1}%
  \endgroup
}


\RUNAUTHOR{Su}

\RUNTITLE{The Isotonic Mechanism}

\TITLE{You Are the Best Reviewer of Your Own Papers: The Isotonic Mechanism}

\ARTICLEAUTHORS{%
\AUTHOR{Weijie Su}
\AFF{Department of Statistics and Data Science,
University of Pennsylvania, \EMAIL{suw@wharton.upenn.edu}}

} 

\ABSTRACT{%
Machine learning (ML) and artificial intelligence (AI) conferences including NeurIPS and ICML have experienced a significant decline in peer review quality in recent years. To address this growing challenge, we introduce the Isotonic Mechanism, a computationally efficient approach to enhancing the accuracy of noisy review scores by incorporating authors' private assessments of their submissions. Under this mechanism, authors with multiple submissions are required to rank their papers in descending order of perceived quality. Subsequently, the raw review scores are calibrated based on this ranking to produce adjusted scores. We prove that authors are incentivized to truthfully report their rankings because doing so maximizes their expected utility, modeled as an additive convex function over the adjusted scores. Moreover, the adjusted scores are shown to be more accurate than the raw scores, with improvements being particularly significant when the noise level is high and the author has many submissions---a scenario increasingly prevalent at large-scale ML/AI conferences.

We further investigate whether submission quality information beyond a simple ranking can be truthfully elicited from authors. We establish that a necessary condition for truthful elicitation is that the mechanism be based on pairwise comparisons of the author's submissions. This result underscores the optimality of the Isotonic Mechanism, as it elicits the most fine-grained truthful information among all mechanisms we consider. We then present several extensions, including a demonstration that the mechanism maintains truthfulness even when authors have only partial rather than complete information about their submission quality. Finally, we discuss future research directions, focusing on the practical implementation of the mechanism and the further development of a theoretical framework inspired by our mechanism.
}%

\FUNDING{This research was supported by NSF grants CCF-1934876 and CAREER DMS-1847415, and an Alfred Sloan Research Fellowship.}



\KEYWORDS{Mechanism design, Peer review, ML/AI conference, Truthfulness} 

\maketitle


\blfootnote{A preliminary version of some results of the paper was presented at NeurIPS 2021 \citep{su2021you}.}

\section{Introduction}
\label{sec:introduction}

The past decade has seen unprecedented attention to and impact from machine learning (ML) and artificial intelligence (AI). This technological revolution has been accompanied by a dramatic surge in submissions to premier ML/AI conferences such as NeurIPS, ICML, and ICLR. For example, NIPS\footnote{The name has changed from NIPS to NeurIPS since 2019.} received 833 submissions in 2006 and 1,838 in 2015, one year before AlphaGo was introduced, with an average annual growth of 9.2\% \citep{papercopilot2025statsneurips}. From 2016 to 2024, however, the number of submissions increased from 2,425 to 15,671, with an average annual growth of 26.3\%. Similarly, ICML, the second largest ML/AI conference, saw its submissions rise from 6,538 in 2023 to 9,653 in 2024, marking a 47.6\% increase just in one year \citep{papercopilot2025statsicml}.

This unprecedented growth in the scale of ML/AI conferences has made it increasingly difficult to maintain review quality, primarily because the pool of qualified reviewers---those, for instance, who have published at least one paper in a premier ML/AI conference---cannot keep pace with the surge in submission volumes~\citep{sculley2018avoiding}. The resulting disparity places a significant burden on these reviewers, thus reducing the average time they can devote to each paper, and, more troublingly, prompting program chairs to recruit novice reviewers for top-tier ML/AI conferences~\citep{stelmakh2020novice}. As a result, the scores\footnote{Unlike journal reviews, conference reviewers typically assign a numeric score to each submission. For instance, NeurIPS uses a scale from 1 to 10.} provided by reviewers---perhaps the most influential factor in deciding acceptance or rejection---display surprisingly high variability and arbitrariness~\citep{stelmakh2020novice,redditors2021neurips,cortes2021inconsistency}. For example, in the NIPS 2014 experiment, if a paper was reviewed twice, about 57\% of the accepted papers would have been rejected. While some degree of review arbitrariness may reflect the diversity of scientific perspectives, such a high level as documented in the NIPS experiment is likely to allow many low-quality papers to be accepted and, conversely, many strong papers to be rejected. In the long run, this poor reliability of peer review could diminish public trust in the proceedings of ML/AI conferences, a risk the scientific community cannot afford to overlook~\citep{rogers2020can}.

In response, numerous efforts have sought to improve peer review for ML/AI conferences~\citep{van1999effect,arous2021peer,jecmen2020mitigating,liang2024large}, particularly in producing more reliable review scores that truly reflect submission quality. A common theme in this body of work is to design better questions for reviewers, provide incentives for them to invest more time, and refine paper-reviewer assignments~\citep{kobren2019paper,wang2018your,wang2020debiasing,leyton2024matching,xu2023one}. Notably, research on improving peer review has largely neglected the incorporation of authors' opinions in the evaluation process at ML/AI conferences. Unlike many conference reviewers, who often have limited research experience~\citep{shah2018design}, authors are arguably among the most qualified individuals to assess the quality of their own submissions, given their intimate familiarity with the work. Peer review quality could be substantially enhanced if authors' private assessments regarding their submission quality were incorporated into the decision-making process. However, when solicited, a key challenge is that authors have a strong incentive to report inflated scores to maximize their chances of acceptance.

\subsection{An owner-assisted mechanism}
In this paper, we investigate the truthful elicitation of private information from authors to derive enhanced scores for submissions. We begin by considering an author who submits $n \ge 2$ papers to an ML/AI conference. It has become increasingly common for a single author to submit multiple papers to the same conference~\citep{iclr}. For $1 \le i \le n$, the raw (average) review score for the $i^{\textnormal{th}}$ paper is modeled as\footnote{The average score is computed over typically three or four reviews per submission.}
\begin{equation}\label{eq:y}
y_i = R_i + z_i,
\end{equation}
where $\bR = (R_1, \ldots, R_n)$ represents the ground-truth scores of the papers and $\bz = (z_1, \ldots, z_n)$ denotes the noise vector. Suppose the author has full or partial knowledge of the ground truth $\bR$, and the objective is to improve its estimation using the raw scores $\by = (y_1, \ldots, y_n)$ together with the author's private information regarding $\bR$.

We propose a mechanism that requires an author to provide a ranking $\pi$ of the ground-truth vector $(R_1, \ldots, R_n)$, which orders her submissions in descending order of quality. This ranking must be submitted before the author is exposed to the review scores. Given this ranking, our mechanism computes adjusted scores $\widehat\bR^{\pi} = (\widehat R_1^{\pi}, \ldots, \widehat R_n^{\pi})$ by solving the following convex optimization problem:
\begin{equation}\label{eq:isotone_explicit}
\begin{aligned}
&\min_{\br} & ~~  &  \| \by - \br\|^2 \\
&\text{s.t.~} & ~~ & r_{\pi(1)} \ge r_{\pi(2)} \ge \cdots \ge r_{\pi(n)}.
\end{aligned}
\end{equation}

Computationally, this problem is an instance of isotonic regression---hence the name, the Isotonic Mechanism---and can be solved efficiently by the pool adjacent violators algorithm~\citep{kruskal64,barlow72,bogdan2015slope}.

We assume the author's objective is to maximize her payoff, which is a function of the solution $\widehat{\bR}^{\pi}$ to \eqref{eq:isotone_explicit}. Concretely, let $U$ be any nondecreasing convex function. The author is then modeled as acting rationally to maximize the expected overall utility $\E \bigl[ U(\widehat R_1^{\pi}) + \cdots + U(\widehat R_n^{\pi}) \bigr]$ by choosing among the $n!$ possible rankings $\pi$, whether truthfully or not.

Our main result demonstrates that, under the Isotonic Mechanism, the author's optimal strategy is to \textit{truthfully} report the ranking $\pi^\star$ that satisfies $R_{\pi^\star(1)} \ge R_{\pi^\star(2)} \ge \cdots \ge R_{\pi^\star(n)}$. 
\begin{othertheorem}[full statement in Theorem~\ref{thm:main} in Section~\ref{sec:conv-util-funct}]\label{thm:intro_all}
The author's expected utility is maximized when the Isotonic Mechanism is provided with the ground-truth ranking $\pi^\star$. 
\end{othertheorem}

A notable advantage of the Isotonic Mechanism is that, for an author to truthfully report a ranking of her submissions, it suffices for her to correctly perform pairwise comparisons between any two papers. From a practical standpoint, this is advantageous because pairwise comparisons do not require calibration; it is sufficient to determine the relative ordering of $R_1, \ldots, R_n$, even if the author is optimistic (i.e., perceiving the quality as $R_1+c, \ldots, R_n+c$ for some $c>0$) or pessimistic (i.e., perceiving the quality as $R_1+c, \ldots, R_n+c$ for some $c<0$). As long as the bias is consistent across all submissions, the author can still provide the correct ground-truth ranking.\footnote{For example, in the development of ChatGPT, pairwise comparisons were employed rather than requiring absolute scores for responses \citep{ouyang2022training}.}

From the perspective of the conference, the crucial consideration is whether the Isotonic Mechanism enhances the estimation of the ground-truth quality of submissions. Indeed, when the mechanism is supplied with the ground-truth ranking, the adjusted scores produced by the mechanism improve the estimation of $\bR$ in squared errors relative to the raw review scores $\by$ (see Proposition~\ref{thm:estimate}). Thus, the author's information about her own submissions contributes positively to enhancing the accuracy of $\bR$, and we therefore refer to this as an owner-assisted mechanism.

Moreover, our analysis shows that the benefit of the Isotonic Mechanism can become significant if the variability of the raw review scores is large and if the author has many submissions---precisely the scenario faced by modern ML/AI conferences. For example, 133 authors each submitted at least five papers to ICLR 2020, accounting for a notable fraction of the total 2,594 submissions. Surprisingly, a single individual submitted as many as 32 papers to that conference~\citep{iclr}. These trends make the Isotonic Mechanism especially well-suited to enhancing peer review in such contexts.

\subsection{Other truthful mechanism?}

Other than the Isotonic Mechanism, one might wonder if alternative truthful approaches exist. In particular, can one truthfully elicit more fine-grained information than a simple ranking? Intuitively, the more detailed the author-provided information (assuming it is truthful), the more effectively the conference can estimate the ground truth. Conversely, mechanisms that elicit coarser information may also be useful when the author is uncertain about the exact ranking of the ground truth, although this is likely to result in a less accurate estimation.

To formulate the problem of finding general truthful mechanisms in the owner-assisted setting, we consider an author who knows the ground truth $\bR \in \R^n$ and transmits a message about it. We assume this message concerns an unknown but fixed value of $\bR$---for instance, the message could state ``$R_i > c$ for all $i$'' or ``$R_1 + \cdots + R_n > c$'' for some constant $c$. Moreover, we assume that the message is uniquely determined by the ground truth; that is, there exists exactly one message that is consistent with $\bR$.

Let $\mathcal{M}$ denote the collection of all possible messages. For any message $\mathtt{Mess} \in \mathcal{M}$, let $S = S(\mathtt{Mess})$ denote the set of all values $\bR$ for which this message would be generated by the author. By construction, the family of sets $\mathcal{S} = \{S(\mathtt{Mess}): \mathtt{Mess} \in \mathcal{M}\}$ forms a ``knowledge partition'' of $\R^n$. The author is required to select an element from the partition $\mathcal{S}$ prior to viewing the review scores; that is, the message sent to the conference is essentially ``the ground-truth score vector of my submissions belongs to $S$''.

Given the author's selection $S$ from the partition and the raw review scores $\by$, the conference aims to accurately estimate $\bR$. Assuming that the conference accepts the author's message as truthful, a natural approach to incorporate the constraint $\bR \in S$ is to use $\by$ as the estimate if $\by \in S$, and otherwise project $\by$ onto $S$. Formally, the estimate is defined as the solution to the following optimization problem:
\begin{equation}\nonumber
\min_{\br} ~ \|\by - \br\|^2
\end{equation}
subject to the constraint that $\br = (r_1, \ldots, r_n) \in S$, where $\|\cdot\|$ denotes the $\ell_2$ norm throughout. This formulation corresponds to a constrained maximum likelihood estimation when the noise variables $z_1, \ldots, z_n$ in \eqref{eq:y} are independent and identically distributed (i.i.d.) normal random variables with mean zero~\citep{aitchison1958maximum}. This estimator is known to be minimax optimal under very general conditions~\citep{johnstone2002function}.

This estimation approach coincides with the Isotonic Mechanism when $\mathcal{S}$ is the \textit{isotonic partition}---obtained by partitioning the $n$-dimensional Euclidean space into $n!$ cones based on the ranking of the $n$ coordinates---which is truthful by Theorem~\ref{thm:intro_all}. However, for more general partitions, the author may have an incentive to misreport the set containing the ground truth, particularly when the partition is very fine-grained. For example, if the question is posed as ``What are the exact scores of your papers?'' and the partition is maximally fine-grained (i.e., each set consists of a single data point), the author may be incentivized to report inflated scores.

We provide below a necessary condition for a partition-based mechanism to be truthful. In the theorem statement, we consider an agnostic setting in which the utility is known only to be additive and convex across submissions. 

\begin{othertheorem}[full statement in Theorem~\ref{thm:compare} in Section~\ref{sec:honest-incl-inform}]\label{thm:intro1}
If the author always reports truthfully under the aforementioned assumptions on the mechanism and utility, then the partition $\mathcal{S}$ must be delineated by a set of pairwise-comparison hyperplanes of the form $x_i - x_j = 0$ for some $1 \le i < j \le n$.
\end{othertheorem}

This theorem implies that the author will be truthful \textit{only if} the elicitation question is based on \textit{pairwise comparisons}. To elaborate, a partition is delineated by pairwise-comparison hyperplanes if, for any element $S$ in the partition $\mathcal{S}$, membership $\bR \in S$ can be determined solely by verifying the inequalities $R_i \ge R_j$ for some indices $i < j$. For example, consider the collection $\{\bx \in \R^3: \min(x_i, x_{i+1}) \ge x_{i+2}\}$ for $i = 1, 2, 3$, which forms such a partition in three dimensions,\footnote{Overlaps along the boundaries are ignored as they constitute a Lebesgue measure zero set.} where we adopt the cyclic convention $x_{i+3} = x_i$. Specifically, note that $\bx \in S_i$ if $x_i \ge x_{i+2}$ and $x_{i+1} \ge x_{i+2}$. In contrast, consider the collection of spheres $\{\bx \in \R^n: \|\bx\| = c\}$ for all $c \ge 0$; such a collection cannot be generated solely by pairwise comparisons.

When all pairs of a vector are compared, one obtains the complete ranking of its coordinates, but no additional information. Hence, the most fine-grained partition achievable via pairwise comparisons is the isotonic partition, which is employed in the Isotonic Mechanism for information elicitation.

Taken together, Theorems~\ref{thm:intro_all} and \ref{thm:intro1} demonstrate the optimality of the Isotonic Mechanism. Specifically, this mechanism elicits the most fine-grained information concerning the ground truth among all partition-based truthful mechanisms. The necessary condition shown in Theorem~\ref{thm:intro1} indicates that any truthful partition must be derived from the isotonic partition, in the sense that each element of the partition must be a union of several rankings.

Furthermore, we identify several truthful partitions that, while based on pairwise comparisons, are strictly coarser than the isotonic partition (see Theorem~\ref{thm:block} in Section~\ref{sec:exampl-truth-tell}). This finding implies that when an author faces uncertainty about the ground-truth ranking, she can still maximize her utility by reporting a coarser form of the ranking to the best of her knowledge.

A practical advantage is that if the most fine-grained truthful partition were not defined via pairwise comparisons, implementing the mechanism would pose considerable practical challenges. For instance, it would be difficult for an author to estimate $\|\bR\|$, whereas performing pairwise comparisons is relatively straightforward---precisely the approach taken in the Isotonic Mechanism.

This practical benefit significantly facilitated our experiment at ICML in 2023, 2024, and 2025, where authors with multiple submissions were asked to rank their papers based on perceived quality. In ICML 2023, we collected 1,342 rankings corresponding to 2,592 submissions \citep{su2024analysis}. Our analysis indicated that scores adjusted by the Isotonic Mechanism were significantly more accurate than raw review scores under certain metrics~\citep{su2024analysis}.

\subsection{Related work}
\label{sec:related}

An extensive line of research has been proposed to address the declining quality of peer review in ML/AI conferences, predominantly focusing on the role of reviewers~\citep{van1999effect,arous2021peer,jecmen2020mitigating,liang2024large}. For instance, \citet{ugarov2023peer} introduced a mechanism that utilizes peer prediction to incentivize reviewers. In contrast, the Isotonic Mechanism exploits information exclusively from authors, without imposing any additional burden on reviewers. For completeness, we note that an emerging line of research in mechanism design leverages additional author information to improve peer review~\citep{aziz2019strategyproof,noothigattu2021loss,mattei2020peernomination}, albeit without explicitly soliciting rankings of authors' submissions.

From an economic perspective, the literature on cheap talk bears conceptual similarities to the present work \citep{crawford1982strategic,chakraborty2007comparative,chakraborty2010persuasion}. Roughly speaking, cheap talk refers to a class of games where the sender provides costless, non-binding information to the receiver, and this information does not directly affect the payoffs of either party. This literature demonstrates that when the interests of the two parties are aligned or partially aligned, an equilibrium can emerge in which the sender provides credible information, thereby yielding higher payoffs than in the absence of coordination. Moreover, \citet{battaglini2002multiple} demonstrated the advantages of multidimensional communication channels. A key distinction between cheap talk and our work, however, is that our mechanism explicitly commits to incorporating the ranking information into the estimation of ground-truth scores. For comparison, \citet{chakraborty2007comparative} examined a closely related setting by eliciting comparative information from the sender and primarily establishes the existence of truthful equilibria, whereas our work provides explicit computational procedures and establishes concrete statistical guarantees regarding improvements in estimation accuracy. For completeness, \citet{levy2007limits} demonstrated that commitment power in multidimensional cheap talk can enhance communication.

The present paper is also related to the literature on the delegation problem~\citep{holmstrom1978incentives,melumad1991communication,martimort2006continuity,amador2013theory}, in which a principal (in our case, the conference) delegates decision-making authority to an agent (the author) and commits to following the agent's decision, subject to a set of rules. The most relevant work to our study is the seminal contribution on aligned delegation~\citep{frankel2014aligned}, which examined how a principal can delegate multiple decisions to an agent with unknown preferences. In \citet{frankel2014aligned}, mechanisms are derived that ensure the principal maximizes its payoff in a robust sense, regardless of potential agent biases, with the agent effectively acting as if maximizing the principal's utility. In particular, simple delegation rules, such as requiring the specification of rankings or imposing fixed budgets, guarantee the best possible worst-case outcomes for the principal. Furthermore, \citet{frankel2016delegating} demonstrated that when delegating multiple decisions to a biased agent, it is optimal for the principal to employ a half-space delegation set that constrains the agent's choices against potential biases. In a broad sense, the Isotonic Mechanism can be interpreted within the framework of aligned delegation, as the conference delegates the task of ranking submissions to the author. A significant distinction, however, is that our mechanism focuses on its real-world application in ML/AI conference peer review, where the principal observes noisy signals of the ground truth, which enables the development of estimation procedures with provable statistical guarantees. Thus, the primary focus of the Isotonic Mechanism is statistical estimation rather than the establishment of truthful equilibria, with truthfulness serving merely as an intermediate step to achieve accurate estimation of the ground truth.

\subsection{Organization of the paper}
The remainder of the paper is organized as follows. In Section~\ref{sec:conv-util-funct}, we formally introduce the Isotonic Mechanism and establish the conditions under which the mechanism is truthfulness, accompanied by an analysis of its estimation properties. Section~\ref{sec:honest-incl-inform} examines general partition-based mechanisms and establishes the optimality of the Isotonic Mechanism. In Section~\ref{sec:exampl-truth-tell}, we demonstrate that while some pairwise-comparison-based partitions yield truthful mechanisms, others fail to maintain this property. Section~\ref{sec:extens-util-funct} presents several extensions showing that truthful reporting yields the highest payoff in more general settings. The proofs of the main results are presented in Section~\ref{sec:proofs}. Finally, Section~\ref{sec:discussion} concludes the paper with a discussion of potential directions for future research.


\section{The Isotonic Mechanism}
\label{sec:conv-util-funct}

In this section, we present the Isotonic Mechanism in detail and establish theoretical guarantees under certain conditions. These guarantees show that (1) the author maximizes her expected utility by truthfully reporting the ranking of her submissions, and (2) this mechanism improves the statistical estimation accuracy of submission quality.

Consider an author who submits \(n \ge 2\) papers to an ML/AI conference. The raw review scores \(\by \in \R^n\) of these \(n\) submissions are assumed to follow the model $\by = \bR + \bz$, where \(\bR\) measures the quality of the submissions and can be viewed as the ground-truth scores of the \(n\) papers, and \(\bz\) denotes \(n\) noise terms. The Isotonic Mechanism requires the author to provide a ranking of her submissions in descending order of quality; let \(\pi\) denote this ranking. The ranking is provided before the raw review scores \(\by\) are disclosed to the author. Subsequently, the Isotonic Mechanism outputs \textit{ranking-calibrated} scores---denoted \(\widehat{\bR}^{\pi}\) as before---which are given by the solution to the following optimization problem:
\begin{equation}\label{eq:isotone}
\begin{aligned}
&\min_{\br} ~ \| \by - \br\|^2 \\
&\text{~s.t.} ~~ \br \in S_{\pi},
\end{aligned}
\end{equation}
where the isotonic cone \(S_{\pi}\) is defined as
\[
S_{\pi} = \left\{ \bx = (x_1, \ldots, x_n): x_{\pi(1)} \ge x_{\pi(2)} \ge \cdots \ge x_{\pi(n)} \right\}.
\]
Note that \(\bR \in S_{\pi^\star}\), where the ground-truth ranking \(\pi^\star\) sorts the components of \(\bR\) in descending order: $R_{\pi^\star(1)} \ge R_{\pi^\star(2)} \ge \cdots \ge R_{\pi^\star(n)}$. The collection of isotonic cones over all permutations forms the isotonic partition.\footnote{We omit the overlaps along the boundaries of the isotonic cones. For a treatment of general knowledge partitions, see Section~\ref{sec:honest-incl-inform}.}

We make the following assumptions to investigate under what conditions the Isotonic Mechanism incentivizes the author to report truthfully. In the language of mechanism design, the conference is the principal, who observes raw review scores as noisy signals of the ground truth, and the author is the agent who possesses private information about her submissions.

\begin{assumption}[Agent's private information]\label{ass:author2}
The author has sufficient knowledge of the ground truth \(\bR\) for her \(n \ge 2\) submissions to determine which isotonic cone contains \(\bR\).
\end{assumption}

\begin{remark}
This assumption is weaker than requiring the author to know the exact values of the ground-truth scores \(R_1, \ldots, R_n\). In particular, it suffices for the author to know the relative magnitudes of \(R_i\)'s. For example, this holds if the author is aware of a monotone transformation of these scores.
\end{remark}

\begin{assumption}[Observation for the principal]\label{ass:noise}
The noise variables \(\bz = (z_1, \ldots, z_n)\) are i.i.d.\ draws from some probability distribution.
\end{assumption}

\begin{remark}
The independence condition can be relaxed to exchangeability for most results in this paper, unless stated otherwise. That is, \((z_1, \ldots, z_n)\) has the same probability distribution as \((z_{\pi(1)}, \ldots, z_{\pi(n)})\) for any permutation \(\pi\). Note also that the noise distribution can have nonzero mean.

\end{remark}

Suppose the conference makes decisions using the scores \(\br = (r_1, \ldots, r_n)\). We assume that the author's utility is additive across submissions.

\begin{assumption}[Agent's utility]\label{ass:convex}
The author's overall utility has the form
\begin{equation}\label{eq:util_form}
\util\bigl(\br\bigr) := \sum_{i=1}^n U\bigl(r_i\bigr),
\end{equation}
where \(U\) is a nondecreasing convex function.
\end{assumption}

\begin{remark}
Sections~\ref{sec:true-grade-dependent} and~\ref{sec:extens-util-funct} discuss two relaxations of the convexity assumption.
\end{remark}

\begin{remark}
To put the convexity assumption differently, the marginal utility $U'$ is nondecreasing. Convex utility is often assumed in the literature \citep{krishna2001convex} and, in particular, does not contradict the economic law of diminishing marginal utility~\citep{kreps1990course}, as the score measures quality rather than quantity. In peer review for ML/AI conferences, for example, high scores largely determine whether an accepted paper is presented as a poster, an oral presentation, or granted a best paper award. While an oral presentation attracts more attention than a poster, a best paper award significantly enhances the paper's impact. Accordingly, the marginal utility tends to be larger when scores are higher. Another example is the diamond-quality grading system used by the Gemological Institute of America. Typically, the price of a high-grade diamond increases more rapidly with its grade than that of a low-grade diamond. Nevertheless, an interesting direction for future research is to provide more evidence for convex utility using empirical data.

\end{remark}

Rationally, the author chooses \(\pi\)---whether truthful or not---to maximize her expected overall utility \(\E\bigl[\util\bigl(\widehat{\bR}^{\pi}\bigr)\bigr]\) when the Isotonic Mechanism's ranking-calibrated scores are used in the conference's decisions.

\begin{definition}\label{def:truth}
The Isotonic Mechanism (or equivalently, the isotonic partition) is called \textit{truthful} with utility \(U\) if the author maximizes her expected overall utility by reporting the ground-truth ranking \(\pi^\star\).
\end{definition}

We now state the main result of this section.

\begin{theorem}\label{thm:main}
Under Assumptions~\ref{ass:author2}, \ref{ass:noise}, and~\ref{ass:convex}, the Isotonic Mechanism is truthful.
\end{theorem}

\noindent The proof of this theorem is given in Section~\ref{sec:xxxxxxxxxxxx}.

\begin{remark}
When there are no ties in the true scores \(\bR\) and \(U\) is strictly convex, reporting the ground-truth ranking is \textit{strictly} optimal: for any ranking \(\pi \neq \pi^\star\), we have
\[
\E\bigl[\util\bigl(\widehat{\bR}^{\pi}\bigr)\bigr] 
< 
\E\bigl[\util\bigl(\widehat{\bR}^{\pi^\star}\bigr)\bigr],
\]
where \(\widehat{\bR}^{\pi^\star}\) is the output of the mechanism under the ground-truth ranking.
\end{remark}

\begin{remark}
The squared \(\ell_2\) loss in~\eqref{eq:isotone} can be replaced by a sum of Bregman divergences~\citep{gneiting2011making}. If \(\phi\) is a twice continuously differentiable, strictly convex function, and \(D_{\phi}(y,r) = \phi(y) - \phi(r) - (y-r)\phi'(r)\) its associated Bregman divergence, then Theorem~\ref{thm:main} still holds if the conference uses the solution to
\begin{equation}\label{eq:isotone_shape_d}
\min_{\br \in S_{\pi}} ~ \sum_{i=1}^n D_{\phi}(y_i, r_i).
\end{equation}
Indeed, \eqref{eq:isotone} is the special case \(\phi(x) = x^2\). Another example is the Kullback--Leibler divergence \(D_{\phi}(y, r) = y\log\frac{y}{r} + (1-y)\log\frac{1-y}{1-r}\) for \(0 < y,r < 1\), where \(\phi(x) = x\log x + (1-x)\log(1-x)\). Minimizing \(\sum_{i=1}^n D_{\phi}(y_i, r_i)\) over the isotonic cone \(S_{\pi}\) yields the same solution as the Isotonic Mechanism for any continuously differentiable, strictly convex \(\phi\)~\citep{minimax}.
\end{remark}

To build intuition about why the Isotonic Mechanism is truthful, note a key property of isotonic regression. Beyond the mean-preserving constraint
\[
\sum_{i=1}^n \widehat{R}^{\pi}_{i} = \sum_{i=1}^n y_i,
\]
the solution to isotonic regression tends to exhibit smaller variability across its components if an incorrect ranking \(\pi\) is imposed instead of the ground-truth ranking~\citep{kruskal64}. Consequently, Jensen's inequality suggests that the overall convex utility \(\sum_{i=1}^n U\bigl(\widehat{R}^{\pi}_i\bigr)\) is usually smaller under an incorrect ranking.\footnote{Nevertheless, the proof of Theorem~\ref{thm:main} does not rely on Jensen's inequality.}

In the noiseless setting \(\by = \bR\), this phenomenon becomes clearer. When $\pi$ is truthfully set to $\pi^\star$, \(\bR\) itself is a feasible solution to~\eqref{eq:isotone}, so the ranking-calibrated scores \(\widehat{\bR}^{\pi^\star} = \bR\). However, if \(\pi \neq \pi^\star\), the pool adjacent violators algorithm will keep averaging certain components of \(\bR\) until \(\bR\) fits the (incorrect) ranking \(\pi\). Because the entries are then ``shrunk'' together, the sum \(\sum_{i=1}^n U\bigl(\widehat{R}^{\pi}_i\bigr)\) typically decreases owing to the convexity of \(U\).

In practice, the conference could commit to randomly choosing a ranking input if the author fails to provide one. Since doing so would yield a suboptimal outcome for the author, the author's best response in such a scenario would be to submit a ranking---and, as shown, the optimal choice is the ground-truth ranking.

\subsection{Estimation properties}
\label{sec:estim-prop}

Because the Isotonic Mechanism is truthful, it also improves estimation of the ground truth \(\bR\).

\begin{proposition}\label{thm:estimate}
Under Assumptions~\ref{ass:author2}, \ref{ass:noise}, and~\ref{ass:convex}, the Isotonic Mechanism improves estimation accuracy of \(\bR\) in the sense that
\[
\E \bigl\| \widehat{\bR}^{\pi^\star} - \bR \bigr\|^2 
\le 
\E \bigl\| \by - \bR \bigr\|^2.
\]
\end{proposition}

\begin{remark}
This improvement over the raw scores \(\by\) remains valid if \(S_{\pi}\) in~\eqref{eq:isotone} is replaced by any convex set containing \(\bR\). Although the proof is standard, we provide it in the Appendix for completeness.
\end{remark}

We next show that, for large \(n\) and noisy raw scores, the Isotonic Mechanism can significantly reduce the estimation error. Define the total variation of \(\bR\) by
\[
\TV(\bR) := \inf_{\pi}\sum_{i=1}^{n-1} \bigl|R_{\pi(i)} - R_{\pi(i+1)}\bigr|
= \sum_{i=1}^{n-1} \bigl|R_{\pi^\star(i)} - R_{\pi^\star(i+1)}\bigr| 
= R_{\pi^\star(1)} - R_{\pi^\star(n)}.
\]
Let \(z_1, \ldots, z_n\) be i.i.d.\ \(\N(0, \sigma^2)\). For any fixed \(\sigma > 0\) and \(V > 0\), if \(\TV(\bR) \le V\), then the Isotonic Mechanism with the true ranking \(\pi^\star\) satisfies
\[
0.4096 + o_n(1) 
\le 
\frac{\sup_{\TV(\bR) \le V} \E \bigl\| \widehat{\bR}^{\pi^\star} - \bR \bigr\|^2 }
     {n^{\tfrac13} \,\sigma^{\tfrac43} \,V^{\tfrac23}} 
\le 
7.5625 + o_n(1),
\]
where both \(o_n(1)\) terms tend to \(0\) as \(n \goto \infty\).

\begin{remark}
This result follows from known risk bounds for isotonic regression; see~\citet{zhang2002risk,chatterjee2015risk}.
\end{remark}

Hence, for suitably constrained \(\bR\), the risk of \(\widehat{\bR}^{\pi^\star}\) (in terms of squared error) is on the order of \(n^{1/3} \sigma^{4/3}\). In contrast, using \(\by\) directly entails a risk of
\[
\E \Bigl[\|\by - \bR\|^2\Bigr] 
= \E \Bigl[\sum_{i=1}^n z_i^2\Bigr] 
= n\,\sigma^2.
\]
The ratio between the two risks is $O(n^{1/3} \sigma^{4/3})/(n \sigma^2) = O(n^{-2/3} \sigma^{-2/3})$. Therefore, the Isotonic Mechanism is especially favorable when both $n$ and $\sigma$ are large. In interpreting this result, however, it is important to notice that the total variation of the ground truth is fixed. Otherwise, when $R_{\pi^\star(i)} \gg R_{\pi^\star(i+1)}$ for all $i$, the solution of the Isotonic Mechanism is roughly the same as the raw scores $\by$ because it satisfies the constraint $\by \in S_{\pi^\star}$ with high probability. Accordingly, the Isotonic Mechanism has a risk of about $n \sigma^2$ in this extreme case. That said, the Isotonic Mechanism in general is superior to using the raw scores, according to Proposition~\ref{thm:estimate}.

\subsection{Ground-truth-dependent utility}
\label{sec:true-grade-dependent}

The utility function may depend on the ground truth. For instance, an author might prefer a high-quality paper to receive a higher score rather than a low-quality paper. This is especially relevant when a paper is awarded a best paper prize. If a low-quality paper receives the award, its weaknesses or flaws may be exposed as more people scrutinize it closely~\citep{carlini2022no}.

To accommodate such scenarios, we relax Assumption~\ref{ass:convex} so that utility may vary with the ground-truth score. Suppose two papers have ground-truth scores \(R\) and \(R'\) with \(R > R'\). Given two scores \(r\) and \(r'\) where \(r > r'\), the author prefers awarding the higher score to the paper with the higher ground truth. Formally,
\begin{equation}\label{eq:u_true_higher}
U(r;R) + U(r';R') \ge U(r;R') + U(r';R).
\end{equation}
Rewriting \(r = r' + \Delta r\) and letting \(\Delta r \to 0+\) implies
\[
\frac{\d}{\d r}U\bigl(r;R\bigr) 
\ge
\frac{\d}{\d r}U\bigl(r;R'\bigr).
\]
This motivates the following assumption.

\begin{assumption}\label{ass:convex2}
Given scores $\br$ in decision-making, the author's overall utility is
\[
\util(\br) 
:= 
\sum_{i=1}^n U\bigl(r_i;R_i\bigr),
\]
where \(U(x;R)\) is convex in its first argument and satisfies
\begin{equation}\label{eq:ut_inc}
\frac{\d}{\d x}U\bigl(x;R\bigr) 
\ge 
\frac{\d}{\d x}U\bigl(x;R'\bigr)
\quad
\text{whenever } R > R'.
\end{equation}
\end{assumption}

The inequality in this assumption amounts to saying that the marginal utility increases with respect to the true score of the submission. By integrating both sides of the inequality $\frac{\d U(x; R)}{\d x} \ge \frac{\d U(x; R')}{\d x}$, we find that Assumption~\ref{ass:convex2} implies \eqref{eq:u_true_higher}.

An example of ground-truth-dependent utility takes the form $U(x; R) = g(R)h(x)$, where $g \ge 0$ is nondecreasing and $h$ is a nondecreasing convex function. Taking any nondecreasing $g_1,
\ldots, g_L \ge 0$ and nondecreasing convex $h_1, \ldots, h_L$, more generally, the following function
\[
U(x; R) = g_1(R)h_1(x) + g_2(R)h_2(x) + \cdots + g_L(R)h_L(x)
\]
satisfies Assumption~\ref{ass:convex2}.

Theorem~\ref{thm:main} remains true in the presence of heterogeneity in the author's utility, as we show below. It is proved in Section~\ref{sec:prop:score}.

\begin{theorem}\label{prop:score_dependent}
Under Assumptions~\ref{ass:author2}, \ref{ass:noise}, and \ref{ass:convex2}, the Isotonic Mechanism is truthful. 

\end{theorem}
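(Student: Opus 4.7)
The plan is to follow the proof strategy of Theorem~\ref{thm:main}, replacing the single-variable convexity inequality at its heart with a two-dimensional supermodular inequality that exploits both the convexity of $U(\cdot;R)$ in its first argument and the condition $\partial_x U(x;R)$ nondecreasing in $R$ from Assumption~\ref{ass:convex2}. The key inequality is that, for $x_1 \ge x_2$ and $R_a \ge R_b$,
\[
U(x_1;R_a) - U(x_1;R_b) - U(x_2;R_a) + U(x_2;R_b) = \int_{x_2}^{x_1}\bigl[U_x(t;R_a) - U_x(t;R_b)\bigr] dt \ge 0.
\]

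By a bubble-sort argument, it suffices to show that swapping two adjacent positions in $\pi^\star$ weakly decreases the expected overall utility. Let this swap be at positions $k, k+1$ with $a := \pi^\star(k)$ and $b := \pi^\star(k+1)$, so $R_a \ge R_b$. The coordinate permutation $f: \R^n \to \R^n$ defined by $(f\by)_i = y_{\tau(i)}$, where $\tau$ is the transposition $(a\ b)$, is an isometry that maps $S_{\pi^\star}$ onto $S_\pi$, and hence the isotonic projection satisfies $\widehat\bR_\pi(\by) = f(\widehat\bR_{\pi^\star}(f\by))$. Combined with the exchangeability of $\bz$ afforded by Assumption~\ref{ass:noise}, which gives $f\bz \stackrel{d}{=} \bz$, this yields the distributional identity
\[
\E \sum_i U(\widehat R_i^\pi(\bR+\bz); R_i) = \E \sum_j U(\widehat R_j^{\pi^\star}(\bR^{(a,b)}+\bz); R_{\tau(j)}),
\]
where $\bR^{(a,b)}$ is $\bR$ with its $a,b$ entries swapped. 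Writing $\by_1 := \bR+\bz$ and $\by_2 := \bR^{(a,b)}+\bz = \by_1 + \delta(e_b - e_a)$ with $\delta := R_a - R_b \ge 0$, the PAV characterization of isotonic regression implies that $\widehat\bR^{\pi^\star}(\by_2)$ differs from $\widehat\bR^{\pi^\star}(\by_1)$ only in the PAV block(s) containing positions $k$ and $k+1$, and does so in a way that reduces the spread between them. After folding in the $R$-swap in the utility terms (which affects only $j \in \{a, b\}$), I would re-express the expected utility difference as a sum of terms of the form $U(x_1;R_a) - U(x_1;R_b) - U(x_2;R_a) + U(x_2;R_b)$ with $x_1 \ge x_2$, each nonnegative by the key inequality.

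The main obstacle is the last step: the perturbation $\delta(e_b - e_a)$ may cause PAV blocks to merge or shift, so one must verify the algebraic reduction across several sub-cases (same block, distinct blocks with ordering preserved, distinct blocks that merge at some intermediate perturbation). A clean way to handle all cases uniformly is to deform $\by_1$ continuously to $\by_2$ along the path $\by_t := \by_1 + t(e_b - e_a)$ for $t \in [0, \delta]$, exploit the piecewise-linearity of the projection $\widehat\bR^{\pi^\star}$ in its input, and check that each piecewise-linear segment contributes nonnegatively to the expected utility difference once the $R$-swap correction is applied through the key inequality above.
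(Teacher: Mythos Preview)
Your bubble-sort reduction is the fatal gap. You claim it suffices to show that a single adjacent swap away from $\pi^\star$ decreases expected utility, and then chain such swaps to reach any $\pi$. But each intermediate comparison is between two rankings \emph{neither of which is $\pi^\star$}, and for heterogeneous utilities satisfying Assumption~\ref{ass:convex2} the statement ``more consistent implies higher expected utility'' is simply false. Concretely, take $n=4$, $R_1>R_2>R_3>R_4$, and $U(x;R)=x$ for $R\in\{R_1,R_2,R_3\}$ while $U(x;R_4)=0$; this satisfies Assumption~\ref{ass:convex2}. With $\pi_1=(4,1,2,3)$ and $\pi_2=(4,2,1,3)$ (so $\pi_1$ is more consistent), and deterministic $\by=(2,1,0,1)$, one computes $\widehat R^{\pi_1}_4=3/2$ but $\widehat R^{\pi_2}_4=4/3$, hence the overall utility (which equals $\sum_i y_i - \widehat R_4$) is $2.5$ for $\pi_1$ and $8/3$ for $\pi_2$. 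Placing all the noise mass at the corresponding $\bz$ gives a valid instance of Assumption~\ref{ass:noise} where the less consistent ranking strictly beats the more consistent one. So the induction along a bubble-sort path cannot be carried out.

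The deeper reason your deformation step breaks down at intermediate permutations is that you need, on each linear PAV segment, the average of $U_x(c_k;R_{\pi_1(j)})$ over the block $B_k$ to dominate the average of $U_x(c_{k+1};R_{\pi_1(j')})$ over $B_{k+1}$. This is guaranteed when the $R_{\pi_1(j)}$ are sorted along with the isotonic order (i.e.\ when $\pi_1=\pi^\star$), but not otherwise: in the example above the block containing $k$ can carry the small value $R_4$ while the block containing $k+1$ carries $R_2$ or $R_3$. Your stated plan to ``re-express the difference as a sum of supermodular terms'' cannot repair this, because the deformation part of the difference involves same-$R$ increments $U(\cdot;R_j)-U(\cdot;R_j)$ at positions $j\notin\{a,b\}$, which are convexity terms, not supermodular ones.

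The paper avoids the bubble sort entirely by comparing $\pi^\star$ directly to an arbitrary $\pi$. It introduces the rearranged utility $\widetilde U(\bx)=\sum_i U(x_i;R_{\rho(i)})$, where $\rho$ aligns $\bR$ with the order of $\bx$. Two short lemmas do the work: (i) a rearrangement inequality (essentially your supermodular inequality iterated) gives $\sum_i U(\widehat R^{\pi}_i;R_i)\le \widetilde U(\widehat\bR^{\pi})$, with equality at $\pi=\pi^\star$; and (ii) $\widetilde U$ is Schur-convex (by the Schur--Ostrowski criterion, using both convexity in $x$ and monotonicity of $U_x$ in $R$). Then the majorization $(\bR+\bz)^+\succeq(\pi\circ\bR+\bz)^+$ from the proof of Theorem~\ref{thm:main} finishes the argument in one stroke. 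If you want to salvage your approach, you must compare $\pi^\star$ to $\pi$ directly rather than through intermediate permutations; once you do that, your deformation argument with the $R$'s held in their true sorted order does go through, and the final $R$-swap is handled by your supermodular inequality---but at that point you are essentially reconstructing the paper's two lemmas.
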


\begin{remark}

This result contrasts with the ranking mechanism in the aligned delegation literature~\citep{frankel2014aligned}, where an agent ranks items, and the mechanism assigns predetermined values based on the ranking. While such mechanisms ensure truthful ranking for any agent with increasing-difference utility \eqref{eq:ut_inc}---without requiring convexity---the Isotonic Mechanism, despite relying on stronger assumptions, has the advantage of producing estimates that are not fixed in advance but instead are adaptive to both the ranking and the raw scores. Moreover, the technical proofs underlying these mechanisms differ substantially.

\end{remark}


\section{When is truthfulness possible?}
\label{sec:honest-incl-inform}

In this section, we broaden our scope by considering a general class of mechanisms, including the Isotonic Mechanism as a special case, that can potentially elicit truthful information about the ground truth for estimation. The action space for the author is represented by a partition $\mathcal{S} := \{S_{\alpha} : \alpha \in \mathcal{A}\}$ of the Euclidean space $\mathbb{R}^n$. The index set $\mathcal{A}$ can be finite or infinite.\footnote{In particular, it is infinite in Proposition~\ref{thm:fixed_util} in Section~\ref{sec:discussion}.} We call $\mathcal{S}$ a \textit{knowledge partition} and any element $S \in \mathcal{S}$ a \textit{knowledge element}. Under Assumption~\ref{ass:author2} in the new context, the author knows which knowledge element $S \in \mathcal{S}$ contains the ground truth $\bR$.

The author is required to pick a knowledge element, say $S$, from the knowledge partition $\mathcal{S}$ and send the message ``the ground truth is in the set $S$'' to the conference. She is \textit{not} allowed to observe the raw scores $\by = \bR + \bz$ while making this decision. Meanwhile, the conference knows nothing about the ground truth $\bR$ but can observe $\by$. Given the message ``$\bR$ is in $S$'' from the author, the conference solves the following optimization program:
\begin{equation}\label{eq:isotone_shape}
\begin{aligned}
&\min_{\br} ~ \| \by - \br\|^2 \\
&\text{~s.t.} ~~ \br \in S,
\end{aligned}
\end{equation}
and employs its solution as an estimator of the ground truth $\bR$. This program is equivalent to projecting $\by$ onto the knowledge element $S$. We denote the solution to the optimization program \eqref{eq:isotone_shape} by $\widehat\bR^S$.

Formally, each knowledge element is a closed set with nonempty interior, and these elements form a partition of $\mathbb{R}^n$ so that $\cup_{\alpha \in \mathcal{A}} S_{\alpha} = \mathbb{R}^n$, with the interiors of any two distinct elements disjoint:\footnote{The boundaries have Lebesgue measure zero. Thus, although some knowledge elements may overlap at their boundaries, for a generic ground-truth vector $\bR$, there is a unique knowledge element containing it.}
\[
\mathring{S}_{\alpha} \cap \mathring{S}_{\alpha'} = \varnothing
\quad \text{for any}\;\alpha \neq \alpha'.
\]
In addition, we assume that the boundary between any two adjacent knowledge elements is a piecewise-smooth surface.\footnote{We do not consider two sets adjacent if they merely share a corner where three or more sets meet, following the same spirit as the Four-Color Theorem.} A surface is called smooth if, at any point on the surface, it can be locally described by $f(x_1,\dots,x_n) = 0$ for some continuously differentiable function $f$ whose gradient $\nabla f$ is nondegenerate.

Following Definition~\ref{def:truth}, we say that the knowledge partition $\mathcal{S}$ is truthful if
\[
\mathbb{E}\bigl[U(\widehat{\bR}^{S^\star})\bigr] \ge \mathbb{E}\bigl[U(\widehat{\bR}^{S})\bigr]
\]
for all $S^\star, S \in \mathcal{S}$ such that $S^\star$ contains the ground truth $\bR$. Our main result in this section gives a necessary condition for a knowledge partition to be truthful; its proof is deferred to Section~\ref{sec:does-there-exist}.

\begin{theorem}\label{thm:compare}
If the author tells the truth whenever Assumptions~\ref{ass:author2}, \ref{ass:noise}, and~\ref{ass:convex} are satisfied, then the boundary between any two adjacent knowledge elements is piecewise-flat, and each flat surface must be part of a pairwise-comparison hyperplane defined by
\[
x_i - x_j = 0
\quad
\text{for some}\; 1 \le i < j \le n.
\]
\end{theorem}

\begin{remark}
This theorem applies even to the trivial knowledge partition containing $\mathbb{R}^n$ as its single element, which is by definition truthful. In that case, the boundary is empty, so the necessary condition in Theorem~\ref{thm:compare} is automatically satisfied.
\end{remark}

\begin{remark}
This characterization of truthful knowledge partitions is obtained by taking an arbitrary convex utility function. For a specific utility, a truthful partition need not be strictly based on pairwise comparisons. See Proposition~\ref{thm:fixed_util} in Section~\ref{sec:discussion}.
\end{remark}

\begin{remark}
As with Theorem~\ref{thm:main}, Theorem~\ref{thm:compare} remains valid if the squared $\ell_2$ loss in \eqref{eq:isotone_shape} is replaced by the sum of Bregman divergences $\sum_{i=1}^n D_{\phi}(y_i, r_i)$.
\end{remark}

The condition in Theorem~\ref{thm:compare} is equivalent to the following: for any point $\bx$, one can identify the knowledge element containing $\bx$ by performing pairwise comparisons of certain coordinates of $\bx$. For example, consider
\begin{equation}\label{eq:largest_i1}
S_i = \{\bx \in \mathbb{R}^n : x_i \text{ is the largest among } x_1,\dots,x_n\}
\quad
\text{for}\; i=1,\dots,n.
\end{equation}
This collection of sets is clearly based on pairwise comparisons, since $\bx \in S_i$ if and only if $x_i \ge x_j$ for all $j\neq i$.

However, the converse of Theorem~\ref{thm:compare} does not hold. Indeed, we will show in Section~\ref{sec:exampl-truth-tell} that some pairwise-comparison-based partitions are truthful (including the one generated by \eqref{eq:largest_i1}), whereas others are not. Since all pairwise-comparison hyperplanes pass through the origin, Theorem~\ref{thm:compare} immediately implies the following:

\begin{corollary}\label{cor:cone}
If a knowledge partition $\mathcal{S}$ is truthful whenever Assumptions~\ref{ass:author2}, \ref{ass:noise}, and~\ref{ass:convex} are satisfied, then every knowledge element $S \in \mathcal{S}$ is a cone. That is, if $\bx \in S$, then $\lambda \bx \in S$ for all $\lambda \ge 0$.
\end{corollary}

Since at least two items are required for pairwise comparison, it seems necessary to have $n \ge 2$ for a truthful knowledge partition to exist. This intuition is confirmed by the following proposition, whose proof appears in the Appendix. Note that Theorem~\ref{thm:compare} assumes $n \ge 2$, so it does not directly imply this result.

\begin{proposition}\label{prop:n=1}
Under Assumptions~\ref{ass:author2}\footnote{For Proposition~\ref{prop:n=1}, the assumption $n \ge 2$ is waived.}, \ref{ass:noise}, and~\ref{ass:convex}, there is no truthful knowledge partition when $n = 1$, except for the trivial case $\mathcal{S} = \{\mathbb{R}\}$.
\end{proposition}

\subsection{Implications for the Isotonic Mechanism}

A prominent example of a pairwise-comparison-based knowledge partition is the isotonic partition formed by the $n!$ isotonic cones. By construction, this partition is generated by pairwise-comparison hyperplanes. Moreover, any region formed by such hyperplanes must be a union of several isotonic cones. Formally, we say that $\mathcal{S}_1$ is ``coarser'' than $\mathcal{S}_2$ if each element of $\mathcal{S}_1$ is a union of some elements of $\mathcal{S}_2$. Hence, we obtain the following corollary of Theorem~\ref{thm:compare}:

\begin{corollary}
If a knowledge partition $\mathcal{S}$ is truthful whenever Assumptions~\ref{ass:author2}, \ref{ass:noise}, and~\ref{ass:convex} are satisfied, then $\mathcal{S}$ is coarser than the isotonic partition. In particular, the cardinality of $\mathcal{S}$ is at most $n!$.
\end{corollary}

\begin{remark}
Since the union of cones is also a cone, this result implies Corollary~\ref{cor:cone}. However, an individual knowledge element may be nonconvex or even noncontiguous.
\end{remark}

Combining this result with Theorem~\ref{thm:main} shows that the isotonic partition is the most fine-grained among all truthful partitions. In particular, any isotonic cone in that partition cannot be further split by pairwise-comparison hyperplanes. This leads to the next corollary, stated as a theorem for emphasis:

\begin{theorem}\label{thm:best}
Under Assumptions~\ref{ass:author2}, \ref{ass:noise}, and~\ref{ass:convex}, the Isotonic Mechanism elicits the most fine-grained information about the ground truth among all truthful partition-based mechanisms of the form~\eqref{eq:isotone_shape}.
\end{theorem}

A key observation here is that there are reasons to prefer a more fine-grained partition to a coarser one, in order to improve estimation accuracy. The next proposition illustrates how the accuracy can depend on the coarseness of the partition. Its proof is deferred to the Appendix.

\begin{proposition}\label{prop:improve}
Suppose the noise vector $\bz$ in the observation $\by = \bR + \bz$ consists of i.i.d.\ normal random variables $\mathcal{N}(0, \sigma^2)$. Let $S_1$ and $S_2$ be two cones satisfying $S_2 \subset S_1$, and both contain the ground truth $\bR$. Then
\[
\limsup_{\sigma \to 0}\,
\frac{\mathbb{E}\|\widehat{\bR}^{S_2} - \bR\|^2}{\mathbb{E}\|\widehat{\bR}^{S_1} - \bR\|^2} \le 1,
\quad\;\;
\limsup_{\sigma \to \infty}\,
\frac{\mathbb{E}\|\widehat{\bR}^{S_2} - \bR\|^2}{\mathbb{E}\|\widehat{\bR}^{S_1} - \bR\|^2} \le 1.
\]
\end{proposition}

Intuitively, if the author truthfully reports a knowledge element with a strictly smaller feasible region, the conference may achieve a smaller squared $\ell_2$ risk. This aligns with the intuition that a valid but tighter constraint can often yield more accurate estimation.

However, Proposition~\ref{prop:improve} by itself does not imply that the Isotonic Mechanism achieves optimal estimation in every scenario solely by virtue of being the most fine-grained truthful partition. This is because Proposition~\ref{prop:improve} does not cover arbitrary fixed noise levels and only takes limit as $\sigma \to 0$ or $\sigma \to \infty$. Extending it to any noise level is a natural direction for future research, and we conjecture that its conclusions would hold more generally.


\section{Other truthful knowledge partitions}
\label{sec:exampl-truth-tell}

While we have identified perhaps the most important truthful knowledge partition, it is appealing to search for other truthful pairwise-comparison-based partitions. From a practical perspective, another motivation is that the author might not know the ground-truth ranking precisely and instead only has partial information about it.

To begin with, we present a counterexample to show that the converse of Theorem~\ref{thm:compare} is not true. Consider $\mathcal{S} = \{S_1, S_2\}$, where $S_1 = \{\bx: x_1 \ge x_2 \ge \cdots \ge x_n\}$, $S_2 = \R^n \setminus S_1$, and $\bR = (n\epsilon, (n-1)\epsilon, \ldots, 2\epsilon, \epsilon) \in S_1$ for some
small $\epsilon > 0$. Note that $S_1$ and $S_2$ are separated by pairwise-comparison hyperplanes. Taking utility $U(x) = x^2$ or $\max\{x, 0\}^2$ and letting the noise terms $z_1, \ldots,
z_n$ be i.i.d.~standard normal random variables, we show in the Appendix that the author would be better off reporting $S_2$ instead of $S_1$, the set that truly contains the
ground truth. Thus, this pairwise-comparison-based knowledge partition is not truthful.

In the remainder of this section, we introduce two useful knowledge partitions and show their truthfulness.

\paragraph{Local ranking.} Other than the isotonic partition, perhaps the simplest nontrivial truthful knowledge partitions are induced by local rankings: first partition $\{1, \ldots, n\}$ into several subsets of sizes, say, $n_1, n_2, \ldots, n_p$ such that $n_1 + \cdots + n_p = n$; then the author is asked to provide a ranking of the $n_q$ papers indexed by each subset for $q = 1, \ldots, p$, but does not make any between-subset comparisons.

This concept is exemplified in the following practical scenario:

\begin{example}
A manager oversees a team of $p$ employees. For $1 \leq q \leq p$, the $q^{\textnormal{th}}$ employee produces $n_q$ items and reports to the manager a ranking of the $n_q$ items based on the employee's perceived value. However, no pairwise comparisons are provided to the manager between items produced by different employees.
\end{example}

Formally, letting $\Si(n)$ be a shorthand for the isotonic partition in $n$ dimensions, we can write the resulting
knowledge partition as
\[
\Si(n_1) \times \Si(n_2) \times \cdots \times \Si(n_p),
\]
which has a cardinality of $n_1!n_2!\cdots n_p!$. Recognizing that the overall utility is additively separable, we readily conclude that this knowledge partition is truthful and the manager will report the ground-truth local ranking for each subset.

\paragraph{Coarse ranking.} Another example is induced by a coarse ranking: given $n_1, n_2, \ldots, n_p$ such that $n_1 + n_2 + \cdots + n_p = n$, the author partitions $\{1, 2, \ldots, n\}$ into $p$ ordered subsets $I_1, I_2, \ldots, I_p$ of sizes $n_1, n_2, \ldots, n_p$, respectively; but she does not reveal any comparisons within each subset at all. The conference wishes that the author would report the ground-truth coarse ranking $(I_1^\star, I_2^\star, \ldots, I^\star_p)$, which satisfies
\begin{equation}\label{eq:ranking_block}
\bR_{I_1^\star} \ge \bR_{I_2^\star} \ge \cdots \ge \bR_{I_p^\star}
\end{equation}
(here we write $\ba \ge \bb$ for two vectors of possibly different lengths if any component of $\ba$ is larger than or equal to any component of $\bb$).

For instance, taking $n_q = 1$ for $q = 1, \ldots, p-1$ and $n_p = n - p + 1$, the author is required to rank only the top $p-1$ items. Another example is to consider $p = 10$ and $n_1 = \cdots
= n_{10} = 0.1n$ (assume $n$ is a multiple of $10$), in which case the author shall identify which items are the top $10\%$, which are the next top $10\%$, and so on.

\begin{figure}[!htp]
  \centering
\begin{tikzpicture}[x=1cm, y=1cm, z=-0.6cm]
    \draw [->] (2.5,0,0) -- (3,0,0) node [right] {$x_1$};
    \draw [dashed] (0,0,0) -- (2.5,0,0);
    \draw [->] (0,2.5,0) -- (0,3,0) node [left] {$x_2$};
    \draw [dashed] (0,0,0) -- (0,2.5,0);
    \draw [->] (0,0,1.8) -- (0,0,2.5) node [left] {$x_3$};
    \draw [dashed] (0,0,0) -- (0,0,1.8);
\filldraw [opacity=.5,black] (2.5,0,0) -- (1.25,0,0.9) -- (2.5/3,2.5/3,0.6) -- (1.25,1.25,0) -- cycle;
\filldraw [opacity=.5,gray] (0,2.5,0) -- (1.25,1.25,0) -- (2.5/3,2.5/3,0.6) -- (0,1.25,0.9) -- cycle;
\filldraw [opacity=.5,lightgray] (0,0,1.8) -- (1.25,0,0.9) -- (2.5/3,2.5/3,0.6) -- (0,1.25,0.9) -- cycle;
\node[] at (2.6,0, 3) {$n_1 = 1, n_2 = 2$};
\end{tikzpicture}
\hspace{1cm}
\begin{tikzpicture}[x=1cm, y=1cm, z=-0.6cm]
    \draw [->] (2.5,0,0) -- (3,0,0) node [right] {$x_1$};
    \draw [dashed] (0,0,0) -- (2.5,0,0);
    \draw [->] (0,2.5,0) -- (0,3,0) node [left] {$x_2$};
    \draw [dashed] (0,0,0) -- (0,2.5,0);
    \draw [->] (0,0,1.8) -- (0,0,2.5) node [left] {$x_3$};
    \draw [dashed] (0,0,0) -- (0,0,1.8);
\filldraw [opacity=.5,black] (2.5,0,0) -- (0,2.5,0) -- (0.7,1,0.6) -- cycle;
\filldraw [opacity=.5,gray] (2.5,0,0) -- (0.7,1,0.6) -- (0,0,1.8) -- cycle;
\filldraw [opacity=.5,lightgray] (0.7,1,0.6) -- (0,2.5,0) -- (0,0,1.8) -- cycle;
\node[] at (2.6,0, 3) {$n_1 = 2, n_2 = 1$};
\end{tikzpicture}  
  \caption{Knowledge partitions induced by coarse rankings in $n = 3$ dimensions. The illustration shows a slice of the partitions restricted to the nonnegative orthant.}
  \label{fig:block}
\end{figure}

Writing $\bI := (I_1, \ldots, I_p)$, we denote by
\[
S_{\bI} := \{\bx: \bx_{I_1} \ge \bx_{I_2} \ge \cdots \ge \bx_{I_p}\}
\] 
the knowledge element indexed by $\bI$. There are in total $\frac{n!}{n_1! \cdots n_p!}$ knowledge elements, which together form a knowledge partition. As is evident, any two adjacent knowledge elements are separated by pairwise-comparison hyperplanes. Figure~\ref{fig:block} illustrates two such knowledge partitions in the case $n = 3$. The coarse ranking $\bI$ of the author's choosing may or may not be correct. Nevertheless, this is what the conference would incorporate into the estimation of the ground truth:
\begin{equation}\label{eq:isotone_incomp}
\begin{aligned}
&\min_{\br} ~  \| \by - \br\|^2 \\
&\text{~s.t.} ~~  \br \in S_{\bI},
\end{aligned}
\end{equation}
which is a convex optimization program since the knowledge element $S_{\bI}$ is convex. We call \eqref{eq:isotone_incomp} a coarse Isotonic Mechanism.

A practical application of this mechanism can be illustrated by the following scenario:
\begin{example}
A worker produces $n_q$ products in grade $q$, for $q = 1, \ldots, p$. Products across different grades exhibit substantially distinct values, yet the worker cannot differentiate between products within the same grade category. The products are subsequently shuffled so that only the worker knows the grade of each individual product.

\end{example}

The next result confirms that this new knowledge partition is truthful. Although it is pairwise-comparison-based, Theorem~\ref{thm:block} does not follow directly from Theorem~\ref{thm:compare}; its proof in Section~\ref{sec:proofs-section-ref} relies on different ideas.

\begin{theorem}\label{thm:block}
Under Assumptions~\ref{ass:author2}, \ref{ass:noise}, and \ref{ass:convex}, the expected overall utility is maximized if the author truthfully reports the coarse ranking that fulfills~\eqref{eq:ranking_block}.

\end{theorem}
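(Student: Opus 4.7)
The plan is to reduce to a single adjacent swap and then use an exchangeability-based symmetry argument, closing with a sensitivity analysis of the projection. Writing $\widehat\bR_{\bI, \by}$ for the projection of $\by$ onto $S_\bI$ (so that $\widehat\bR_\bI = \widehat\bR_{\bI, \by}$ in the default setup), suppose the owner reports $\bI \neq \bI^\star$. Call a pair $(k, l)$ an \emph{inversion} of $\bI$ if $R_k < R_l$ yet $k$ is placed in a strictly earlier block of $\bI$ than $l$. Any $\bI$ with a positive inversion count must have an \emph{adjacent} inversion $i \in I_q$, $j \in I_{q+1}$ with $R_i < R_j$: starting from any misranked pair whose blocks are at least two apart, one can always locate a misranked pair with a smaller block gap, eventually reaching gap one. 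Let $\bI'$ be the coarse ranking obtained by exchanging $i$ and $j$; a direct counting check shows this swap strictly reduces the inversion count, so by induction it suffices to prove $\E \util(\widehat\bR_\bI) \le \E \util(\widehat\bR_{\bI'})$ for a single such swap.

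To analyze the swap, let $\tau = (i, j)$ denote the transposition. A direct check yields $\tau \cdot S_\bI = S_{\bI'}$, since applying $\tau$ maps the defining inequalities of $S_\bI$ onto those of $S_{\bI'}$. Combined with the permutation-invariance of the Euclidean norm, this gives $\widehat\bR_{\bI, \by} = \tau \cdot \widehat\bR_{\bI', \tau \cdot \by}$. Since $\util$ is symmetric in its coordinates, $\util(\widehat\bR_{\bI, \by}) = \util(\widehat\bR_{\bI', \tau \cdot \by})$. Taking expectations over $\bz$ and using the exchangeability of the noise from Assumption~\ref{ass:noise} (so that $\tau \cdot \bz \stackrel{d}{=} \bz$) yields $\E \util(\widehat\bR_{\bI}) = \E \util(\widehat\bR_{\bI', \tau \cdot \bR + \bz})$. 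The swap inequality therefore reduces to
\[
\E \util(\widehat\bR_{\bI', \bR + \bz}) \ge \E \util(\widehat\bR_{\bI', \tau \cdot \bR + \bz}),
\]
i.e.\ under the fixed coarse ranking $\bI'$ the expected utility is larger when the true grades $\bR$ respect $\bI'$ at positions $i, j$ (recall that in $\bI'$ item $j$ sits in the higher block, so $R_j > R_i$ is consistent with $\bI'$ while $\tau \cdot \bR$ is not).

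To establish this sensitivity inequality, I would interpolate along the family $\bR_s$ that agrees with $\bR$ off $\{i, j\}$ and sets $R_{s, i} = \bar R - s$, $R_{s, j} = \bar R + s$ for $\bar R = (R_i + R_j)/2$; then $\bR_{\Delta/2} = \bR$ and $\bR_{-\Delta/2} = \tau \cdot \bR$ for $\Delta = R_j - R_i > 0$, so it suffices to show that $g(s) := \E \util(\widehat\bR_{\bI', \bR_s + \bz})$ is nondecreasing in $s$. Since the projection onto the polyhedral cone $S_{\bI'}$ is piecewise linear, $g$ is differentiable almost everywhere and $g'(s)$ equals an expectation of $\langle U'(\widehat\bR_{\bI', \by}),\, P_\by(\bm{e}_j - \bm{e}_i) \rangle$, where $P_\by$ denotes the orthogonal projection onto the tangent face of $S_{\bI'}$ at $\widehat\bR_{\bI', \by}$. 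Convexity of $U$ enters through the monotonicity of $U'$: any index pooled with $j$ at the active face sits in a weakly earlier block of $\bI'$ than any index pooled with $i$, so $U'$ evaluated at the projection is uniformly larger on the former set than on the latter, which yields the required sign for $g'(s)$.

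The main obstacle is completing this last sensitivity step. In contrast to Theorem~\ref{thm:main}, where the chain-cone structure of $S_\pi$ lets PAVA partition the indices into consecutive pools whose values are simple averages, the cone $S_{\bI'}$ permits arbitrary within-block variation, so the active face of the projection can couple $i$ and $j$ with unpredictable subsets of the other elements of their two blocks and, transitively through chained active constraints, with elements of farther blocks. The technical core is therefore a structural lemma about the generalized isotonic projection onto $S_{\bI'}$, asserting that at the projection any index pooled with $i$ belongs to a block weakly later than any index pooled with $j$; combined with the monotonicity of isotonic regression and the convexity of $U$, this delivers the sign of $g'(s)$. This structural lemma, rather than the PAVA-based reasoning underpinning Theorem~\ref{thm:main}, is where the distinctive ideas of the proof live.
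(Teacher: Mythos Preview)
Your reduction to a single adjacent-block swap and the exchangeability coupling are both correct and clean; this already diverges from the paper's route, which compares $\bI^\star$ and an arbitrary $\bI$ in one shot via majorization rather than swap-by-swap.

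Where you stall, however, is self-inflicted. The ``structural lemma'' about block membership of pooled indices is unnecessary, and the obstacle you describe dissolves once you look at projected \emph{values} instead of block labels. On any face of $S_{\bI'}$, every index in the pool of $j$ has the common value $\widehat R_j$, and every index in the pool of $i$ has the common value $\widehat R_i$. Your own derivative formula then reads
\[
\big\langle U'(\widehat\bR_{\bI',\by}),\, P_\by(\bm e_j-\bm e_i)\big\rangle
= \frac{1}{|P_j|}\sum_{k\in P_j}U'(\widehat R_k)-\frac{1}{|P_i|}\sum_{k\in P_i}U'(\widehat R_k)
= U'(\widehat R_j)-U'(\widehat R_i),
\]
and $\widehat R_j\ge \widehat R_i$ holds directly because $\widehat\bR_{\bI',\by}\in S_{\bI'}$ and $j\in I'_q$, $i\in I'_{q+1}$. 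Convexity of $U$ gives $U'(\widehat R_j)\ge U'(\widehat R_i)$, so $g'(s)\ge 0$ almost everywhere (the face is constant off a measure-zero set of $s$), and the pointwise monotonicity of $s\mapsto \util(\widehat\bR_{\bI',\bR_s+\bz})$ yields the swap inequality after taking expectations. No statement about which block a pooled index belongs to is needed; when $i$ and $j$ fall in the same pool the displayed expression is simply zero.

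By contrast, the paper never differentiates. It first shows (Lemma~\ref{lm:reduce_iso}) that projecting onto $S_{\bI}$ coincides with ordinary isotonic regression under the data-dependent ranking $\pi_{\bI,\by}$ that sorts within each block and concatenates---this is exactly the structural fact you were groping for. It then constructs a deterministic coupling $\rho$ (Lemma~\ref{lm:block_maj}) such that $\pi_{\bI^\star}\circ(\bR+\bz)\succeqn\pi_{\bI}\circ(\bR+\rho\circ\bz)$ pointwise, and closes with Lemma~\ref{lm:maj} and Hardy--Littlewood--P\'olya, mirroring the proof of Theorem~\ref{thm:main}. The paper's route is more unified with the rest of the argument and avoids any smoothness bookkeeping for $U$; your route, once the unnecessary lemma is dropped, is more elementary and sidesteps both auxiliary lemmas entirely.
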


\begin{remark}
Taking $n_1 = 1$ and $n_2 = n-1$, Theorem~\ref{thm:block} shows that the collection of knowledge elements taking the form \eqref{eq:largest_i1} is a truthful knowledge partition.
\end{remark}

One can construct further truthful partitions by integrating these two types of partitions. Instead of providing a complete ranking of items in each subset (as in the local ranking setting), one could simply offer a coarse ranking for each subset. It is straightforward to verify that the resulting knowledge partition is also truthful. An intriguing open question is whether there exist other truthful partitions that can be derived from these two prototypes.

Finally, we note that Proposition~\ref{thm:estimate} continues to hold for the coarse Isotonic Mechanism, implying that it achieves a smaller squared $\ell_2$ risk than that obtained by using raw scores. However, we conjecture that the coarse Isotonic Mechanism yields inferior estimation performance compared to the standard Isotonic Mechanism (see Proposition \ref{prop:improve} and the discussion that follows).


\section{Extensions}
\label{sec:extens-util-funct}

In this section, we show that truthfulness continues to be the optimal strategy for the author in more general settings.

\medskip
\noindent{\bf Robustness to inconsistencies.}
The author might not have complete knowledge of the ground-truth ranking in some scenarios, but is certain that some rankings are more consistent than others. More precisely, consider
two rankings $\pi_1$ and $\pi_2$ such that neither is the ground-truth ranking, but the former can be obtained by swapping two entries of the latter in an upward manner in the sense that
\[
R_{\pi_1(i)} = R_{\pi_2(j)} > R_{\pi_1(j)} = R_{\pi_2(i)}
\]
for some $1 \le i < j \le n$ and $\pi_1(k) = \pi_2(k)$ for all $k \ne i, j$. In general, $\pi_1$ is said to be more consistent than $\pi_2$ if $\pi_1$ can be sequentially
swapped from $\pi_2$ in an upward manner.

If the author must choose between $\pi_1$ and $\pi_2$, she would be better off reporting the more consistent ranking, thereby being truthful in a relative sense. This shows the
robustness of the Isotonic Mechanism against inconsistencies in rankings. A proof of this result is presented in the Appendix.

\begin{proposition}\label{thm:incomplete}
Suppose $\pi_1$ is more consistent than $\pi_2$ with respect to the ground truth $\bR$. Under Assumptions~\ref{ass:noise} and \ref{ass:convex}, reporting $\pi_1$ yields higher or equal
overall utility in expectation under the Isotonic Mechanism than reporting $\pi_2$.

\end{proposition}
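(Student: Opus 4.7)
The plan is to first reduce to the case of a single upward swap, then use an isometric symmetry of the problem to transform the comparison into one with a common ranking but two different mean vectors, and finally establish the resulting inequality via a pointwise monotonicity statement for isotonic regression.

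First, by the definition of ``more consistent'' as a chain of upward swaps, I can reduce by transitivity to the case where $\pi_1$ is obtained from $\pi_2$ by a single upward swap at positions $i<j$. Write $a = \pi_1(i) = \pi_2(j)$ and $b = \pi_1(j) = \pi_2(i)$, so $R_a > R_b$ and $\pi_1(k) = \pi_2(k)$ for $k \ne i, j$. I then introduce the orthogonal transposition $\sigma \colon \R^n \to \R^n$ of coordinates $a$ and $b$: a direct check gives $\sigma(S_{\pi_1}) = S_{\pi_2}$, so via the change of variables $\br = \sigma\br'$ in \eqref{eq:isotone} together with the isometry $\|\by - \sigma\br'\|^2 = \|\sigma\by - \br'\|^2$, one obtains $\widehat\bR_{\pi_2}(\by) = \sigma\bigl(\widehat\bR_{\pi_1}(\sigma\by)\bigr)$. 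Because $\util$ is coordinate-symmetric, this gives $\util(\widehat\bR_{\pi_2}(\by)) = \util(\widehat\bR_{\pi_1}(\sigma\by))$. Setting $\by = \bR + \bz$, using exchangeability of $\bz$ (Assumption~\ref{ass:noise}), and taking expectations yields
\[
\E\util\bigl(\widehat\bR_{\pi_2}(\bR + \bz)\bigr) = \E\util\bigl(\widehat\bR_{\pi_1}(\bR^\sigma + \bz)\bigr),
\]
where $\bR^\sigma := \sigma\bR$ is $\bR$ with entries $a$ and $b$ swapped. The task reduces to showing $\E\util(\widehat\bR_{\pi_1}(\bR + \bz)) \ge \E\util(\widehat\bR_{\pi_1}(\bR^\sigma + \bz))$.

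Next, I set $\bm v := \bm e_a - \bm e_b$, so that $\bR = \bR^\sigma + (R_a - R_b)\bm v$. I would establish, pointwise in $\bz$, that the function
\[
c \mapsto \util\bigl(\widehat\bR_{\pi_1}(\bR^\sigma + \bz + c\bm v)\bigr)
\]
is non-decreasing on $[0, R_a - R_b]$; evaluating at the endpoints and taking expectation then gives the required inequality. Using the pool adjacent violators representation, for a generic $c$ there are consecutive blocks $B_1, \ldots, B_K$ (ordered by $\pi_1$-rank) on which $\widehat\bR_{\pi_1}$ equals the corresponding block mean $\overline y_{B_s}$, with these means weakly decreasing in $s$. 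If ranks $i, j$ lie in the same block, all block means are invariant under perturbation along $\bm v$ and the derivative vanishes; otherwise, with $i \in B_s$ and $j \in B_t$ for $s < t$, a chain-rule computation gives derivative $U'(\overline y_{B_s}) - U'(\overline y_{B_t}) \ge 0$, since $\overline y_{B_s} \ge \overline y_{B_t}$ by the isotonic constraint and $U'$ is non-decreasing by convexity of $U$. The finitely many values of $c$ at which the block structure changes introduce no discontinuity because projection onto a convex set is continuous in its argument, so the utility is non-decreasing on the entire interval.

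The hard part is this monotonicity step: the symmetry reduction is a routine change of variables combined with exchangeability, but the pointwise inequality requires marrying the combinatorial PAV structure with the convexity of $U$. The key insight is that within a single PAV block the solution is a pure average, so $U'$ evaluated at the block means inherits the decreasing order of those means across the partition---this is exactly where convexity of $U$ enters. I expect the only subtlety to be checking that the finitely many block-transition values of $c$ do not spoil monotonicity, which follows from continuity of the convex projection.
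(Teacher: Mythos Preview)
Your proof is correct and takes a genuinely different route from the paper's. The paper's argument stays entirely within the majorization framework: it observes that the expected utility under ranking $\pi$ equals $\E\bigl[\sum_i U((\pi\circ\bR+\bz)^+_i)\bigr]$ (via the same exchangeability step you use), notes that ``more consistent'' implies $\pi_1\circ\bR \succeqn \pi_2\circ\bR$, invokes Lemma~\ref{lm:maj} to pass this natural-order majorization through the isotonic projection, and finishes with the Hardy--Littlewood--P\'olya inequality (Lemma~\ref{lm:jensen}). In contrast, after your symmetry reduction to a common ranking, you bypass majorization entirely and argue directly on the PAV block structure, computing the $c$-derivative of the utility as $U'(\overline y_{B_s})-U'(\overline y_{B_t})\ge 0$ when $i,j$ fall in distinct blocks. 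The paper's route is shorter here because it simply recycles the machinery already built for Theorem~\ref{thm:main}; your route is more self-contained and makes the role of convexity of $U$ very transparent, at the cost of re-deriving by hand what Lemma~\ref{lm:swap} encapsulates. One minor point: you write $U'$, but Assumption~\ref{ass:convex} does not require differentiability; the argument goes through unchanged using one-sided derivatives (or any subgradient selection), since for convex $U$ and strictly ordered block means one still has $U'_+(\overline y_{B_s})\ge U'_-(\overline y_{B_t})$.
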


Intuitively, one might expect that a more consistent ranking would also lead to better estimation performance. If this intuition were true, it would lead to an extension of Proposition~\ref{thm:estimate}. We leave this interesting question to future research.




\bigskip
\noindent{\bf Multiple knowledge partitions.} Given several truthful knowledge partitions, say, $\mathcal S_1, \ldots, \mathcal S_K$, one can offer the author the freedom of choosing any
knowledge element from these partitions. The resulting mechanism remains truthful. Formally, we have the following result.

\begin{proposition}\label{prop:mix}
Let $\mathcal S_1, \ldots, \mathcal S_K$ be truthful knowledge partitions. If the author is required to report one knowledge element from any of these knowledge partitions, then she must be
truthful in order to maximize her expected overall utility as much as possible.

\end{proposition}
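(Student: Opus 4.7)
The plan is to reduce the question about the combined collection $\bigcup_{k=1}^K \mathcal S_k$ back to the single-partition truthfulness hypothesis, one $\mathcal S_k$ at a time. The key observation is that any report the owner could make necessarily lies in some fixed $\mathcal S_k$; within that partition the truthful element is already known to be (weakly) optimal, so her best report across the whole collection is attained on a truthful element.

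Concretely, I would proceed as follows. For each $k=1,\ldots,K$, let $S_k^\star$ denote the unique (up to boundary, which has Lebesgue measure zero) knowledge element of $\mathcal S_k$ that contains the ground truth $\bR$. By Definition~\ref{def:truth} applied to $\mathcal S_k$, for every $S \in \mathcal S_k$ we have
\[
\E\, U(\widehat{\bR}_S) \;\le\; \E\, U(\widehat{\bR}_{S_k^\star}).
\]
Now the owner's strategy space under the combined mechanism is $\bigcup_{k=1}^K \mathcal S_k$, so any report she could make lies in some $\mathcal S_k$; the inequality above implies that switching her report to $S_k^\star$ (which is truthful, since it contains $\bR$) cannot decrease her expected utility. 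Taking the maximum over $k \in \{1,\ldots,K\}$,
\[
\sup_{S \in \bigcup_k \mathcal S_k} \E\, U(\widehat{\bR}_S) \;=\; \max_{1 \le k \le K} \E\, U(\widehat{\bR}_{S_k^\star}),
\]
and this maximum is achieved on one of the $S_k^\star$, each of which is a truthful report. Hence the supremum is attained only by truthful elements, establishing the claim.

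There is essentially no technical obstacle here; the proposition is a direct corollary of the per-partition truthfulness definition, and its content lies in its conceptual value (one can freely offer the owner multiple honest question formats without losing honesty) rather than in any delicate estimate. The only mild subtlety worth a sentence in the write-up is what happens on the boundaries where several $S_k^\star$'s tie: in that case every maximizer is still truthful, so the conclusion of the proposition is unaffected; if $U$ induces a strict version of truthfulness within each $\mathcal S_k$ (as noted in the remark after Theorem~\ref{thm:main}), then strictness carries over to the combined mechanism as well.
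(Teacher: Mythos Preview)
Your argument is correct and is exactly the one the paper uses: the paragraph immediately following the proposition is the paper's entire proof, and it says precisely that any untruthful report $S\in\mathcal S_k$ is weakly dominated by the truthful element of that same $\mathcal S_k$. One minor wording point: ``the supremum is attained only by truthful elements'' is slightly stronger than what the inequalities give without strictness (a non-truthful $S$ could tie), but your final paragraph already acknowledges this and it does not affect the proposition as stated.
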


That is, if the author chooses some knowledge element $S \in \mathcal S_k$ for $1 \le k \le K$ such that $S$ does not contain the ground truth $\bR$, she can
always improve her overall utility in expectation by reporting the knowledge element in $\mathcal S_k$ truly containing $\bR$. She can randomly pick a truthful knowledge element from
any of $\mathcal S_1, \ldots, \mathcal S_K$ when it is unclear which knowledge partition leads to the highest overall utility. As Proposition~\ref{thm:estimate} still holds in
the case of multiple knowledge partitions, honesty would always lead to better estimation accuracy than using the raw observation as long as all knowledge elements are convex.

This result allows for certain flexibility in truthfully eliciting information, especially when we are not sure which knowledge partition satisfies Assumption~\ref{ass:author2}. An immediate application is to take several knowledge partitions induced by coarse rankings~\eqref{eq:ranking_block} in the hope that, for at least one knowledge partition, the author can determine
the truthful knowledge element. For example, it seems plausible to take approximately equal sizes for the subsets: 
\[
n_1 \approx n_2 \approx \cdots \approx n_p \approx \frac{n}{p}.
\] 
However, the author might not have sufficient knowledge about her submissions to provide the true coarse ranking, thereby violating Assumption~\ref{ass:author2}. To circumvent this issue, we can let the author pick any coarse ranking such that the number of subsets $p$ is not smaller than, say, $\sqrt{n}$, and the largest subset size $\max_{1 \le i \le p} n_i$ is not greater than, say, $n/10$. 


\bigskip
\noindent{\bf Nonseparable utility functions.} The overall utility in Assumption~\ref{ass:convex} can be generalized to certain nonseparable functions. Explicitly, let the overall utility
function $U(\bx)$ be symmetric in its $n$ coordinates and satisfy
\begin{equation}\label{eq:shcurx}
(x_i - x_j) \left(\frac{\partial U(\bx)}{\partial x_i}  - \frac{\partial U(\bx)}{\partial x_j}  \right) \ge 0
\end{equation}
for all $\bx$. The following result shows that the author's optimal strategy continues to be truthful.

\begin{proposition}\label{prop:schur}
Under Assumptions~\ref{ass:author2} and \ref{ass:noise}, the Isotonic Mechanism is truthful if the author aims to maximize this type of overall utility in expectation.

\end{proposition}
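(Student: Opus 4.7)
The plan is to adapt the adjacent-swap induction used in the proof of Theorem~\ref{thm:main} to the Schur-convex setting. First I would observe that condition~\eqref{eq:shcurx} together with symmetry of $U$ is exactly the Schur--Ostrowski criterion, so $U$ is Schur-convex: $U(\bx) \ge U(\by)$ whenever $\bx$ majorizes $\by$. By symmetry, $U(\widehat\bR_\pi)$ depends only on the multiset of entries of $\widehat\bR_\pi$, which coincides with the multiset of pooled values obtained by PAVA when projecting $(y_{\pi(1)},\ldots,y_{\pi(n)})$ onto the non-increasing cone.

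Next, mirroring the strategy of Theorem~\ref{thm:main}, the plan is to reduce to a single adjacent transposition. It suffices to prove that if $\pi'$ is obtained from $\pi$ by swapping two adjacent positions $k, k+1$ in a way that corrects an inversion relative to $\pi^\star$---that is, $R_{\pi'(k)} \ge R_{\pi'(k+1)}$ while $R_{\pi(k)} < R_{\pi(k+1)}$---then $\E U(\widehat\bR_{\pi'}) \ge \E U(\widehat\bR_\pi)$. Iterating along a bubble-sort path from $\pi$ to $\pi^\star$ yields the general inequality. Fix such a swap and set $a = \pi'(k)$, $b = \pi'(k+1)$, so $R_a \ge R_b$.

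For the adjacent-swap inequality I would exploit exchangeability of $\bz$. Writing $\bz^{ab}$ for $\bz$ with its $a$-th and $b$-th coordinates interchanged, Assumption~\ref{ass:noise} gives $\bz^{ab} \stackrel{\mathrm{d}}{=} \bz$, so
\begin{equation*}
2\bigl(\E U(\widehat\bR_{\pi'}) - \E U(\widehat\bR_\pi)\bigr) = \E\bigl[U(\widehat\bR_{\pi'}(\bz)) + U(\widehat\bR_{\pi'}(\bz^{ab})) - U(\widehat\bR_\pi(\bz)) - U(\widehat\bR_\pi(\bz^{ab}))\bigr],
\end{equation*}
and it suffices to show that the integrand is non-negative for every realization of $\bz$. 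Under this coupling, the four PAVA inputs differ only in the pair of values occupying positions $k, k+1$: as an unordered pair this is $\{R_a+z_a,\,R_b+z_b\}$ under $\bz$ and $\{R_a+z_b,\,R_b+z_a\}$ under $\bz^{ab}$, and each pair is placed in reverse order for $\pi$ versus $\pi'$. The required pointwise inequality then reduces, via Schur-convexity, to a majorization comparison between the two pairs of PAVA outputs.

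The hard part will be establishing this aggregate majorization uniformly in $\bz$. A naive one-at-a-time comparison fails: for noise realizations that reverse the true order at $(k,k+1)$, the PAVA output under the incorrect ranking $\pi$ actually majorizes that under $\pi'$, and it is the pairing with $\bz^{ab}$ that restores the inequality. In the $n=2$ case one checks it by splitting on the sign of $|z_a-z_b| - (R_a-R_b)$: when $|z_a-z_b| < R_a-R_b$ both outputs under $\pi'$ stay unpooled and dominate both pooled outputs under $\pi$; when $|z_a-z_b| > R_a-R_b$ exactly one of the outputs under each ranking pools, and the remaining unpooled pairs compare by Schur-convexity because the pair with larger spread belongs to $\pi'$. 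For general $n$, the PAVA block containing positions $k, k+1$ may extend arbitrarily in either direction, so the delicate step is a block-level analogue of this case analysis: track how the pools straddling $k, k+1$ reconfigure under the two simultaneous swaps of ranking and noise, verify that the aggregate multiset of outputs under $\pi'$ majorizes the aggregate multiset under $\pi$, and then invoke Schur-convexity of $U$ to finish the induction.
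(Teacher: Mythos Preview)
Your recognition that \eqref{eq:shcurx} together with symmetry is the Schur--Ostrowski criterion is correct, and this is exactly how the paper obtains Proposition~\ref{prop:schur}: once the majorization $(\bR+\bz)^+ \succeq (\pi\circ\bR+\bz)^+$ is in hand, Schur-convexity of $U$ gives the pointwise inequality and the result follows (Remark~\ref{rm:schur}).

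The gap is in how you plan to reach that majorization. The proof of Theorem~\ref{thm:main} does \emph{not} do adjacent-swap induction on rankings combined with a $\{\bz,\bz^{ab}\}$ symmetrization; it uses a single global coupling---replace $\pi\circ\bz$ by $\bz$ via exchangeability---so that one compares $\bR+\bz$ with $\pi\circ\bR+\bz$ under the \emph{same} noise. Since $\bR$ is already sorted, $\bR+\bz \succeqn \pi\circ\bR+\bz$ holds for every realization, and Lemma~\ref{lm:maj} (isotonic projection preserves natural-order majorization) upgrades this to $(\bR+\bz)^+ \succeq (\pi\circ\bR+\bz)^+$. All block-level PAVA bookkeeping is absorbed into that lemma, proved once via upward transports (Lemmas~\ref{lm:seq} and~\ref{lm:swap}); none appears in the main argument.

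Your ``hard part''---the general-$n$ block-level case analysis---is left as a sketch, and carrying it out directly would essentially be reproving Lemma~\ref{lm:maj} in an unstructured form. Even within your adjacent-swap framework the symmetrization is unnecessary: pair $\widehat\bR_{\pi'}(\bz)$ with $\widehat\bR_\pi(\bz^{ab})$ rather than with $\widehat\bR_\pi(\bz)$. The two PAVA inputs then agree off positions $k,k+1$, and the partial-sum gap at position $k$ is exactly $R_a-R_b\ge 0$, so $\pi'\circ(\bR+\bz)\succeqn \pi\circ(\bR+\bz^{ab})$ deterministically and Lemma~\ref{lm:maj} finishes without any case analysis.
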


Theorem~\ref{thm:main} is implied by this proposition when the (univariate) utility function $U$ is differentiable. To see this point, note that a separable overall utility $U(\bx) =
U(x_1) + \cdots + U(x_n)$ in
Assumption~\ref{ass:convex} satisfies
\[
\begin{aligned}
(x_i - x_j) \left(\frac{\partial U(\bx)}{\partial x_i}  - \frac{\partial U(\bx)}{\partial x_j}  \right) = (x_i - x_j) \left( U'(x_i)  - U'(x_j)\right).
\end{aligned}
\]
Since $U'$ is a nondecreasing function, we get $(x_i - x_j) \left( U'(x_i)  - U'(x_j)\right) \ge 0$.

On the other hand, the applicability of Proposition~\ref{prop:schur} is broader than that of Theorem~\ref{thm:main} as there are symmetric functions that satisfy \eqref{eq:shcurx} but are
not separable. A simple example is $U(\bx) = \max\{x_1, x_2, \ldots, x_n\}$, and an author with this overall utility is only concerned with the highest-scored paper.\footnote{This overall
  utility obeys \eqref{eq:shcurx}, but partial derivatives shall be replaced by partial subdifferentials. This is also the case for the next example.} More generally, letting $x_{(1)}
\ge x_{(2)} \ge \cdots \ge x_{(k)}$ be the $k \le n$ largest entries of $\bx$, this proposition also applies to
\[
U(\bx) = h(x_{(1)}) + h(x_{(2)}) + \cdots + h(x_{(k)})
\]
for any nondecreasing convex function $h$.

Proposition~\ref{prop:schur} follows from the proof of Theorem~\ref{thm:main} in conjunction with Remark~\ref{rm:schur} in Section~\ref{sec:prop:score}.

\bigskip
\noindent{\bf Multiple authors.} Papers in machine learning are frequently authored by multiple researchers. In this context, we introduce a variant of the Isotonic Mechanism that addresses collaborative authorship scenarios.

To formalize the problem, consider a setting with $n$ papers and $M$ authors. Let $\ind_{ij} = 1$ if the $i^{\textnormal{th}}$ paper is co-authored by the $j^{\textnormal{th}}$ author for $1 \le i \le n$ and $1 \le j \le M$, and $\ind_{ij} = 0$ otherwise. This yields a bipartite graph between authors and submissions that can be encoded by an $n \times M$ binary matrix. Using this authorship matrix as input, Algorithm~\ref{alg:dpsgd1} partitions the $n$ papers into disjoint groups such that papers within each group share the same author, while different groups correspond to distinct authors. The Isotonic Mechanism is then applied to each group independently. Due to the non-overlapping nature of these groups, the Isotonic Mechanism maintains truthfulness across all groups in the partition.

\begin{algorithm}[!htb]
    \caption{Partition of the bipartite graph for the Isotonic Mechanism}\label{alg:dpsgd1}
    \begin{algorithmic}[0]
        \State{\bf Input:} Set of papers $\I$, set of authors $\O$, and the authorship matrix $\{\ind_{ij}\}_{1 \le i \le n, 1 \le j \le M}$
        \While{$\I$ is \textit{not} empty}
        \For{Author $o \in \O$}
        \Statex \hspace{1.1cm} Let $w_o$ be the number of papers in $\I$ shared by $o$        
        \EndFor
        \State Find the author $o^\star$ such that $o^\star = \argmax_{o \in \O} w_o$ 
                \Comment{\textit{Randomly choose $o^\star$ when ties occur}}
        \State Apply the Isotonic Mechanism to $o^\star$ and the papers in $\I$ shared by $o^\star$
        \State Update the sets: $O \leftarrow O\setminus \{o^\star\}$ and $\I \leftarrow \I\setminus \{\text{papers shared by } o^\star\}$
        \EndWhile
    \end{algorithmic}
\end{algorithm}

In this algorithm, one reasonable criterion is to prioritize partitions containing larger groups, which effectively gives precedence to authors with more paper submissions. However, this approach could result in the formation of singleton groups. A practical limitation of this algorithm is that authors with many submissions might possess less detailed familiarity with each paper, potentially yielding less reliable rankings compared to those from authors with fewer submissions. To mitigate this issue, one potential solution is to designate one or two authors per paper who are best qualified to assess it. This designation could be accomplished through anonymous peer assessments of co-authors based on their familiarity with the paper's content. While numerous alternative approaches exist to address this challenge, we defer their comprehensive investigation to future research.

\section{Proofs}
\label{sec:proofs}

Here, we prove Theorems \ref{thm:main}, \ref{prop:score_dependent}, \ref{thm:compare}, and \ref{thm:block}. Proofs of other technical results in the paper are relegated to the Appendix.

\subsection{Proof of Theorem~\ref{thm:main}}
\label{sec:xxxxxxxxxxxx}


The following definition and lemma will be used in the proof of this theorem.

\begin{definition}
We say that a vector $\ba \in \R^n$ \textit{majorizes} $\bb \in \R^n$ in the \textit{natural order}, denoted $\ba \succeqn b$, if 
\begin{equation}\nonumber
\sum_{i=1}^k a_{i} \ge \sum_{i=1}^k b_{i}
\end{equation}
for all $1 \le k \le n$, with equality when $k = n$. 
\end{definition}

A departure of this definition from weak majorization or majorization is that majorization in the natural order is not invariant under permutations.


In the lemma below, we write $\ba^+$ as a shorthand for the projection of $\ba$ onto the standard isotonic cone $\{\bx: x_1 \ge x_2 \ge \cdots \ge x_n\}$. A proof of this lemma is given in Section~\ref{sec:proof-lemmaxxxx}.

\begin{lemma}\label{lm:maj}
If $\ba \succeqn \bb$, then we have $\ba^+ \succeqn \bb^+$.

\end{lemma}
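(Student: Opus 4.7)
The plan is to reduce the problem to a statement about partial sums and then invoke the classical least-concave-majorant representation of isotonic projection. Let $A_k := \sum_{i=1}^k a_i$ and $B_k := \sum_{i=1}^k b_i$ with $A_0 = B_0 = 0$. The hypothesis $\ba \succeqn \bb$ is exactly the pointwise inequality $A_k \ge B_k$ for all $0 \le k \le n$, together with $A_n = B_n$. The conclusion $\ba^+ \succeqn \bb^+$ is the analogous statement for the cumulative sums $A^+_k := \sum_{i=1}^k a^+_i$ and $B^+_k := \sum_{i=1}^k b^+_i$.

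The key tool is the standard fact that projection onto the decreasing cone $\{x_1 \ge \cdots \ge x_n\}$ is encoded by the least concave majorant (LCM) of the cumulative sum process: $A^+_k$ equals the value at $k$ of the LCM of the points $(0,A_0),(1,A_1),\ldots,(n,A_n)$, and likewise for $B^+_k$. This is equivalent to the max-min/pool-adjacent-violators representation and may be cited directly; it follows because concavity of $A^+$ on $\{0,1,\ldots,n\}$ is equivalent to $a^+_1 \ge a^+_2 \ge \cdots \ge a^+_n$, and minimizing $\|\ba - \ba^+\|^2$ under this constraint translates into finding the smallest concave majorant of $(k,A_k)$ in an $\ell_2$ sense, which coincides with the geometric LCM.

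Given this representation, I want to invoke the monotonicity of the LCM operator: if $A_k \ge B_k$ pointwise, then the LCM of $A$ dominates the LCM of $B$. This is immediate from the extremal definition: $A^+$ is concave and $A^+ \ge A \ge B$, so $A^+$ is a concave majorant of $B$; by the minimality that defines $B^+$, we get $A^+ \ge B^+$ pointwise on $\{0,1,\ldots,n\}$. At the endpoints the LCM pins down exactly to the original values (a quick one-line check: replacing the endpoint value of a concave majorant by the function value itself preserves both concavity and the majorant property, by monotonicity of slopes), so $A^+_0 = A_0 = 0 = B_0 = B^+_0$ and $A^+_n = A_n = B_n = B^+_n$.

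Putting these together, $A^+_k \ge B^+_k$ for all $k$ with equality at $k=n$, which is precisely $\ba^+ \succeqn \bb^+$. The main obstacle is cosmetic rather than mathematical: stating the LCM characterization cleanly so the monotonicity step is transparent. Everything else is a short direct argument, with no need to manipulate the pool-adjacent-violators recursion explicitly.
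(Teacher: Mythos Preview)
Your argument is correct and takes a genuinely different route from the paper. The paper proceeds by first decomposing the relation $\ba \succeqn \bb$ into a chain of elementary ``upward transports'' (each moving mass from a later coordinate to an earlier one), and then verifying $\ba^+ \succeqn \bb^+$ for a single such transport via a two-case analysis that tracks how the pool-adjacent-violators blocks of $\ba^+$ respond to a local perturbation; this in turn rests on two further auxiliary lemmas, one giving componentwise monotonicity of the projection under coordinate shifts and one characterizing when a block collapses to a constant. Your approach bypasses all of this by passing to cumulative sums and invoking the least-concave-majorant characterization of isotonic projection onto the decreasing cone, after which the desired inequality is a one-line consequence of the monotonicity of the LCM operator under pointwise domination (since $A^+$ is concave and $A^+ \ge A \ge B$, it is a concave majorant of $B$, hence $A^+ \ge B^+$). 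Your proof is shorter and more conceptual; the trade-off is that it imports the LCM/isotonic-regression equivalence as a known fact, whereas the paper's argument is self-contained from PAVA-level properties and, as a byproduct, establishes componentwise monotonicity results that are reused elsewhere.
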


\begin{remark}
Both $\ba^+$ and $\bb^+$ have already been ordered from the largest to the smallest. Hence, majorization in the natural order ($\succeqn$) is identical to majorization ($\succeq$) in the
case of $\ba^+$ and $\bb^+$.
\end{remark}

Now we are in a position to prove Theorem~\ref{thm:main}. Write $\pi \circ \bm a := (a_{\pi(1)}, a_{\pi(2)}, \ldots, a_{\pi(n)})$ for $\bm a$ under permutation $\pi$.
\proof{Proof of Theorem~\ref{thm:main}}
Assume without loss of generality that $R_1 \ge R_2 \ge \cdots \ge R_n$. In this case, the ground-truth ranking $\pi^\star$ is the identity, that is, $\pi^\star(i) = i$ for all $i$, and the optimization program \eqref{eq:isotone} for the Isotonic Mechanism is
\begin{equation}\nonumber
\begin{aligned}
&\min~ \| \by - \br\|^2 \\
&\text{~s.t.} ~~ r_1 \ge r_2 \ge \cdots \ge r_n.
\end{aligned}
\end{equation}
Its solution is the projection of $\by$ onto the isotonic cone $\{\bx: x_1 \ge x_2 \ge \cdots \ge x_n\}$, that is, $\by^+ = (\bR + \bz)^+$. 

Consider the optimization program with a different ranking $\pi$,
\begin{equation}\label{eq:isot_pi}
\begin{aligned}
&\min ~  \| \by - \br\|^2 \\
&\text{~s.t.}  ~~  r_{\pi(1)} \ge r_{\pi(2)} \ge \cdots \ge r_{\pi(n)}.
\end{aligned}
\end{equation}
This is equivalent to
\begin{equation}\nonumber
\begin{aligned}
&\min ~   \| \pi \circ \by - \tilde\br\|^2 \\
&\text{~s.t.} ~~ \tilde r_1 \ge \tilde r_{2} \ge \cdots \ge \tilde r_n,
\end{aligned}
\end{equation}
with the relationship $\tilde\br = \pi\circ \br$. From this equivalence it is easy to see that the solution to \eqref{eq:isot_pi} can be written as $\pi^{-1} \circ (\pi \circ \by)^+ =
\pi^{-1} \circ (\pi \circ \bR + \pi \circ \bz)^+$. It suffices to show that the overall utility obeys
\[
\E U\left( (\bR + \bz)^+\right) \ge \E U \left(\pi^{-1} \circ (\pi \circ \bR + \pi \circ \bz)^+\right) = \E U \left( (\pi \circ \bR + \pi \circ \bz)^+\right),
\]
where the equality follows because the overall utility is invariant under permutations. Under Assumption~\ref{ass:noise}, the entries $z_1, \ldots, z_n$ of $\bz$ are exchangeable random
variables. This gives
\[
\E U \left( (\pi \circ \bR + \pi \circ \bz)^+\right) = \E U \left( (\pi \circ \bR + \bz)^+\right).
\]
Thus, the proof is complete if we prove
\begin{equation}\label{eq:thm2_key}
\E U\left( (\bR + \bz)^+\right) \ge \E U \left( (\pi \circ \bR + \bz)^+\right).
\end{equation}

To prove \eqref{eq:thm2_key}, we utilize the following crucial fact
\[
\bR + \bz \succeqn \pi \circ\bR + \bz.
\]
This holds because $R_1, \ldots, R_n$ are already in descending order. Therefore, it merely follows from Lemma~\ref{lm:maj} that
\[
(\bR + \bz)^+ \succeqn (\pi \circ\bR + \bz)^+
\]
or, equivalently,
\[
(\bR + \bz)^+ \succeq (\pi \circ\bR + \bz)^+.
\]
By Lemma~\ref{lm:jensen}, we get
\[
\sum_{i=1}^n U \left( (\bR + \bz)^+_i\right) \ge \sum_{i=1}^n U \left( (\pi \circ \bR + \bz)^+_i\right)
\]
for any convex function $U$, which implies \eqref{eq:thm2_key}. This completes the proof.

\endproof

\subsubsection{Proof of Lemma~\ref{lm:maj}}
\label{sec:proof-lemmaxxxx}

\begin{definition}\label{def:transport}
We say that $\bzz^1$ is an upward transport of $\bzz^2$ if there exists $1 \le i < j \le n$ such that $\zz^1_k = \zz^2_k$ for all $k \ne i, j$, $\zz^1_i + \zz^1_j = \zz^2_i + \zz^2_j$, and $\zz^1_i \ge \zz^2_i$.
\end{definition}

Equivalently, $\bzz^1$ is an upward transport of $\bzz^2$ if $\bzz^1$ can be obtained by moving some ``mass'' from an entry of $\bzz^2$ to an earlier entry. As is evident, we have $\bzz^1 \succeqn \bzz^2$ if $\bzz^1$ is an upward transport of $\bzz^2$.

The following lemmas state two useful properties of this relationship between two vectors.

\begin{lemma}\label{lm:seq}
Let $\ba \succeqn \bb$. Then there exists a positive integer $L$ and $\bzz^1, \ldots, \bzz^L$ such that $\bzz^1 = \ba, \bzz^L = \bb$, and $\bzz^l$ is an upward transport of $\bzz^{l+1}$ for
$1 \le l  \le L-1$.

\end{lemma}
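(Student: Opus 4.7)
The plan is to build the sequence by an explicit greedy "mass transfer" algorithm: starting from $\ba$ and targeting $\bb$, at each step I will locate a pair of coordinates $i < j$ where the current vector lies strictly above $\bb$ at position $i$ and strictly below at position $j$, and then move a positive amount of mass from $i$ down to $j$, producing the next term in the sequence. This single step is precisely an upward transport in the sense of Definition~\ref{def:transport} when read backwards, and the goal is to argue that (i) the natural-order majorization over $\bb$ is preserved throughout, and (ii) the algorithm terminates in at most $n$ steps.

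More concretely, suppose we have constructed $\bzz^l$ with $\bzz^l \succeqn \bb$ and $\bzz^l \ne \bb$. Let $i$ be the smallest index with $\zz^l_i \ne b_i$. From $\sum_{k=1}^{i-1}\zz^l_k = \sum_{k=1}^{i-1}b_k$ and $\sum_{k=1}^{i}\zz^l_k \ge \sum_{k=1}^{i}b_k$, I deduce $\zz^l_i > b_i$. Next, let $j$ be the smallest index strictly greater than $i$ with $\zz^l_j < b_j$; such a $j$ must exist, for otherwise $\zz^l_k \ge b_k$ for every $k > i$ while $\zz^l_i > b_i$ and $\zz^l_k = b_k$ for $k < i$, contradicting $\sum_{k=1}^n \zz^l_k = \sum_{k=1}^n b_k$. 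Set $\delta := \min(\zz^l_i - b_i,\; b_j - \zz^l_j) > 0$ and define $\bzz^{l+1}$ by $\zz^{l+1}_i = \zz^l_i - \delta$, $\zz^{l+1}_j = \zz^l_j + \delta$, and $\zz^{l+1}_k = \zz^l_k$ otherwise. By construction $\bzz^l$ is an upward transport of $\bzz^{l+1}$.

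The delicate step, and the one I expect to be the main obstacle, is verifying $\bzz^{l+1} \succeqn \bb$. The partial sums at indices $k < i$ and $k \ge j$ are unchanged from those of $\bzz^l$ (the $-\delta$ and $+\delta$ cancel for $k \ge j$, and nothing moves for $k < i$); equality at $k = n$ is preserved. The only nontrivial range is $i \le k < j$, where the partial sum drops by exactly $\delta$. Here I will use the minimality of $j$: for every index $m$ with $i < m < j$ one has $\zz^l_m \ge b_m$, so
\[
\sum_{k=1}^{\,\ell}\zz^l_k - \sum_{k=1}^{\,\ell}b_k \;=\; \sum_{k=i}^{\ell}(\zz^l_k - b_k) \;\ge\; \zz^l_i - b_i \;\ge\; \delta
\]
for every $i \le \ell < j$, which gives exactly the slack needed to absorb the $-\delta$ and keep the partial sums of $\bzz^{l+1}$ above those of $\bb$.

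Finally, termination: by the choice of $\delta$, either $\zz^{l+1}_i = b_i$ or $\zz^{l+1}_j = b_j$ (or both), while no previously matching coordinate is disturbed since $\zz^{l+1}_k = \zz^l_k$ for $k \ne i, j$. Hence the number of indices at which the current vector disagrees with $\bb$ strictly decreases at each iteration, so the process halts in finitely many steps (at most $n$) with $\bzz^L = \bb$. This yields the required sequence $\bzz^1 = \ba,\ \bzz^2,\ \ldots,\ \bzz^L = \bb$ and completes the proof.
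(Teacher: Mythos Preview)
Your argument is correct. The key verification---that $\bzz^{l+1}\succeqn\bb$ on the range $i\le k<j$---is handled properly: minimality of $j$ gives $\zz^l_m\ge b_m$ for $i<m<j$, so the partial-sum gap at any such $k$ is at least $\zz^l_i-b_i\ge\delta$, absorbing the drop. Termination is also fine, since neither $i$ nor $j$ was a matching coordinate before the step and nothing else moves.

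Your route, however, differs from the paper's. The paper argues by induction on the dimension $n$: given $\ba\succeqn\bb$ in $\R^{n+1}$, it performs a single transport between positions $1$ and $2$, sending $\ba$ to $\bzz^2=(b_1,\,a_1+a_2-b_1,\,a_3,\ldots,a_{n+1})$, thereby freezing the first coordinate at $b_1$. It then checks that the tail $(a_1+a_2-b_1,a_3,\ldots,a_{n+1})$ still majorizes $(b_2,\ldots,b_{n+1})$ in the natural order and invokes the induction hypothesis on $\R^n$. Your approach is instead a direct greedy construction in the original space: locate the first disagreement, pair it with the first deficit, move the minimum of the two discrepancies, and iterate. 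The paper's proof is shorter because the induction hides the invariant maintenance (only one inequality needs checking at each dimension reduction), whereas your argument makes the preservation of $\succeqn$ explicit at every step. On the other hand, your construction is self-contained, avoids the induction scaffolding, and naturally produces only nontrivial transports (the paper's version may include degenerate steps when $a_1=b_1$). Both yield a sequence of length at most $n+1$.
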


Next, recall that $\ba^+$ denotes the projection of $\ba$ onto the standard isotonic cone $\{\bx: x_1 \ge x_2 \ge \cdots \ge x_n\}$.
\begin{lemma}\label{lm:swap}
Let $\ba$ be an upward transport of $\bb$. Then, we have $\ba^+ \succeqn \bb^+$.

\end{lemma}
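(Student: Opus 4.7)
The plan is to pass through cumulative sums and use the standard characterization of isotonic regression via the least concave majorant. Since the standard isotonic cone here is $\{x_1 \ge x_2 \ge \cdots \ge x_n\}$, the projection $\ba^+$ admits the representation
\[
\sum_{l=1}^k a^+_l \;=\; \overline{A}_k, \qquad k = 0, 1, \ldots, n,
\]
where $A_k := \sum_{l=1}^k a_l$ (with $A_0 = 0$) and $\overline{A}$ denotes the least concave majorant of the broken-line graph through the points $(k, A_k)$, $k = 0, 1, \ldots, n$. This is a classical consequence of the pool adjacent violators algorithm: the PAVA output is piecewise constant on the contact intervals of $\overline{A}$, with values equal to the corresponding slopes. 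I would invoke this characterization and also its analogue $\sum_{l=1}^k b^+_l = \overline{B}_k$ for $\bb$, which reduces $\ba^+ \succeqn \bb^+$ to showing $\overline{A}_k \ge \overline{B}_k$ for every $k$, with equality at $k=n$.

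The next step is the elementary observation that if $\ba$ is an upward transport of $\bb$ with transported indices $i<j$ and mass $\varepsilon = a_i - b_i \ge 0$, then for the cumulative sums
\[
A_k - B_k \;=\; \begin{cases} 0, & k < i, \\ \varepsilon, & i \le k < j, \\ 0, & k \ge j. \end{cases}
\]
In particular, $A_k \ge B_k$ for all $k$, and $A_n = B_n$.

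From here the conclusion is automatic: the least concave majorant is order-preserving in the input. Indeed, $\overline{A}$ is a concave function dominating $A$, hence dominating $B$, so $\overline{A} \ge \overline{B}$ since $\overline{B}$ is by definition the smallest concave function dominating $B$. Moreover, $\overline{A}_0 = A_0 = 0 = B_0 = \overline{B}_0$ and $\overline{A}_n = A_n = B_n = \overline{B}_n$ (concave majorants pinned at the endpoints). Translating back via the cumulative sum formula gives $\sum_{l=1}^k a^+_l \ge \sum_{l=1}^k b^+_l$ for every $k$ with equality at $k = n$, i.e., $\ba^+ \succeqn \bb^+$.

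The only mildly subtle point is the cumulative-sum/least-concave-majorant representation of isotonic regression in the decreasing convention used here; once that is recorded, the remainder of the argument is a one-line monotonicity observation. I expect this to be the part most worth writing out carefully, but no deep obstacle is anticipated.
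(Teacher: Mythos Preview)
Your proof is correct and takes a genuinely different route from the paper's. The paper proves Lemma~\ref{lm:swap} by a perturbation argument: it first reduces to sufficiently small $\delta$ via continuity of the projection, then performs a case analysis on whether the indices $i$ and $j$ fall in the same constant block of $\ba^+$, invoking two auxiliary lemmas (Lemma~\ref{lm:mono} on componentwise monotonicity of the projection and Lemma~\ref{lm:single_val} characterizing when the projection is constant). Your argument instead invokes the classical least-concave-majorant representation of decreasing isotonic regression and reduces the claim to the one-line monotonicity of the LCM operator under pointwise domination of the cumulative sums. This is shorter, avoids the case split and both auxiliary lemmas, and in fact proves the stronger Lemma~\ref{lm:maj} directly (you only use $A_k \ge B_k$ for all $k$ with $A_n = B_n$, which is exactly $\ba \succeqn \bb$), thereby also rendering Lemma~\ref{lm:seq} unnecessary. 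The trade-off is that the paper's proof is more self-contained, deriving what it needs about isotonic regression from the min--max formula and PAVA structure rather than citing the LCM characterization; if you go your route, it would be worth stating the LCM identity $\sum_{l\le k} a_l^+ = \overline{A}_k$ precisely (with the endpoint equalities $\overline{A}_0 = A_0$, $\overline{A}_n = A_n$) and giving a reference, since that is where all the work is hidden.
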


Lemma~\ref{lm:maj} readily follows from Lemmas~\ref{lm:seq} and \ref{lm:swap}. To see this point, for any $\ba, \bb$ satisfying $\ba \succeqn \bb$, note that by Lemma~\ref{lm:seq} we can find $\bzz^1 = \ba, \bzz^2, \ldots, \bzz^{L-1}, \bzz^L= \bb$ such that $\bzz^l$ is an upward transport of $\bzz^{l+1}$ for $l = 1, \ldots, L-1$. Next, Lemma~\ref{lm:swap} asserts that 
\[
(\bzz^l)^+ \succeqn (\bzz^{l+1})^+
\]
for all $l = 1, \ldots, L-1$. Owing to the transitivity of majorization in the natural order, we conclude that $\ba \succeqn \bb$, thereby proving Lemma~\ref{lm:maj}.

Below, we prove Lemmas~\ref{lm:seq} and \ref{lm:swap}.
\proof{Proof of Lemma~\ref{lm:seq}}
We prove by induction. The base case $n =1$ is clearly true. Suppose this lemma is true for $n$. 

Now we prove the lemma for the case $n + 1$. Let $\bzz^1 = \ba = (a_1, a_2, \ldots, a_{n+1})$ and $\bzz^2 := (b_1, a_1 + a_2 - b_1, a_3, a_4, \ldots, a_{n+1})$. As is evident, $\bzz^1$ is an upward transport of $\bzz^2$. 

Now we consider operations on the components except for the first one. Let $\ba' := (a_1 + a_2 - b_1, a_3, a_4, \ldots, a_{n+1})$ and $\bb' := (b_2, \ldots, b_{n+1})$ be derived by removing the first component of $\bzz^2$ and $\bb$, respectively. These two vectors obey $\ba' \succeqn \bb'$. To see this, note that $a'_1 = a_1 + a_2 - b_1 \ge b_1 + b_2 - b_1 = b_2 = b'_1$, and 
\[
a'_1 + \cdots + a'_k = (a_1 + a_2 - b_1) + a_3 + \cdots + a_{k+1} = \sum_{i=1}^{k+1} a_i - b_1 \ge \sum_{i=1}^{k+1} b_i - b_1 = \sum_{i=2}^{k+1} b_i = b'_1 + \cdots + b'_{k}
\]
for $2 \le k \le n-1 $. Moreover, it also holds that
\[
a'_1 + \cdots + a'_n = (a_1 + a_2 - b_1) + a_3 + \cdots + a_{n+1} = \sum_{i=1}^{n+1} a_i - b_1 = \sum_{i=1}^{n+1} b_i - b_1 = b'_1 + \cdots + b'_{n}.
\]

Thus, by induction, there must exist $\bzz^{'1}, \ldots, \bzz^{'L}$ such that $\bzz^{'1} = \ba', \bzz^{'L} = \bb'$, and $\bzz^{'l}$ is an upward transport of $\bzz^{'l+1}$ for $l = 1, \ldots, L-1$. We finish the proof for $n+1$ by recognizing that $\bzz^1 \equiv \ba, (b_1, \bzz^{'1}), (b_1, \bzz^{'2}), \ldots, (b_1, \bzz^{'L}) \equiv \bb$ satisfy the conclusion of this lemma.

\endproof


To prove Lemma \ref{lm:swap}, we need the following two lemmas. We relegate the proofs of these two lemmas to the Appendix. Denote by $\bm{\mathrm{e}}_i$ the $i^{\textnormal{th}}$ canonical-basis vector in $\R^n$.

\begin{lemma}\label{lm:mono}
For any $\delta > 0$ and $i = 1, \ldots, n$, we have $(\ba + \delta \bm{\mathrm{e}}_i)^+ \ge \ba^+$ in the component-wise sense.
\end{lemma}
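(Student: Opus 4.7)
\textbf{Proof plan for Lemma~\ref{lm:mono}.} The plan is to reduce the componentwise monotonicity claim to the monotonicity of a classical min--max representation of the isotonic projection. Specifically, I will invoke the well-known formula
\[
(\ba^+)_k \;=\; \max_{1 \le s \le k} \, \min_{k \le t \le n} \, \frac{1}{t-s+1} \sum_{j=s}^{t} a_j \;=\; \min_{k \le t \le n} \, \max_{1 \le s \le k} \, \frac{1}{t-s+1} \sum_{j=s}^{t} a_j,
\]
which expresses each entry of $\ba^+$ as a min--max of block averages of $\ba$. This representation is a standard fact about isotonic regression (it is equivalent to what the pool adjacent violators algorithm computes), so I would simply cite it.

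Once this formula is in place, the result is immediate. For any index $i$, each block average $A_{s,t}(\ba) := \frac{1}{t-s+1} \sum_{j=s}^{t} a_j$ is a nondecreasing function of $a_i$: it is either unchanged (when $i \notin [s,t]$) or increased by exactly $\delta/(t-s+1) > 0$ (when $i \in [s,t]$) after the perturbation $\ba \mapsto \ba + \delta \bm{\mathrm{e}}_i$. Since both $\min$ and $\max$ preserve the partial order of pointwise comparison, we conclude that for every $k$,
\[
(\ba + \delta \bm{\mathrm{e}}_i)^+_k \;=\; \max_{s \le k} \min_{t \ge k} A_{s,t}(\ba + \delta \bm{\mathrm{e}}_i) \;\ge\; \max_{s \le k} \min_{t \ge k} A_{s,t}(\ba) \;=\; \ba^+_k,
\]
which is precisely the componentwise inequality $(\ba + \delta \bm{\mathrm{e}}_i)^+ \ge \ba^+$.

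The only real ``step'' is invoking the min--max representation; no genuine obstacle arises, since monotonicity of the projection in each coordinate is a direct consequence once that formula is granted. If one prefers to avoid quoting the min--max formula, an alternative route is to trace through the pool adjacent violators algorithm and show by induction on the number of pooling steps that increasing a single coordinate of the input can only (weakly) increase each coordinate of the output block averages; however, this would be messier bookkeeping and I would only resort to it if the author wanted a self-contained derivation.
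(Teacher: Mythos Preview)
Your proposal is correct and essentially identical to the paper's own proof: the paper likewise cites the min--max formula for isotonic regression and then observes that increasing any coordinate of $\ba$ weakly increases every block average, hence every $\ba^+_k$ by monotonicity of $\min$ and $\max$. The only cosmetic difference is the order of the max and min (the paper writes $\max_{v\ge k}\min_{u\le k}$ while you write $\max_{s\le k}\min_{t\ge k}$), but both forms are standard and either suffices for the argument.
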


\begin{remark}\label{rm:64last}
Likewise, the proof of Lemma~\ref{lm:mono} reveals that $(\ba - \delta \bm{\mathrm{e}}_i)^+ \le \ba^+$. As an aside, recognizing the mean-preserving constraint of isotonic regression, we have $\bm{1}^\top (\ba + \delta \bm{\mathrm{e}}_i)^+ = \bm{1}^\top \ba^+ + \delta$, where $\bm{1} \in \R^n$ denotes the ones vector.
\end{remark}

\begin{lemma}\label{lm:single_val}
Denote by $\bar a$ the sample mean of $\ba$. Then $\ba^+$ has constant entries---that is, $a^+_1 = \cdots = a^+_n$---if and only if
\[
\frac{a_1 + \cdots + a_k}{k} \le \bar a
\]
for all $k =1, \ldots, n$.

\end{lemma}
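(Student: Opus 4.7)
The plan is to combine the mean-preserving identity $\sum_{i=1}^n a_i^+ = n \bar a$ (already noted in Remark~\ref{rm:64last}) with the classical Max--Min formula for the projection onto the isotonic cone. Specifically, for the decreasing cone $\{x_1 \ge \cdots \ge x_n\}$ used throughout the paper,
\[
a_i^+ = \max_{l \ge i}\, \min_{k \le i}\, \frac{a_k + a_{k+1} + \cdots + a_l}{l - k + 1},
\]
which at $i = 1$ collapses to $a_1^+ = \max_{l \ge 1} \frac{a_1 + \cdots + a_l}{l}$, since the inner minimum is taken over the single index $k = 1$.

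With these two ingredients, the ``only if'' direction is immediate: if $\ba^+$ is constant, the mean-preserving identity pins each entry to $\bar a$, so $a_1^+ = \bar a$, and the displayed formula rearranges to $\frac{a_1 + \cdots + a_k}{k} \le \bar a$ for every $k$.

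For the converse, I would assume the hypothesis and apply the formula at $i = 1$ to conclude $a_1^+ \le \bar a$. Because $\ba^+$ lies in the decreasing cone, $a_i^+ \le a_1^+ \le \bar a$ for every index $i$; summing these $n$ inequalities and invoking $\sum_i a_i^+ = n \bar a$ forces equality throughout, so $\ba^+ = \bar a \bm{1}$.

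There is no serious obstacle; the only care point is to invoke the Max--Min formula in the orientation matching the decreasing cone $S_{\pi^\star}$ rather than the more commonly quoted nondecreasing version. As an alternative of comparable length, one can verify directly that $\bar a \bm{1}$ satisfies the KKT conditions for the isotonic projection with Lagrange multipliers $\lambda_k = k \bar a - (a_1 + \cdots + a_k)$: the hypothesis is exactly dual feasibility $\lambda_k \ge 0$, while stationarity (via telescoping of $a_i - \bar a = \lambda_{i-1} - \lambda_i$), primal feasibility, and complementary slackness are automatic.
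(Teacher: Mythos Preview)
Your argument is correct. Both directions go through as you outline: the Max--Min representation you quote is exactly the one the paper itself invokes in the proof of Lemma~\ref{lm:mono}, and at $i=1$ it indeed collapses to $a_1^+=\max_{l\ge 1}\frac{a_1+\cdots+a_l}{l}$. Combined with the mean-preserving identity $\sum_i a_i^+=n\bar a$, this pins down both implications cleanly. (A tiny sharpening: in the ``if'' direction you actually get $a_1^+=\bar a$ outright, since the term $l=n$ attains $\bar a$; but the weaker $a_1^+\le\bar a$ already suffices for your squeeze.)

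The paper takes a different, more elementary route that avoids the Max--Min formula for this particular lemma. For the ``only if'' direction it argues by contradiction: if some prefix mean exceeds $\bar a$, then the two-block vector $(\bar a_k,\ldots,\bar a_k,\bar a_{-k},\ldots,\bar a_{-k})$ is feasible and achieves strictly smaller squared error than the constant vector $\bar a\bm 1$, contradicting optimality of $\ba^+$. For the ``if'' direction it again argues by contradiction, using the pool-adjacent-violators block structure: if $\ba^+$ had $L\ge 2$ blocks, then the first block value equals the average of $\ba$ over that block, and a weighted-average calculation forces this first block value to exceed $\bar a$, violating the hypothesis. Your approach is shorter and more direct, leaning on a representation already established in the paper; the paper's approach is more self-contained in that it derives everything from the optimality definition and the block-average property, without appealing to the Max--Min formula at this point. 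Your KKT alternative is also valid and is essentially dual to the paper's contradiction arguments.
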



\proof{Proof of Lemma~\ref{lm:swap}}

Let $1 \le i < j \le n$ be the indices such that $a_i + a_j = b_i + b_j$ and $a_i \ge b_i$. Write $\delta := a_i - b_i \ge 0$. Then, $\bb = \ba - \delta \bm{\mathrm{e}}_i + \delta \bm{\mathrm{e}}_j$. If $\delta = 0$, then $\ba^+ = \bb^+$ because $\ba = \bb$, in which case the lemma holds trivially. In the remainder of the proof, we focus on the nontrivial case $\delta > 0$. 

The lemma amounts to saying that $\ba^+ \succeqn (\ba - \delta \bm{\mathrm{e}}_i + \delta \bm{\mathrm{e}}_j)^+$ for all $\delta > 0$. Owing to the continuity of the projection, it is sufficient to prove the following statement: there exists $\delta_0 > 0$ (depending on $\ba$) such that $\ba^+ \succeqn (\ba - \delta \bm{\mathrm{e}}_i + \delta \bm{\mathrm{e}}_j)^+$.

Let $I$ be the set of indices where the entries of $\ba^+$ has the same value as $i$: $I = \{k: a^+_k = a^+_i\}$. Likewise, define $J = \{k: a^+_k = a^+_j\}$.
There are exactly two cases, namely, $I = J$ and $I \cap J = \emptyset$, which we discuss in the sequel.

\paragraph{Case 1:} $I = J$. For convenience, write $I = \{i_1, i_1+1, \ldots, i_2\}$. By Lemma~\ref{lm:single_val}, we have
\[
\frac{a_{i_1} + a_{i_1+1} + \ldots + a_{i_1+l-1}}{l} \le \bar a_I := \frac{a_{i_1} + x_{i_1+1} + \ldots + x_{i_2}}{i_2-i_1+1}
\]
for $l = 1, \ldots, i_2-i_1+1$.

Now we consider $\bb = \ba - \delta \bm{\mathrm{e}}_i + \delta \bm{\mathrm{e}}_j$ restricted to $I$. Assume that $\delta$ is sufficiently small so that the constant pieces of $\bb^+$ before and after $I$ are the same as those of $\ba^+$. Since $i_1 \le i < j \le i_2$, we have
\[
b_{i_1} + b_{{i_1}+1} + \ldots + b_{i_2} = a_{i_1} + a_{{i_1}+1} + \ldots + a_{i_2}.
\]
On the other hand, we have
\[
b_{i_1} + b_{{i_1}+1} + \ldots + b_{{i_1}+l-1} \le a_{i_1} + a_{{i_1}+1} + \ldots + a_{{i_1}+l-1}
\]
since the index $i$ comes earlier than $j$. Taken together, these observations give
\[
\frac{b_{i_1} + b_{{i_1}+1} + \ldots + b_{{i_1}+l-1}}{l} \le \frac{b_{i_1} + b_{{i_1}+1} + \ldots + b_{i_2}}{i_2-{i_1}+1}
\]
 for all $l = 1, \ldots, i_2-i_1+1$. It follows from Lemma~\ref{lm:single_val} that the projection $\bb^+ = (\ba - \delta \bm{\mathrm{e}}_i + \delta \bm{\mathrm{e}}_j)^+$ remains constant on the set $I$ and this value is the same as $\ba^+$ on $I$ since $b_{i_1} + b_{{i_1}+1} + \ldots + b_{i_2} = a_{i_1} + a_{{i_1}+1} + \ldots + a_{i_2}$. That is, we have $\bb^+ = \ba^+$ in this case.

\paragraph{Case 2:} $I \cap J = \emptyset$. As earlier, let $\delta$ be sufficiently small. Write $I = \{i_1, i_1+1, \ldots, i_2\}$ and $J = \{j_1, j_1+1, \ldots, j_2\}$, where $i_2 < j_1$. Since the isotonic constraint is inactive between the $(i_1-1)^{\textnormal{th}}$ and $i_1^{\textnormal{th}}$ components, the projection $\ba^+_I$ restricted to $I$ is the same as projecting $\ba_I$ onto the $|I| = (i_2-i_1+1)$-dimensional standard isotonic cone. As $\delta$ is sufficiently small, the projection $(\ba - \delta \bm{\mathrm{e}}_i + \delta \bm{\mathrm{e}}_j)^+_I$ restricted to $I$ is also the same as projecting $(\ba - \delta \bm{\mathrm{e}}_i + \delta \bm{\mathrm{e}}_j)_I$ onto the $|I| = (i_2-i_1+1)$-dimensional standard isotonic cone. 

However, since $i \in I$ but $j \notin J$, we see that $(\ba - \delta \bm{\mathrm{e}}_i + \delta \bm{\mathrm{e}}_j)_I = \ba_I - \delta \bm{\mathrm{e}}_i$, where $\bm{\mathrm{e}}_i$ now should be regarded as the $(i-i_1+1)^{\textnormal{th}}$ canonical-basis vector in the reduced $(i_2-i_1+1)$-dimensional space. Then, by Lemma~\ref{lm:mono} and Remark~\ref{rm:64last}, we see that
\[
\bb^+_I = (\ba_I - \delta \bm{\mathrm{e}}_i)^+ \le \ba_I^+
\]
in the component-wise sense, which, together with the fact that $b^+_l = a^+_l$ for $l \in  \{1, \ldots, i_1-1 \} \cup \{i_2+1, \ldots, j_1-1\} \cup \{j_2+1, \ldots, n\}$, gives
\[
b^+_1 + \cdots + b^+_l \le a^+_1 + \cdots + a^+_l
\]
for all $l = 1, \ldots, j_1-1$. Moreover, 
\begin{equation}\label{eq:I_diff}
\begin{aligned}
b^+_1 + \cdots + b^+_l - (a^+_1 + \cdots + a^+_l) &=  b^+_{i_1} + \cdots + b^+_{i_2} - (a^+_{i_1} + \cdots + a^+_{i_2}) \\
&=  b_{i_1} + \cdots + b_{i_2} - (a_{i_1} + \cdots + a_{i_2}) \\
& = - \delta
\end{aligned}
\end{equation}
when $i_2+1 \le l \le j_1-1$.

Now we turn to the case $j_1 \le l \le j_2$. As earlier, for sufficiently small $\delta$, the projection $(\ba - \delta \bm{\mathrm{e}}_i + \delta \bm{\mathrm{e}}_j)^+_J$ restricted to $J$ is the same as projecting $(\ba - \delta \bm{\mathrm{e}}_i + \delta \bm{\mathrm{e}}_j)_J$ onto the $|J| = (j_2-j_1+1)$-dimensional standard isotonic cone. Then, since $\bb_J = (\ba - \delta \bm{\mathrm{e}}_i + \delta \bm{\mathrm{e}}_j)_J = \ba_J + \delta \bm{\mathrm{e}}_j$, it follows from Lemma~\ref{lm:mono} that 
\begin{equation}\label{eq:J_compo}
\bb^+_J \ge \ba^+_J,
\end{equation}
and meanwhile, we have
\begin{equation}\label{eq:J_diff}
\begin{aligned}
b^+_{j_1} + \cdots + b^+_{j_2} - (a^+_{j_1} + \cdots + a^+_{j_2}) =  b_{j_1} + \cdots + b_{j_2} - (a_{j_1} + \cdots + a_{j_2}) =  \delta.
\end{aligned}
\end{equation}
Thus, for any $j_1 \le l \le j_2$, \eqref{eq:J_compo} and \eqref{eq:J_diff} give
\begin{equation}\nonumber
\begin{aligned}
b^+_{j_1} + \cdots + b^+_l - (a^+_{j_1} + \cdots + a^+_l) \le b^+_{j_1} + \cdots + b^+_{j_2} - (a^+_{j_1} + \cdots + a^+_{j_2}) = \delta.
\end{aligned}
\end{equation}
Therefore, we get
\[
\begin{aligned}
&b^+_1 + \cdots + b^+_l - (a^+_1 + \cdots + a^+_l) \\
&= b^+_1 + \cdots + b^+_{j_1-1} - (a^+_1 + \cdots + a^+_{j_1-1}) + b^+_{j_1} + \cdots + b^+_l - (a^+_{j_1} + \cdots + a^+_l)\\
&= -\delta + b^+_{j_1} + \cdots + b^+_l - (a^+_{j_1} + \cdots + a^+_l)\\
& \le -\delta + \delta\\
& = 0,
\end{aligned}
\]
where the second equality follows from \eqref{eq:I_diff}.

Taken together, the results above show that
\[
b^+_1 + \cdots + b^+_l \le a^+_1 + \cdots + a^+_l
\]
for $1 \le l \le j_2$, with equality when $l \le i_1-1$ or $l =j_2$. In addition, this inequality remains true---in fact, reduced to equality---when $l > j_2$. This completes the proof.

\endproof

\subsection{Proof of Theorem~\ref{prop:score_dependent}}
\label{sec:prop:score}

Define
\begin{equation}\label{eq:tildeu}
\widetilde U(\bx) = \sum_{i=1}^n U(x_i; R_{\rho(i)}),
\end{equation}
where $\rho$ is a permutation such that $\bx$ and $\bR_{\rho}$ have the same descending order. For example, if $x_l$ is the largest element of $\bx$, so is $R_{\rho(l)}$ the
largest element of $\bR_{\rho}$. By construction, $\widetilde U$ is symmetric. Moreover, this function satisfies the following two
lemmas. The proofs are given later in this subsection.

\begin{lemma}\label{lm:u_perm}
Under Assumption~\ref{ass:convex2}, the overall utility satisfies
\[
\sum_{i=1}^n U(x_i; R_i) \le \widetilde U(\bx).
\]

\end{lemma}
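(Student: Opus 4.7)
The plan is to prove Lemma~\ref{lm:u_perm} as a rearrangement inequality driven by the supermodular structure built into Assumption~\ref{ass:convex2}. The first step I would establish is a \emph{pairwise swap} inequality: whenever $x_i \ge x_j$ and $R_i \le R_j$, one has
\[
U(x_i; R_i) + U(x_j; R_j) \le U(x_i; R_j) + U(x_j; R_i).
\]
To see this, set $g(x) := U(x; R_j) - U(x; R_i)$. The derivative hypothesis in Assumption~\ref{ass:convex2} says exactly that $\partial_x U(x; R_j) \ge \partial_x U(x; R_i)$ whenever $R_j \ge R_i$, so $g'(x) \ge 0$ wherever both derivatives exist. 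Since $U(\cdot; R_i)$ and $U(\cdot; R_j)$ are convex in $x$, each is differentiable outside a countable set and $g$ is locally absolutely continuous; hence $g$ is nondecreasing. Therefore $g(x_i) \ge g(x_j)$, which rearranges to the claimed swap inequality.

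The second step is a bubble-sort argument. Both $\sum_{i=1}^n U(x_i; R_i)$ and $\widetilde U(\bx) = \sum_{i=1}^n U(x_i; R_{\rho(i)})$ pair each element of $\{R_1, \ldots, R_n\}$ with exactly one position, the target pairing $\rho$ being the one that aligns the descending order of $\bx$ with the descending order of the assigned grades. Starting from the identity pairing and inspecting any two positions $i, j$ whose currently assigned grades form an inversion with respect to $\bx$—that is, $x_i > x_j$ but the current grade at $i$ is strictly smaller than the one at $j$—swap the two assignments. By the pairwise swap inequality the sum weakly increases, while the number of inversions relative to $\bx$ strictly decreases. The procedure therefore terminates in finitely many steps at an assignment that has no inversions with $\bx$, which (up to reshuffling within ties) is precisely $\rho$. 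Chaining the swap inequalities yields $\sum_{i=1}^n U(x_i; R_i) \le \widetilde U(\bx)$.

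Two minor loose ends remain. If $\bx$ or $\bR$ has ties, the permutation $\rho$ is not unique, but any two valid choices differ only by permutations within equal-value blocks, so $\widetilde U(\bx)$ is well-defined and the bubble sort terminates at some valid $\rho$. The only regularity worry is that Assumption~\ref{ass:convex2} is written with the derivative $\mathrm{d}U(x;R)/\mathrm{d}x$ even though only convexity in $x$ is guaranteed; I expect this to be the mildest obstacle, since convex functions on $\R$ admit one-sided derivatives everywhere and are differentiable outside a countable set, so the pointwise monotonicity in $R$ transfers to the statement that $g$ is nondecreasing, which is all the swap step uses.
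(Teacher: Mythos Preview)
Your proposal is correct and follows essentially the same approach as the paper: both establish the pairwise swap inequality via the monotonicity of $g(x) = U(x; R_j) - U(x; R_i)$ (the paper subtracts a constant so that $g(x_j)=0$, but the content is identical), and both reduce the general statement to the $n=2$ case by chaining adjacent-transposition swaps from the identity pairing to $\rho$. Your added remarks on ties and on one-sided derivatives for convex functions are sound refinements of points the paper leaves implicit.
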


\begin{lemma}\label{lm:scc}
Under Assumption~\ref{ass:convex2}, the function $\widetilde U$ is Schur-convex in the sense that $\widetilde U(\ba) \ge \widetilde U(\bb)$ is implied by $\ba \succeq \bb$.

\end{lemma}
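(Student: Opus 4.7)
The plan is to invoke the Hardy--Littlewood--P\'olya characterization of majorization: $\ba \succeq \bb$ if and only if $\bb$ can be obtained from $\ba$ by a finite chain of Robin Hood transfers of the form $(a_i,a_j) \mapsto (a_i - t, a_j + t)$ with $a_i > a_j$ and $0 \le t \le (a_i - a_j)/2$. It therefore suffices to show that $\widetilde U$ is non-increasing under any single such transfer; chaining then yields $\widetilde U(\ba) \ge \widetilde U(\bb)$.

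Before carrying out the single-transfer computation, I would record two structural properties of $\widetilde U$. First, it is symmetric: relabelling the coordinates of $\bx$ by a permutation $\sigma$ replaces the aligning permutation $\rho$ by $\rho \circ \sigma$, so the sum in \eqref{eq:tildeu} is unchanged after reindexing. Second, it is well-defined and continuous across ties: when some coordinates of $\bx$ coincide, $\rho$ is non-unique, but any two valid choices of $\rho$ differ only by swapping indices at tied coordinates, and such a swap leaves the corresponding summands invariant.

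For the core step, parametrize the transfer by $\ba^{(t)} := \ba + t(\bm{\mathrm{e}}_j - \bm{\mathrm{e}}_i)$ for $t \in [0, (a_i-a_j)/2]$, so that $a_i^{(t)} \ge a_j^{(t)}$ throughout. Partition this interval at the finitely many times at which $a_i^{(t)}$ or $a_j^{(t)}$ crosses the value of a third coordinate. On each resulting open subinterval the alignment $\rho$ is constant, and the only two summands of $\widetilde U(\ba^{(t)})$ that vary in $t$ are $U(a_i - t; R_{\rho(i)}) + U(a_j + t; R_{\rho(j)})$; by construction of $\rho$ together with $a_i^{(t)} \ge a_j^{(t)}$, we have $R_{\rho(i)} \ge R_{\rho(j)}$. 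Differentiating,
\[
\frac{\d}{\d t}\bigl[U(a_i - t; R_{\rho(i)}) + U(a_j + t; R_{\rho(j)})\bigr] = -\frac{\d U(a_i - t; R_{\rho(i)})}{\d x} + \frac{\d U(a_j + t; R_{\rho(j)})}{\d x}.
\]
Convexity of $U(\cdot; R_{\rho(i)})$ together with $a_i - t \ge a_j + t$ gives $\d U(a_i - t; R_{\rho(i)})/\d x \ge \d U(a_j + t; R_{\rho(i)})/\d x$, and Assumption~\ref{ass:convex2} with $R_{\rho(i)} \ge R_{\rho(j)}$ then yields $\d U(a_j + t; R_{\rho(i)})/\d x \ge \d U(a_j + t; R_{\rho(j)})/\d x$. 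Chaining these two bounds makes the displayed derivative nonpositive.

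The main obstacle I anticipate is the accounting across subintervals, since the pointwise differentiation argument is only valid where $\rho$ is locally constant; however, the continuity of $\widetilde U$ observed above glues these pieces into a continuous, piecewise-smooth map $t \mapsto \widetilde U(\ba^{(t)})$ that is non-increasing on every piece, and hence non-increasing overall. Iterating over the chain of Robin Hood transfers delivered by Hardy--Littlewood--P\'olya then completes the proof.
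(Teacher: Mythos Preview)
Your proof is correct. Both your argument and the paper's hinge on the same two-step inequality chain: convexity of $U(\cdot;R)$ in its first argument gives $\partial_x U(a_i-t;R_{\rho(i)}) \ge \partial_x U(a_j+t;R_{\rho(i)})$, and Assumption~\ref{ass:convex2} together with $R_{\rho(i)} \ge R_{\rho(j)}$ then gives $\partial_x U(a_j+t;R_{\rho(i)}) \ge \partial_x U(a_j+t;R_{\rho(j)})$.

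Where you differ is in the packaging. The paper invokes the Schur--Ostrowski criterion as a black box: it suffices to verify that $\widetilde U$ is symmetric and that $(x_i-x_j)\bigl(\partial_{x_i}\widetilde U - \partial_{x_j}\widetilde U\bigr) \ge 0$, which follows immediately from the two inequalities above (with one-sided derivatives handling ties). You instead use the equivalent Hardy--Littlewood--P\'olya characterization via Robin Hood transfers and prove monotonicity along each transfer path directly, which forces you to partition the path at crossings and glue by continuity. Your route is more self-contained but carries more bookkeeping; the paper's is shorter once the Schur--Ostrowski lemma is granted. Substantively they are the same argument viewed through two standard equivalent characterizations of Schur-convexity.
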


Now we are ready to prove Theorem~\ref{prop:score_dependent}.
\proof{Proof of Theorem~\ref{prop:score_dependent}}

Assume without loss of generality that $R_1 \ge R_2 \ge \cdots \ge R_n$. As earlier, denote by $\widehat \bR^{\pi}$ the ranking-calibrated scores produced by the Isotonic Mechanism provided ranking $\pi$. For simplicity, write
$\widehat \bR = \widehat \bR^{\pi^\star}$ when the ranking is the ground-truth ranking $\pi^\star$. Note that $\pi^\star(i) = i$ for all $i$, and $\widehat\bR$ and $\bR$ have the same descending
order. As such, we get
\[
\widetilde U(\widehat\bR) = \sum_{i=1}^n U( \widehat R_i; R_i).
\]

To prove
\[
\E \left[ \sum_{i=1}^n U( \widehat R_i; R_i) \right] \ge \E \left[ \sum_{i=1}^n U( \widehat R^{\pi}_{i}; R_i) \right],
\]
we start by observing that
\[
\widetilde U(\widehat\bR^{\pi}) \ge \sum_{i=1}^n U( \widehat R^{\pi}_{i}; R_i)
\]
is an immediate consequence of Lemma~\ref{lm:u_perm}. Hence, it is sufficient to prove
\begin{equation}\label{eq:thm3_ine}
\E \widetilde U(\widehat\bR) \ge \E \widetilde U(\widehat\bR^{\pi}).
\end{equation}

As in the proof of Theorem~\ref{thm:main}, it follows from Lemma~\ref{lm:maj} that 
\[
\widehat\bR = (\bR + \bz)^+ \succeq (\pi \circ \bR + \bz)^+.
\]
As Lemma~\ref{lm:scc} ensures that $\widetilde U$ is Schur-convex, the majorization relation above gives
\begin{equation}\label{eq:thm3_z}
\widetilde U(\widehat\bR) \ge \widetilde U((\pi \circ \bR + \bz)^+).
\end{equation}
Moreover, the coupling argument in the proof of Theorem~\ref{thm:main} implies that $(\pi \circ \bR + \bz)^+$ has the same probability distribution as $\widehat\bR^{\pi}$, which gives
\[
\E \widetilde U((\pi \circ \bR + \bz)^+) = \E \widetilde U(\widehat\bR^{\pi}).
\]
Together with \eqref{eq:thm3_z}, this equality implies \eqref{eq:thm3_ine}.

\endproof

Next, we turn to the proof of Lemma~\ref{lm:u_perm}.
\proof{Proof of Lemma~\ref{lm:u_perm}}

Given two permutations $\pi_1$ and $\pi_2$, if there exist two indices $i,j$ such that $\pi_1(k) = \pi_2(k)$ for all $k \ne i,j$ and $R_{\pi_1(i)} - R_{\pi_1(j)} = -(R_{\pi_2(i)} - R_{\pi_2(j)})$ has the same sign as $x_i - x_j$, we say that $\pi_1$ is an upward swap of $\pi_2$ with respect to $\bx$. As is evident, the permutation $\rho$ in \eqref{eq:tildeu} can be obtained by sequentially swapping the identity permutation in an upward manner with respect to $\bx$. Therefore, it suffices to prove the lemma in the case $n =2$. Specifically, we only need to prove that
\begin{equation}\label{eq:u_perm1}
U(x_1; R_1) + U(x_2; R_2)  \le U(x_1; R_2) + U(x_2; R_1)
\end{equation}
if $x_1 \ge x_2$ and $R_1 \le R_2$. 

Define
\[
g(x) = U(x; R_2) - U(x_2; R_2) - U(x; R_1) + U(x_2; R_1).
\]
Then, \eqref{eq:u_perm1} is equivalent to $g(x) \ge 0$ for $x \ge x_2$. To prove this, observe that
\[
g'(x) =  \frac{\d U(x; R_2)}{\d x} - \frac{\d U(x; R_1)}{\d x} \ge 0
\]
by Assumption~\ref{ass:convex2}. This establishes \eqref{eq:u_perm1}, thereby completing the proof.

\endproof

Next, we turn to the proof of Lemma~\ref{lm:scc}, for which we need the following lemma. For a proof of this lemma, see~\citet{marshall1979inequalities}.  

\begin{lemma}[Schur--Ostrowski criterion]
If a function $f: \R^n \goto \R $ is differentiable. Then $f$ is Schur-convex if and only if it is symmetric and satisfies
\[
(x_i - x_j)\left(\frac{\partial f}{\partial x_i} - \frac{\partial f}{\partial x_j}\right) \ge 0 
\]
for all $1 \le i \ne j \le n$.

\end{lemma}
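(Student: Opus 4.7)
The plan is to prove the two directions of the equivalence separately, with the T-transform characterization of majorization doing the main work in the sufficiency direction.

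For necessity I would first observe that every permutation $\pi$ of coordinates satisfies both $\bx \succeq \pi(\bx)$ and $\pi(\bx) \succeq \bx$, since majorization depends only on the sorted entries; Schur-convexity then forces $f(\bx) = f(\pi(\bx))$ and yields symmetry. To extract the derivative inequality, I would fix $\bx$ with $x_i > x_j$ and perturb to $\bx_\epsilon := \bx + \epsilon(\bm{\mathrm{e}}_i - \bm{\mathrm{e}}_j)$ for $\epsilon > 0$ small enough that the ranking among all coordinates of $\bx$ is preserved. A direct check of partial sums of the sorted entries then shows $\bx_\epsilon \succeq \bx$: only the partial sums between the positions of $x_i$ and $x_j$ change, and each increases by $\epsilon$. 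Schur-convexity gives $f(\bx_\epsilon) \ge f(\bx)$, and dividing by $\epsilon$ before sending $\epsilon \to 0^+$ yields $\partial_i f(\bx) \ge \partial_j f(\bx)$; multiplying by $x_i - x_j > 0$, and handling $x_i < x_j$ symmetrically, gives the stated sign condition.

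For sufficiency I would invoke the classical Hardy--Littlewood--P\'olya reduction (see~\cite{marshall1979inequalities}): whenever $\ba \succeq \bb$, one can pass from $\ba$ to $\bb$ by a finite sequence of permutations and T-transforms, where a single T-transform acts on two coordinates satisfying $a_i \ge a_j$ by replacing them with $a_i - s$ and $a_j + s$ for some $s \in [0, (a_i - a_j)/2]$. By the symmetry already proved, $f$ is invariant under the permutation steps, so it suffices to show that $f$ does not decrease under a single T-transform. Parametrizing the transform by $\bx(s) = \bx(0) - s\bm{\mathrm{e}}_i + s\bm{\mathrm{e}}_j$ and setting $g(s) := f(\bx(s))$, the chain rule gives $g'(s) = \partial_j f(\bx(s)) - \partial_i f(\bx(s))$. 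The range of $s$ ensures $x_i(s) \ge x_j(s)$ throughout, so the assumed sign condition forces $g'(s) \le 0$; hence $g$ is non-increasing on its interval and $f(\ba) \ge f(\bb)$ follows by composing finitely many such steps.

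The only non-elementary ingredient is the T-transform reduction itself, which I would pull in as a black box rather than reprove; everything else is a short directional-derivative calculation. The one technicality worth flagging is the interplay between the size of $\epsilon$ in the necessity argument and the parameter $s$ in the sufficiency argument with the rank order of the remaining coordinates: one must stay in a regime where $x_i \ge x_j$ is not violated along the path, so that the hypothesized derivative inequality applies uniformly. Handling this carefully is the only place where a little bookkeeping is required.
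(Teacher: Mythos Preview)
The paper does not supply its own proof of this lemma: it is stated as a classical result and the reader is simply referred to \cite{marshall1979inequalities}. So there is no in-paper argument to compare your proposal against.

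That said, your sketch is essentially the standard proof one finds in that reference. A couple of minor points: in the necessity step you do not actually need the full ranking of all coordinates to be preserved under the perturbation---the transfer $\bx \mapsto \bx + \epsilon(\bm{\mathrm{e}}_i - \bm{\mathrm{e}}_j)$ with $x_i > x_j$ always yields a vector majorizing $\bx$ (a reverse Robin~Hood transfer), regardless of ties among the remaining entries, so the directional derivative inequality follows cleanly. In the sufficiency step, your restriction $s \in [0,(a_i-a_j)/2]$ combined with symmetry is exactly right; the full T-transform range $[0,a_i-a_j]$ is recovered by composing with a transposition, which your symmetry step already handles. With those tweaks the argument is complete and matches the classical treatment.
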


\begin{remark}\label{rm:schur}
The condition on the overall utility in Proposition~\ref{prop:schur} is precisely Schur-convexity. Thus, Proposition~\ref{prop:schur} follows from the proof of Theorem~\ref{prop:score_dependent}.
\end{remark}

\proof{Proof of Lemma~\ref{lm:scc}}
First, consider the case where all elements of $\bx$ are different from each other. Without loss of generality, assume $x_i > x_j$. It suffices to prove that
\begin{equation}\label{eq:u_inc}
\frac{\partial \widetilde U(\bx)}{\partial x_i} - \frac{\partial \widetilde U(\bx)}{\partial x_j} = \frac{\d U(x; R_{\rho(i)})}{\d x}\Big|_{x = x_i} - \frac{\d U(x; R_{\rho(j)})}{\d
  x}\Big|_{x = x_j} \ge 0.
\end{equation}

Since $U(x; R_{\rho(i)})$ is a convex function in $x$, we have
\[
\frac{\d U(x; R_{\rho(i)})}{\d x}\Big|_{x = x_i} - \frac{\d U(x; R_{\rho(i)})}{\d x}\Big|_{x = x_j} \ge 0
\]
as the derivative of a convex function is a nondecreasing function. Next, recognizing that $R_{\rho(i)} \ge R_{\rho(j)}$ is implied by the construction of the permutation $\rho$, it follows from Assumption~\ref{ass:convex2} that
\[
\frac{\d U(x; R_{\rho(i)})}{\d x}\Big|_{x = x_j} - \frac{\d U(x; R_{\rho(j)})}{\d x}\Big|_{x = x_j} \ge 0.
\]
Adding the last two inequalities, we arrive at \eqref{eq:u_inc}.

If $\bx$ has ties---for example, $x_i = x_{i'}$ for some $i' \ne i$---then $\widetilde U$ is one-sided differentiable with respect to $x_i$ at $\bx$. Indeed, the right derivative 
\[
\frac{\partial_+ \widetilde U(\bx)}{\partial x_i} = \frac{\d U(x; \max\{R_{\rho(i)}, R_{\rho(i')}\})}{\d x}\Big|_{x = x_i},
\]
while the left derivative
\[
\frac{\partial_- \widetilde U(\bx)}{\partial x_i} = \frac{\d U(x; \min\{R_{\rho(i)}, R_{\rho(i')}\})}{\d x}\Big|_{x = x_i}.
\]
Other than this difference, the remainder resembles the proof in the earlier case. For example, we still have $R_{\rho(j)} \le \min\{R_{\rho(i)}, R_{\rho(i')}\}$ and $R_{\rho(j')} \le
\min\{R_{\rho(i)}, R_{\rho(i')}\}$ for any $j'$ such that $x_j = x_{j'}$. Thus, details are omitted.

\endproof

\subsection{Proof of Theorem~\ref{thm:compare}}
\label{sec:does-there-exist}

We prove this theorem in a slightly more general setting where \eqref{eq:isotone_shape} is replaced by \eqref{eq:isotone_shape_d}. That is, the squared error loss is replaced by the sum
of Bregman divergences. We start by introducing the following definition.

\begin{definition}[\citet{marshall1979inequalities}] 
We say that a vector $\ba \in \R^n$ \textit{weakly majorizes} another vector $\bb \in \R^n$, denoted $\ba \succeqw \bb$, if 
\begin{equation}\label{eq:orderk}
\sum_{i=1}^k a_{(i)} \ge \sum_{i=1}^k b_{(i)}
\end{equation}
for all $1 \le k \le n$, where $a_{(1)} \ge \cdots \ge a_{(n)}$ and $b_{(1)} \ge \cdots \ge b_{(n)}$ are sorted in descending order from $\ba$ and $\bb$, respectively. If \eqref{eq:orderk}
reduces to an equality for $k = n$ while the rest $n-1$ inequalities remain the same, we say $\ba$ \textit{majorizes} $\bb$ and write as $\ba \succeq \bb$.

\end{definition}

The following lemma characterizes majorization via convex functions.

\begin{lemma}[Hardy--Littlewood--P\'olya inequality]\label{lm:jensen}
Let $\ba$ and $\bb$ be two vectors in $\R^n$. 
\begin{itemize}
\item[(a)] The inequality
\[
\sum_{i=1}^n h(a_i) \ge \sum_{i=1}^n h(b_i)
\]
holds for all nondecreasing convex functions $h$ if and only if $\ba \succeqw \bb$. 

\item[(b)] The same inequality holds for all convex functions $h$ if and only if $\ba \succeq \bb$.

\end{itemize}

\end{lemma}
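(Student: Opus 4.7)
The plan is to leverage the duality between partial sums and the family of ``hinge'' functions $h_t(x) := (x-t)_+ = \max(x-t, 0)$, which are nondecreasing and convex, and which serve simultaneously as test functions for the forward implication and as building blocks for the converse.

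For the ``only if'' direction of part (a), I would test the hypothesis $\sum h(a_i) \ge \sum h(b_i)$ against each $h = h_t$ to obtain $\sum_i (a_i - t)_+ \ge \sum_i (b_i - t)_+$ for every $t \in \R$. Combined with the Legendre-type identity
\[
\sum_{i=1}^k x_{(i)} = \min_{t \in \R} \left( k t + \sum_{i=1}^n (x_i - t)_+ \right),
\]
which is verified by noting that the right side is piecewise linear and convex in $t$ with minimum attained at $t = x_{(k)}$, one obtains $\sum_{i=1}^k a_{(i)} \ge \sum_{i=1}^k b_{(i)}$ for every $k$, i.e., $\ba \succeqw \bb$. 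The ``only if'' direction of (b) then follows identically upon adding the test functions $h(x) = \pm x$ to force the equality $\sum a_i = \sum b_i$.

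For the ``if'' direction of (a), my strategy is to invoke the representation that any nondecreasing convex function on $\R$ admits the form
\[
h(x) = c + \alpha x + \int_{\R} (x - t)_+ \, d\mu(t)
\]
with $\alpha \ge 0$ and a nonnegative Borel measure $\mu$, obtained by writing the right derivative $h'$ as the distribution function of a nonnegative measure. By linearity in $h$, it suffices to verify the target inequality for a constant (trivial), for $h(x) = x$ (the $k = n$ case of $\ba \succeqw \bb$), and for each $h_t$. The last case follows from the companion identity $\sum_i (x_i - t)_+ = \max_{0 \le k \le n}\bigl(\sum_{i=1}^k x_{(i)} - kt\bigr)$: the hypothesis $\sum_{i=1}^k a_{(i)} \ge \sum_{i=1}^k b_{(i)}$ for every $k$ yields the required pointwise inequality in $t$. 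Part (b) is parallel, except that the representation for an arbitrary convex function permits $\alpha \in \R$ and adds a symmetric term $\int (t-x)_+ \, d\nu(t)$ with $\nu \ge 0$; the equality $\sum a_i = \sum b_i$ supplied by $\ba \succeq \bb$ then absorbs the now-unsigned linear coefficient, and the identity $(t-x)_+ = (x-t)_+ + (t - x)$ reduces the $\nu$-term to the $\mu$-term plus a multiple of this vanishing difference.

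The most delicate step is the integral representation together with the interchange of summation and integration against $\mu$, particularly when the measure has infinite mass or the coordinates are unbounded. I would handle this by first approximating $h$ uniformly on a compact interval containing all entries of $\ba$ and $\bb$ by a piecewise-linear nondecreasing convex function with finitely many kinks; the representation then reduces to a \emph{finite} nonnegative combination of hinges, all sums are finite, and no integrability issues arise. A cleaner alternative avoids the representation entirely and invokes the Birkhoff--von Neumann/Hardy--Littlewood--P\'olya characterization: $\ba \succeq \bb$ iff $\bb = T \ba$ for some doubly stochastic matrix $T$, whereupon Jensen's inequality applied row-wise gives $\sum_i h(b_i) \le \sum_{i,j} T_{ij} h(a_j) = \sum_j h(a_j)$; the weak-majorization variant uses a doubly substochastic $T$ together with monotonicity of $h$ (and $h(-\infty) \ge 0$ after subtracting a suitable constant).
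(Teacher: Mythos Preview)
Your proof is correct. Both routes you outline---the hinge-function duality $\sum_i (x_i-t)_+ = \max_{0\le k\le n}\bigl(\sum_{i=1}^k x_{(i)} - kt\bigr)$ and its companion $\sum_{i=1}^k x_{(i)} = \min_t\bigl(kt + \sum_i (x_i-t)_+\bigr)$, and the alternative via doubly (sub)stochastic matrices plus Jensen---are standard and sound. Your handling of the integral representation by reducing to finitely many kinks on a compact interval containing all coordinates is exactly the right way to sidestep integrability concerns. One small point: in the substochastic alternative for part~(a), the cleanest phrasing is that $\ba \succeqw \bb$ iff there exists $\bc$ with $\ba \succeq \bc$ and $c_i \ge b_i$ for all $i$, whereupon $\sum h(a_i) \ge \sum h(c_i)$ by part~(b) and $\sum h(c_i) \ge \sum h(b_i)$ by monotonicity; this avoids having to normalize $h$ at $-\infty$.

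As for comparison: the paper does not supply its own proof of this lemma. It records the result as the classical Hardy--Littlewood--P\'olya inequality and simply refers the reader to the standard majorization references (Marshall--Olkin--Arnold and Arnold). Your argument is essentially one of the proofs that appear in those sources, so in content you are aligned with what the paper defers to; you have just made explicit what the paper leaves to citation.
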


\begin{remark}
This is a well-known result in theory of majorization. For a proof of Lemma~\ref{lm:jensen}, see \citet{marshall1979inequalities,arnold1987majorization}. For the proof of
Theorem~\ref{thm:compare}, however, only part (a) is needed. Part (b) will be used in the proofs of Theorems~\ref{thm:main} and \ref{thm:block}.

\end{remark}

The following lemma is instrumental to the proof of Theorem~\ref{thm:compare}. Its proof is presented later in this subsection.
\begin{lemma}\label{lm:notie}
Let $\ba$ be a vector such that each element is different. There exists $\delta > 0$ such that if $\bb_1 + \bb_2 = 2\ba$ and $\|\bb_1 - \bb_2\| < \delta$, then $\bb_1 \succeqw \ba$ and $\bb_2 \succeqw \ba$ cannot hold simultaneously unless $\bb_1 = \bb_2 = \ba$.
\end{lemma}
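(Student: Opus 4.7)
The plan is to exploit the fact that weak majorization is a condition on the sorted entries of a vector, combined with the affine constraint $\bb_1 + \bb_2 = 2\ba$. The crucial observation is that if $\bb_1$ and $\bb_2$ lie close to $\ba$ and $\ba$ has distinct entries, then $\bb_1$ and $\bb_2$ must be sorted in the same coordinate order as $\ba$. Once the orderings are aligned, the weak-majorization partial-sum inequalities become linear in the original (unsorted) coordinates and can simply be added, yielding a contradiction with the affine constraint unless every inequality is an equality.

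To set up, I would first reduce to the case where $\ba$ is already in strict descending order by simultaneously permuting the coordinates of $\ba, \bb_1, \bb_2$; this preserves the hypothesis $\bb_1 + \bb_2 = 2\ba$, the value of $\|\bb_1 - \bb_2\|$, and the weak-majorization relations, all of which are permutation-invariant. Define $\gamma := \min_{1 \le i \le n-1}(a_i - a_{i+1}) > 0$ (strictly positive because the entries of $\ba$ are distinct) and take $\delta = \gamma$. From $\bb_1 + \bb_2 = 2\ba$ one has $\bb_j - \ba = \pm(\bb_1 - \bb_2)/2$, so for every coordinate $i$ and every $j \in \{1,2\}$,
\[
|b_{j,i} - a_i| \le \|\bb_j - \ba\| = \tfrac{1}{2}\|\bb_1 - \bb_2\| < \tfrac{\delta}{2} = \tfrac{\gamma}{2}.
\]
Consequently,
\[
b_{j,i} - b_{j,i+1} \ge (a_i - a_{i+1}) - |b_{j,i} - a_i| - |b_{j,i+1} - a_{i+1}| > (a_i - a_{i+1}) - \gamma \ge 0,
\]
so $\bb_1$ and $\bb_2$ are themselves sorted in strict descending order.

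With the sortings aligned, the hypotheses $\bb_1 \succeqw \ba$ and $\bb_2 \succeqw \ba$ become $\sum_{i=1}^k b_{j,i} \ge \sum_{i=1}^k a_i$ for every $k \in \{1, \ldots, n\}$ and every $j \in \{1,2\}$. Adding the two inequalities and applying $b_{1,i} + b_{2,i} = 2 a_i$ gives $2 \sum_{i=1}^k a_i \ge 2 \sum_{i=1}^k a_i$, which forces both inequalities to be equalities for every $k$. Taking successive differences yields $b_{1,i} = b_{2,i} = a_i$ for every $i$, so $\bb_1 = \bb_2 = \ba$ as claimed.

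The argument is essentially elementary and I foresee no genuine obstacle; the only delicate point is ensuring that the sorting permutations of $\bb_1$ and $\bb_2$ coincide with that of $\ba$, which is precisely why the lemma requires the entries of $\ba$ to be distinct and why $\delta$ must be chosen in terms of the minimum gap of $\ba$.
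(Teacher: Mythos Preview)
Your proof is correct and follows essentially the same approach as the paper: both arguments use the distinctness of $\ba$'s entries to force $\bb_1$ and $\bb_2$ to share the descending order of $\ba$, then combine the two families of partial-sum inequalities (the paper via $\pm\bm\nu$ with $\bm\nu = \bb_1 - \ba$, you by direct addition using $b_{1,i}+b_{2,i}=2a_i$) to conclude every inequality collapses to equality. Your version is slightly more explicit in specifying $\delta$ as the minimum gap, but the logic is the same.
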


\proof{Proof of Theorem~\ref{thm:compare}}
Let $S$ and $S'$ be two neighboring knowledge elements in the knowledge partition $\mathcal{S}$. By assumption, the boundary between $S$ and $S'$ is a piecewise smooth surface. Pick an
arbitrary point $\bx$ on the boundary where the surface is locally smooth. Let $\epsilon > 0$ be small and $\bR = \bx + \epsilon \bv$ and $\bR' = \bx - \epsilon \bv$ for some unit-norm
vector $\bv$ that will be specified later. Assume without loss of generality that $\bR \in S$ and $\bR' \in S'$. For simplicity, we consider the noiseless setting where $\by = \bR$ and $\by' = \bR'$.

When the ground truth is $\bR$, by assumption, the author would truthfully report $S$ as opposed to $S'$. Put differently, the overall utility by reporting $S$ is higher than or equal to that
by reporting $S'$. As is evident, the mechanism would output $\by$ if the author reports $S$; if the author reports $S'$, then it would output the point, say, $\widehat \br$, that minimizes the sum
of Bregman divergences $\sum_{i=1}^n D_{\phi}(y_i, r_i)$ over the boundary between $S$ and $S'$. Assuming $\widehat\br = \bx + o(\epsilon)$ for any sufficiently small $\epsilon$ as given for the moment, we get
\[
U(\bx + \epsilon \bv) = U(\by) \ge U(\widehat\br) = U(\bx + o(\epsilon))
\]
for any nondecreasing convex function $U$. By Lemma~\ref{lm:jensen}, then, we must have $\bx + \epsilon \bv \succeqw \bx + o(\epsilon)$, from which it follows that 
\begin{equation}\nonumber
\bx + \epsilon \bv \succeqw \bx.
\end{equation}
Likewise, we can deduce
\begin{equation}\nonumber
\bx - \epsilon \bv \succeqw \bx
\end{equation}
from taking $\bR'$ as the ground truth. If each element of $\bx$ is different, Lemma~\ref{lm:notie} concludes that $\bv = \bm 0$ by taking $\bb_1 = \bx + \epsilon\bv, \bb_2 = \bx - \epsilon\bv,
\ba = \bx$, and $\epsilon$ sufficiently small. This is a contradiction.

Therefore, $\bx$ must have two entries, say, $x_i$ and $x_j$, with the same value. As $\bx$ can be an arbitrary point in the interior of any smooth surface of the boundary between $S$ and
$S'$, this shows that this surface must be part of a pairwise-comparison hyperplane.

To finish the proof, we show that, by choosing an appropriate unit-norm vector $\bv$, we will have $\widehat\br = \bx + o(\epsilon)$ for sufficiently small $\epsilon$. Note that
\[
\sum_{i=1}^n D_{\phi}(y_i, r_i) = \frac12 (\by - \br)^\top \bH_{\phi}(\br) (\by - \br) + o(\|\by - \br\|^2),
\]
where $\bH_{\phi}(\br)$ is a diagonal matrix consisting of $\phi''(r_i)$ on its diagonal for $i = 1, \ldots, n$. Owing to the twice continuous differentiability of $\phi$, this diagonal
Hessian $\bH_{\phi}(\br) = \bH_{\phi}(\bx) + o(1)$ when $\br$ is close to $\bx$. Recognizing that $\by = \bR = \bx + \epsilon \bv$ is close to the boundary when $\epsilon$ is sufficiently
small, $\widehat\br$ is the projection of $\by$ onto the tangent plane at $\bx$ under the $\bH_{\phi}(\bx)^{-1}$-Mahalanobis distance, up to low-order terms. As such, it suffices to let
$\bv$ be a normal vector to the tangent plane at $\bx$ under this Mahalanobis distance.

\endproof

\begin{remark}
The proof proceeds by taking the zero noise level. An interesting question for future investigation is to derive a possibly different necessary condition for honesty under the
assumption of a nonzero noise level.

\end{remark}

We conclude this subsection by proving Lemma~\ref{lm:notie}.
\proof{Proof of Lemma~\ref{lm:notie}}

Write $\bm\nu= \bb_1 - \ba$, which satisfies $\|\bm\nu\| < \delta/2$. Since $\ba$ has no ties, both $\bb_1$ and $\bb_2$ would have the same ranking as $\ba$ for sufficiently small
$\delta$. Without loss of generality, letting $a_1 \ge a_2 \ge \cdots \ge a_n$, we have $a_1 + \nu_1 \ge a_2 + \nu_2 \ge \cdots \ge a_n + \nu_n$ as well as $a_1 - \nu_1 \ge a_2 -
\nu_2 \ge \cdots \ge a_n - \nu_n$.

Assume that both $\bb_1 \succeqw \ba$ and $\bb_2 \succeqw \ba$. By the definition of weak majorization, this immediately gives
\[
\nu_1 \ge 0, \nu_1 + \nu_2 \ge 0, \ldots, \nu_1 + \cdots + \nu_n \ge 0
\]
and
\[
\nu_1 \le 0, \nu_1 + \nu_2 \le 0, \ldots, \nu_1 + \cdots + \nu_n \le 0.
\]
Taken together, these two displays show that $\nu_1 = \nu_2 = \cdots = \nu_n = 0$. As such, the only possibility is that $\bb_1 = \bb_2 = \ba$.

\endproof

\subsection{Proof of Theorem~\ref{thm:block}}
\label{sec:proofs-section-ref}


Write $\bI := (I_1, \ldots, I_p)$ for a coarse ranking of sizes $n_1, \ldots, n_p$. Let $\pi_{\bI, \by}$ be the permutation that sorts the entries of $\by$ in each subset $I_i$ in descending order and subsequently concatenates the $p$ subsets in order. For the first subset $I_1$, for example, it satisfies $\{ \pi_{\bI, \by}(1), \ldots, \pi_{\bI, \by}(n_1) \} = I_1$ and $\by_{\pi_{\bI, \by}(1)} \ge \by_{\pi_{\bI, \by}(2)} \ge \cdots \ge \by_{\pi_{\bI, \by}(n_1)}$. If $\by = (3.5, 7.5, 5, -1), I_1= \{1, 3\}$, and $I_2 = \{2, 4\}$, this permutation gives
\[
(\pi_{\bI, \by}(1), \pi_{\bI, \by}(2), \pi_{\bI, \by}(3), \pi_{\bI, \by}(4) ) = (3, 1, 2, 4), \quad \pi_{\bI, \by} \circ \by = (5, 3.5, 7.5, -1).
\]

When clear from the context, for simplicity, we often omit the dependence on $\by$ by writing $\pi_{\bI}$ for $\pi_{\bI, \by}$.

The proof of Theorem~\ref{thm:block} relies heavily on the following two lemmas. In particular, Lemma~\ref{lm:reduce_iso} reveals the importance of the permutation constructed above.

\begin{lemma}\label{lm:reduce_iso}
The solution to the coarse Isotonic Mechanism~\eqref{eq:isotone_incomp} is given by the Isotonic Mechanism~\eqref{eq:isotone} with $\pi = \pi_{\bI}$.
\end{lemma}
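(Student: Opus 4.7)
The plan is to show that the two programs \eqref{eq:isotone_incomp} and \eqref{eq:isotone} with $\pi = \pi_{\bI}$ have the same (unique) minimizer, by sandwiching their feasible sets through a within-block rearrangement argument.

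First, I would verify the easy inclusion $S_{\pi_{\bI}} \subseteq S_{\bI}$. By construction, the permutation $\pi_{\bI}$ lists every index of $I_1$ before every index of $I_2$, and so on, so any vector $\br$ satisfying $r_{\pi_{\bI}(1)} \ge \cdots \ge r_{\pi_{\bI}(n)}$ automatically satisfies $\br_{I_1} \ge \br_{I_2} \ge \cdots \ge \br_{I_p}$. Consequently, the Isotonic Mechanism is a more constrained program and its optimal value can only be larger than that of the coarse program.

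Second, and this is the crux, I would show that any optimum $\widehat{\br}^{\mathrm{c}}$ of the coarse program \eqref{eq:isotone_incomp} already lies in $S_{\pi_{\bI}}$. Fix a block $I_k$ and any pair $i,j \in I_k$ with $y_i \ge y_j$. If one had $\widehat{r}^{\mathrm{c}}_i < \widehat{r}^{\mathrm{c}}_j$, then exchanging the $i$-th and $j$-th entries of $\widehat{\br}^{\mathrm{c}}$ preserves feasibility in $S_{\bI}$ (both indices stay in the same block, so no between-block inequality is touched), and the change in the squared-$\ell_2$ objective equals
\[
2(y_i - y_j)\bigl(\widehat{r}^{\mathrm{c}}_i - \widehat{r}^{\mathrm{c}}_j\bigr) \le 0,
\]
with strict inequality when $y_i > y_j$. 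Optimality of $\widehat{\br}^{\mathrm{c}}$ then forces the within-block ordering of $\widehat{\br}^{\mathrm{c}}$ to agree with that of $\by$, which is precisely the ordering encoded by $\pi_{\bI}$.

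Combining the two steps, $\widehat{\br}^{\mathrm{c}}$ is feasible for, and attains the coarse optimum on, $S_{\pi_{\bI}}$; since $S_{\pi_{\bI}} \subseteq S_{\bI}$, it is therefore also optimal for the Isotonic Mechanism, and uniqueness of the projection onto the closed convex cone $S_{\pi_{\bI}}$ makes the two minimizers coincide. I expect the main obstacle to be in the second step, and in particular in handling ties in $\by$ within a block: when $y_i = y_j$ the rearrangement inequality is only weak, so the relative ordering of $\widehat{r}^{\mathrm{c}}_i$ and $\widehat{r}^{\mathrm{c}}_j$ is not pinned down by the swap alone. I would resolve this by invoking uniqueness of $\widehat{\br}^{\mathrm{c}}$ as the projection onto the convex set $S_{\bI}$ (the swapped vector is equally optimal, so it must equal $\widehat{\br}^{\mathrm{c}}$, forcing $\widehat{r}^{\mathrm{c}}_i = \widehat{r}^{\mathrm{c}}_j$), so that any tie-breaking rule used to define $\pi_{\bI}$ is automatically consistent with $\widehat{\br}^{\mathrm{c}}$.
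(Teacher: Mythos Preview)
Your proposal is correct and follows essentially the same approach as the paper: both establish the inclusion $S_{\pi_{\bI}} \subseteq S_{\bI}$ and then use a within-block rearrangement argument together with uniqueness of the projection onto the convex set $S_{\bI}$ to show that the coarse optimizer already lies in $S_{\pi_{\bI}}$. The only cosmetic difference is that the paper permutes an entire offending block at once and invokes the rearrangement inequality on the cross term $\sum_{j\in I_i} y_j \widehat{R}_j$, whereas you implement the same idea via pairwise swaps; your explicit treatment of ties in $\by$ is a nice addition that the paper leaves implicit.
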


\begin{remark}
Thus, the solution to \eqref{eq:isotone_incomp} can be expressed as $\pi_{\bI}^{-1} \circ (\pi_{\bI} \circ \by)^+$.  
\end{remark}

Next, let $\bI^\star := (I_1^\star, \ldots, I^\star_p)$ be the ground-truth coarse ranking that satisfies \eqref{eq:ranking_block}, while $\bI$ is an arbitrary coarse ranking of the same sizes $n_1, \ldots, n_p$.
\begin{lemma}\label{lm:block_maj}
There exists a permutation $\rho$ of the indices $1, \ldots, n$ depending only on $\bI^\star$ and $\bI$ such that
\[
\pi_{\bI^\star} \circ (\bR + \ba) \succeqn \pi_{\bI} \circ  (\bR + \rho \circ \ba)
\]
for any $\ba \in \R^n$.
\end{lemma}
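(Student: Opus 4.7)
My plan is to prove this by reducing to a \emph{single-swap} atomic case and then chaining. I will construct a sequence of coarse rankings $\bI^\star = \bJ^{(0)}, \bJ^{(1)}, \ldots, \bJ^{(m)} = \bI$ in which each $\bJ^{(k+1)}$ differs from $\bJ^{(k)}$ by exchanging exactly two items between two blocks, prove the desired natural-order majorization for such a single swap (with $\rho$ being the associated transposition), and then compose the transpositions to obtain the global $\rho$ via transitivity of $\succeqn$.

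\textbf{The single-swap lemma.} Suppose $\bI'$ and $\bI''$ agree on all blocks except blocks $a < b$, with items $i, i'$ swapped: $i \in I'_a \cap I''_b$, $i' \in I'_b \cap I''_a$, and $R_i \ge R_{i'}$. Let $\tau$ be the transposition of $i$ and $i'$. I would prove
\[
\pi_{\bI'} \circ (\bR + \ba) \;\succeqn\; \pi_{\bI''} \circ (\bR + \tau \circ \ba) \qquad \text{for all } \ba\in\R^n
\]
by tracking partial sums. Since $\tau$ exchanges $a_i$ and $a_{i'}$, the multiset of entries in block $c$ is unchanged on both sides for $c \notin \{a,b\}$; block $a$ of the LHS contains $R_i + a_i$ at position $i$, while block $a$ of the RHS contains instead $R_{i'} + a_i$ at position $i'$ (so LHS block $a$ dominates RHS block $a$ entrywise by $R_i-R_{i'}\ge 0$ at a single entry); and symmetrically in block $b$, the RHS entry $R_i + a_{i'}$ exceeds the corresponding LHS entry $R_{i'} + a_{i'}$ by the same amount. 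Consequently, at a position within block $a$, top-$l$ of LHS block $a$ dominates top-$l$ of RHS block $a$; a ``credit'' of $R_i-R_{i'}$ has accumulated by the end of block $a$ and is preserved through blocks $a+1,\ldots,b-1$; within block $b$, the top-$l$ sum on the RHS can exceed that on the LHS by at most $R_i-R_{i'}$ (because raising one entry of a vector by $\delta$ can raise its top-$l$ sum by at most $\delta$), so the credit absorbs the excess; and at the end of block $b$ the credit cancels exactly with $R_{i'}-R_i$ from block $b$, leaving all later cumulative differences at zero. Verifying each case gives the claimed $\succeqn$ relation.

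\textbf{Building and chaining the sequence.} To construct the sequence, I would run the bubble-sort in reverse: starting from $\bI$, repeatedly perform an \emph{adjacent upward swap} — if $\bJ \ne \bI^\star$ then some adjacent blocks $I_a^{\bJ}, I_{a+1}^{\bJ}$ fail the condition $\bR_{I_a^{\bJ}} \ge \bR_{I_{a+1}^{\bJ}}$, yielding $i \in I_a^{\bJ}, i' \in I_{a+1}^{\bJ}$ with $R_i < R_{i'}$; swap them. Termination follows because, for each swap, the concatenation of the descending-sorted $\bR$-values within each block strictly increases in lexicographic order, and there are only finitely many partitions. Reversing this finite sequence gives a chain of downward swaps from $\bI^\star$ to $\bI$, with each reversed step satisfying the hypothesis $R_{i_k} \ge R_{i'_k}$ of the single-swap lemma. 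Let $\tau_k$ be the transposition at step $k$ and set $\rho := \tau_{m-1}\circ\cdots\circ\tau_0$, which depends only on $\bI^\star$ and $\bI$ (via the canonical swap sequence). I would then induct on $k$: assuming $\pi_{\bJ^{(0)}}\circ(\bR+\ba) \succeqn \pi_{\bJ^{(k)}}\circ(\bR+(\tau_{k-1}\circ\cdots\circ\tau_0)\circ \ba)$ for every $\ba$, apply the single-swap lemma with $\ba' := (\tau_{k-1}\circ\cdots\circ\tau_0)\circ \ba$ and chain via transitivity of $\succeqn$ to advance to step $k+1$. Taking $k = m-1$ completes the proof.

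\textbf{Main obstacle.} The main technical hurdle is the within-block-$b$ partial-sum inequality in the single-swap lemma, which requires showing that the credit $R_{i_k}-R_{i'_k}$ accumulated at the end of block $a$ is never overspent within block $b$. The crux is the elementary but essential estimate that raising one entry of a vector by $\delta\ge0$ increases its top-$l$ sum by some amount in $[0,\delta]$ for every $l$ — this is what ties the block-$a$ and block-$b$ bookkeeping together and is the one place where the condition $R_{i_k}\ge R_{i'_k}$ is genuinely used.
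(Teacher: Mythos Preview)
Your approach is correct and genuinely different from the paper's. The paper builds $\rho$ in one shot: on each block $I_i$ it sets $\rho$ to be the identity on $I_i\cap I_i^\star$ and the unique $\bR$-order-preserving bijection from $I_i\setminus I_i^\star$ onto $I_i^\star\setminus I_i$, then verifies the partial-sum inequality \eqref{eq:large_i_i} directly for each cutoff $l$, block by block, using that the left-hand side already collects the top $n_1+\cdots+n_{i-1}$ values of $\bR$ together with the monotone pairing on block $i$. You instead factor the passage $\bI^\star\to\bI$ into a chain of single transpositions, prove an atomic single-swap majorization via the same credit-bookkeeping idea (the key estimate that raising one entry by $\delta$ raises any top-$l$ sum by an amount in $[0,\delta]$), and compose via transitivity of $\succeqn$. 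Your route is more modular and the atomic lemma is clean to verify in isolation; the paper's one-shot construction is shorter, yields an explicit closed-form $\rho$, and bypasses the termination argument. Two small caveats on your write-up: (i) in both proofs $\rho$ actually depends on $\bR$ as well (your swap choices, like the paper's monotone pairing, use the ordering of the $R_i$), which is harmless since all that matters for the application in Theorem~\ref{thm:block} is that $\rho$ be independent of $\ba$; (ii) when $\bR$ has ties, your bubble sort may terminate at some $\bI^{\star\star}\ne\bI^\star$ that also satisfies \eqref{eq:ranking_block}---patch this by appending $\delta=0$ swaps of equal-$R$ items, for which your single-swap lemma yields equality of all partial sums.
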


To clear up any confusion, note that $\pi_{\bI^\star} \circ (\bR + \ba) = \pi_{\bI^\star, \bR + \ba} \circ (\bR + \ba)$ and $\pi_{\bI} \circ  (\bR + \rho \circ \ba) = \pi_{\bI, \bR + \rho \circ \ba} \circ  (\bR + \rho \circ \ba)$.

The proofs of these two lemmas will be presented once we prove Theorem~\ref{thm:block} as follows.

\proof{Proof of Theorem~\ref{thm:block}}
Denote by $\widehat\bR_{\bI}$ the solution to the coarse Isotonic Mechanism \eqref{eq:isotone_incomp}. The overall utility can be written as
\[
\begin{aligned}
\util(\widehat\bR_{\bI}) &= \util(\pi_{\bI}^{-1} \circ (\pi_{\bI} \circ \by)^+) \\
&= \util((\pi_{\bI} \circ \by)^+) \\
&= \sum_{i=1}^n U((\pi_{\bI} \circ (\bR + \bz))^+_i).
\end{aligned}
\]
Since the permutation $\rho$ in Lemma~\ref{lm:block_maj} is deterministic, it follows from Assumption~\ref{ass:noise} that $\bz$ has the same distribution as $\rho \circ \bz$. This gives
\[
\E \util(\widehat\bR_{\bI}) = \E \left[ \sum_{i=1}^n U((\pi_{\bI} \circ (\bR + \bz))^+_i) \right] = \E \left[ \sum_{i=1}^n U((\pi_{\bI} \circ (\bR + \rho \circ \bz))^+_i) \right].
\]

By Lemma~\ref{lm:block_maj}, $\pi_{\bI^\star} \circ (\bR + \bz) \succeqn \pi_{\bI} \circ  (\bR + \rho \circ \bz)$. By making use of Lemma~\ref{lm:maj}, immediately, we get
\[
(\pi_{\bI^\star} \circ (\bR + \bz))^+ \succeqn (\pi_{\bI} \circ  (\bR + \rho \circ \bz))^+
\]
or, equivalently,
\begin{equation}\label{eq:coarse_h}
(\pi_{\bI^\star} \circ (\bR + \bz))^+ \succeq (\pi_{\bI} \circ  (\bR + \rho \circ \bz))^+.
\end{equation}
Next, applying Lemma~\ref{lm:jensen} to \eqref{eq:coarse_h} yields
\[
\E \util(\widehat\bR_{\bI}) =  \E \left[ \sum_{i=1}^n U((\pi_{\bI} \circ (\bR + \rho \circ \bz))^+_i) \right] \le  \E \left[ \sum_{i=1}^n U((\pi_{\bI^\star} \circ (\bR + \bz))^+_i) \right].
\]
Recognizing that the right-hand size is just the expected overall utility when the author reports the ground-truth coarse ranking, we get
\[
\E \util(\widehat\bR_{\bI}) \le \E \util(\widehat\bR_{\bI^\star}).
\]
This finishes the proof.

\endproof

\proof{Proof of Lemma~\ref{lm:reduce_iso}}
Recognizing that the constraint in \eqref{eq:isotone_incomp} is less restrictive than that of \eqref{eq:isotone} with $\pi = \pi_{\bI}$, it is sufficient to show that the minimum of \eqref{eq:isotone_incomp} also satisfies the constraint of \eqref{eq:isotone}. For notational simplicity, denote by $\widehat{\bR}$ the optimal solution to \eqref{eq:isotone_incomp}. To prove that $\pi_{\bI} \circ \widehat{\bR}$ is in descending order, it is sufficient to show that for each $i = 1, \ldots, p$, the subset $I_i$ satisfies the property that $\widehat{\bR}_{I_i}$ has the same order in magnitude as $\by_{I_i}$. 

Suppose that on the contrary $\widehat{\bR}_{I_i}$ does not have the same order in magnitude as $\by_{I_i}$ for some $1 \le i \le p$. Now let $\widehat{\bR}'$ be identical to $\widehat{\bR}$ except on the subset $I_i$ and on this subset, $\widehat{\bR}'_{I_i}$ is permuted from $\widehat{\bR}_{I_i}$ to have the same order in magnitude as $\by_{I_i}$. Note that $\widehat{\bR}'$ continues to satisfy the constraint of \eqref{eq:isotone_incomp}. However, we observe that
\[
\begin{aligned}
\|\by - \widehat{\bR}'\|^2 - \|\by - \widehat{\bR}\|^2 &= \|\by_{I_i} - \widehat{\bR}'_{I_i}\|^2 - \|\by_{I_i} - \widehat{\bR}_{I_i}\|^2 \\
&= 2\sum_{j \in I_i} y_j \widehat R_j - 2\sum_{j \in I_i} y_j \widehat R_j'.
\end{aligned}
\]
By the rearrangement inequality, we have 
\[
\sum_{j \in I_i} y_j \widehat R_j \le  \sum_{j \in I_i} y_j \widehat R_j',
\]
which concludes $\|\by - \widehat{\bR}'\|^2 \le \|\by - \widehat{\bR}\|^2$. This is contrary to the assumption that $\widehat{\bR}$ is the (unique) optimal solution to \eqref{eq:isotone_incomp}.

\endproof



\proof{Proof of Lemma~\ref{lm:block_maj}}
We prove this lemma by explicitly constructing such a permutation $\rho$. Let $\rho$ satisfy the following property: $\rho$ restricted to each subset $I_i$ is identical to $I_i^\star$ for each $i = 1, \ldots, p$ in the sense that
\[
\{\rho(j): j \in I_i\} = I_i^\star
\]
for each $i$. Moreover, for any $j \in I_i \cap I_i^\star$, we let $\rho(j) = j$, and for any other $j \in I_i \setminus I_i^\star$, we define $\rho$ to be the (unique) mapping from $I_i \setminus I_i^\star$ to $I_i^\star \setminus I_i$ such that the induced correspondence between $\bR_{I_i \setminus I_i^\star}$ and $\bR_{I_i^\star \setminus I_i}$ is nondecreasing. For example, $\rho$ maps the largest entry of $\bR_{I_i \setminus I_i^\star}$ to the largest entry of $\bR_{I_i^\star \setminus I_i}$, maps the second largest entry of $\bR_{I_i \setminus I_i^\star}$ to the second largest entry of $\bR_{I_i^\star \setminus I_i}$, and so on and so forth.


With the construction of $\rho$ in place, we proceed to prove $\pi_{\bI^\star} \circ (\bR + \ba) \succeqn \pi_{\bI} \circ  (\bR + \rho \circ \ba)$. For any $1 \le l \le n$, let $i$ satisfy $n_1 + \cdots + n_{i-1} < l \le n_1 + \cdots + n_{i-1} + n_i$ (if $l \le n_1$, then $i=1$). Now we aim to prove
\[
\sum_{j=1}^l (\bR + \ba)_{\pi_{\bI^\star}(j)} \ge \sum_{j=1}^l (\bR + \rho \circ \ba)_{\pi_{\bI}(j)},
\]
which is equivalent to
\begin{equation}\label{eq:large_i_i}
\sum_{j=1}^l R_{\pi_{\bI^\star}(j)} + \sum_{j=1}^l a_{\pi_{\bI^\star}(j)} \ge \sum_{j=1}^l R_{\pi_{\bI}(j)} + \sum_{j=1}^l a_{\rho \circ \pi_{\bI}(j)}.
\end{equation}
By the construction of $\rho$, we have
\[
\sum_{j=1}^{n_1 + \cdots + n_{i-1}} a_{\pi_{\bI^\star}(j)} = \sum_{j=1}^{n_1 + \cdots + n_{i-1}} a_{\rho \circ \pi_{\bI}(j)}.
\]
In addition, the left-hand side of \eqref{eq:large_i_i} sums over the $n_1 + \cdots + n_{i-1}$ largest entries of the true values, that is,
\[
\sum_{j=1}^{n_1 + \cdots + n_{i-1}} R_{\pi_{\bI^\star}(j)} = \sum_{j=1}^{n_1 + \cdots + n_{i-1}} R_{\pi^\star(j)},
\]
where $\pi^\star$ is the ground-truth ranking of $\bR$. Thus, it is sufficient to prove
\begin{equation}\label{eq:large_i_fi}
\sum_{j=1}^{n_1 + \cdots + n_{i-1}} R_{\pi^\star(j)} + \sum_{j=n_1 + \cdots + n_{i-1}+1}^{l} (\bR + \ba)_{\pi_{\bI^\star}(j)} \\
\ge \sum_{j=1}^l R_{\pi_{\bI}(j)} + \sum_{j=n_1 + \cdots + n_{i-1}+1}^{l} a_{\rho \circ \pi_{\bI}(j)}.
\end{equation}
Note that $J^l = \{\rho \circ \pi_{\bI}(j): n_1 + \cdots + n_{i-1}+1 \le j \le l\}$ is a subset of
\[
\{\rho \circ \pi_{\bI}(j): n_1 + \cdots + n_{i-1}+1 \le j \le n_1 + \cdots + n_{i}\} = \{\rho(j'): j' \in I_i\} = I_i^\star.
\]
Then, by the definition of $\pi_{\bI} = \pi_{\bI, \bR + \ba}$, we have
\[
\begin{aligned}
\sum_{j=n_1 + \cdots + n_{i-1}+1}^{l} (\bR + \ba)_{\pi_{\bI^\star}(j)} &= \sum_{j=n_1 + \cdots + n_{i-1}+1}^{l} (\bR + \ba)_{\rho \circ \pi_{\bI}(j)}\\
&= \sum_{j=n_1 + \cdots + n_{i-1}+1}^{l} R_{\rho \circ \pi_{\bI}(j)} + \sum_{j=n_1 + \cdots + n_{i-1}+1}^{l} a_{\rho \circ \pi_{\bI}(j)},
\end{aligned}
\]
which, together with \eqref{eq:large_i_fi}, shows that we would finish the proof of this lemma once verifying
\begin{equation}\label{eq:large_i_fi_fi}
\sum_{j=1}^{n_1 + \cdots + n_{i-1}} R_{\pi^\star(j)} + \sum_{j=n_1 + \cdots + n_{i-1}+1}^{l} R_{\rho \circ \pi_{\bI}(j)} \ge \sum_{j=1}^l R_{\pi_{\bI}(j)}.
\end{equation}

Now we prove \eqref{eq:large_i_fi_fi} as follows. By the construction of $\rho$, we have $\rho \circ \pi_{\bI}(j) = \pi_{\bI}(j)$ whenever $\pi_{\bI}(j) \in I_i \cap I_i^\star$. Since any such $\pi_{\bI}(j)$ with $n_1 + \cdots + n_{i-1}+1 \le j \le l$ contributes equally to both sides of \eqref{eq:large_i_fi_fi}, without loss of generality, we can assume that $I_i \cap I_i^\star = \emptyset$. To see why \eqref{eq:large_i_fi_fi} holds, note that if
\begin{equation}\label{eq:i_i_lar}
\sum_{j=n_1 + \cdots + n_{i-1}+1}^l R_{\pi_{\bI}(j)}
\end{equation}
is summed over the $l - n_1 - \cdots - n_{i-1}$ largest entries of $\bR_{I_i}$, then by the construction of $\rho$, 
\begin{equation}\label{eq:i_i_star_lar}
\sum_{j=n_1 + \cdots + n_{i-1}+1}^{l} R_{\rho \circ \pi_{\bI}(j)}
\end{equation}
is summed over the $l - n_1 - \cdots - n_{i-1}$ largest entries of $\bR_{I_i^\star}$. Thus, \eqref{eq:large_i_fi_fi} follows since its right-hand side is the sum of the $l$ largest entries of $\bR$. The sum \eqref{eq:i_i_lar} may skip some large entries, and \eqref{eq:i_i_star_lar} would skip correspondingly. Here, \eqref{eq:large_i_fi_fi} remains true since summation and skipping are applied to $\bR$ that has already been ordered from the largest to the smallest.

\endproof


\section{Discussion}
\label{sec:discussion}

This paper introduces the Isotonic Mechanism, where a principal seeks to estimate the ground truth by eliciting information from an agent with private knowledge. Motivated by the declining quality of peer review at ML/AI conferences such as NeurIPS and ICML, our approach integrates authors' self-assessments---provided as rankings---into the raw review scores. Assuming convex utility functions for authors, we prove that the Isotonic Mechanism is truthful and enhances the accuracy of ranking-calibrated scores compared to raw scores. Moreover, we prove that it is optimal among all partition-based mechanisms, as it truthfully extracts the most fine-grained information from authors. We further explore relaxations of this mechanism, demonstrating that even with incomplete knowledge of the ground truth, authors remain truthful, and the conference can still improve estimation accuracy.

While the Isotonic Mechanism has been experimentally implemented at ICML in 2023 and 2024, and more recently at both ICML and NeurIPS in 2025, additional efforts are necessary for its adoption in decision-making processes, given that peer review at ML/AI conferences is highly sophisticated and nuanced (see Section 4 of \citet{su2021you}). First, the robustness of the Isotonic Mechanism should be analyzed under realistic conditions in which authors may inadvertently provide an incorrect ranking or possess uncertainty regarding the quality of their submissions. Another potential strategic behavior by authors could involve deliberately submitting many low-quality papers that are subsequently ranked lowest. The Isotonic Mechanism tends to elevate the scores of normal papers, thereby increasing their likelihood of acceptance. One approach to addressing this issue would be to implement a preliminary screening step to filter out exceedingly low-quality submissions. Alternatively, the review process could be designed to yield more negative utility for low-quality papers---for instance, by following ICLR's practice of making review comments public regardless of whether accepted or rejected. A further challenge arises from the fact that papers submitted to ML/AI conferences typically have multiple authors. Notably, \citet{wu2024truth} proposed an approach to applying the Isotonic Mechanism iteratively by first partitioning submissions based on the bipartite graph between authors and submissions. Ultimately, it is important to recognize that obtaining more accurate scores is not the ultimate objective but rather to select papers with high long-term scientific impact. In this regard, a recent study on the ICML 2023 ranking data demonstrated that authors' rankings are indeed a powerful predictor of future citations \citep{su2025how}. In light of these challenges and considerations, the implementation of the Isotonic Mechanism in decision-making processes should proceed cautiously and incrementally, perhaps beginning with low-risk, near-term applications such as emergency reviewer recruitment and assisting senior area chairs in overseeing area chairs' accept/reject decisions (see Section 4 of \citet{su2024analysis}).

Our work opens a plethora of avenues for future research. First, at a high level, the Isotonic Mechanism may be applied to other contexts. For instance, in player valuation, each soccer player is rated by sports performance analysis agencies such as the FIFA Index and InStat. However, the team manager possesses additional private information regarding the players' strengths and health conditions. Another potential application arises in the second-hand market: a car leasing company may sell vehicles that have been rated by a vehicle valuation agency, yet the company also holds private information about the reliability of these used cars.

Next, from a theoretical perspective, Theorem~\ref{thm:compare} shows that pairwise comparisons are necessary but not sufficient for truthfulness; a related technical question is to identify all pairwise-comparison-based knowledge partitions that are truthful. Furthermore, a pressing theoretical challenge concerns the author's utility. Relaxing the assumption of an arbitrary, unknown convex utility function may allow for a more fine-grained truthful mechanism if the utility function is known. For instance, consider the following result, with its proof deferred to the Appendix.

\begin{proposition}\label{thm:fixed_util}
Assume that $z_1, \ldots, z_n$ are i.i.d.\ random variables with mean 0 and that the overall utility is $U(\br) = \|\br\|^2$. Then the collection of all lines passing through the origin in $\mathbb{R}^n$,
\[
\mathcal{S} = \left\{ \{a \bu: a \in \R\}: \|\bu\| = 1 \right\}
\]
is a truthful knowledge partition.\footnote{A knowledge element of this partition is invariant under sign changes: $\{a \bu: a \in \mathbb{R}\} = \{a (-\bu): a \in \mathbb{R}\}$. Consequently, this knowledge partition is topologically equivalent to an $(n-1)$-sphere with antipodes identified, also known as the real projective space of dimension $n-1$. Note that this knowledge partition is not separated by piecewise smooth surfaces.}
\end{proposition}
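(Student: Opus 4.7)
The plan is to exploit the simple geometry of projecting onto a one-dimensional subspace. Parameterize the reported knowledge element by a unit vector $\bu \in \R^n$, so that the owner's message is the line $L_{\bu} = \{a\bu : a \in \R\}$. Solving the projection problem \eqref{eq:isotone_shape} with $S = L_{\bu}$ is routine: minimizing $\|\by - a\bu\|^2$ in $a$ gives $a^\star = \bu^\top \by$, hence the estimator $\widehat\bR = (\bu^\top \by)\,\bu$. Since $\|\bu\|=1$, this yields the closed form
\[
\bigl\|\widehat\bR\bigr\|^2 \;=\; (\bu^\top \by)^2.
\]

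Next I would compute the expected overall utility as a function of $\bu$ alone. Writing $\by = \bR + \bz$ and using that the $z_i$ are i.i.d.\ with mean $0$ (hence $\E[\bz] = \bm 0$ and $\E[\bz\bz^\top] = \sigma^2 \bI$ with $\sigma^2 = \E z_1^2$), expansion gives
\[
\E\bigl[(\bu^\top \by)^2\bigr] \;=\; (\bu^\top \bR)^2 + 2 (\bu^\top \bR)\,\E[\bu^\top \bz] + \E\bigl[(\bu^\top \bz)^2\bigr] \;=\; (\bu^\top \bR)^2 + \sigma^2 \|\bu\|^2 \;=\; (\bu^\top \bR)^2 + \sigma^2.
\]
The $\sigma^2$ term is constant in $\bu$, so maximizing expected utility over unit vectors reduces to maximizing $(\bu^\top \bR)^2$.

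Finally, Cauchy--Schwarz gives $(\bu^\top \bR)^2 \le \|\bR\|^2$, with equality if and only if $\bu = \pm \bR/\|\bR\|$ (assuming $\bR \ne 0$). Both sign choices parameterize the same line, which is precisely the unique element of $\mathcal{S}$ containing the ground truth $\bR$. Therefore the owner's expected utility is maximized by the truthful report, establishing truthfulness in the sense of Definition~\ref{def:truth}. The edge case $\bR = \bm 0$ is trivial since every line passes through the origin, so any report is truthful by convention; and if $\E z_1^2 = \infty$, one can either restrict to $\bu$ orthogonal to some fixed direction to compare conditional expectations, or simply note that the Cauchy--Schwarz step still identifies $\bu = \pm \bR/\|\bR\|$ as the argmax after formally subtracting the common $\sigma^2$. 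There is no real obstacle here; the main point worth flagging is that this partition is \emph{not} pairwise-comparison-based, so the argument necessarily proceeds by direct computation rather than by specializing Theorem~\ref{thm:compare}, consistent with the remark that Theorem~\ref{thm:compare}'s conclusion relies on quantifying over all convex $U$.
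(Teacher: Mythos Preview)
Your proposal is correct and follows essentially the same approach as the paper: compute the projection $\widehat\bR=(\bu^\top\by)\bu$, expand $\E\|\widehat\bR\|^2=(\bu^\top\bR)^2+\E z_1^2$ using mean-zero i.i.d.\ noise, and apply Cauchy--Schwarz to conclude. The only differences are cosmetic (you package $\E[\bz\bz^\top]=\sigma^2\bI$ where the paper computes $\E(\bu\cdot\bz)^2$ directly) and your brief discussion of the edge cases $\bR=\bm 0$ and $\E z_1^2=\infty$, which the paper does not address.
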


In light of Proposition~\ref{thm:fixed_util}, a worthwhile goal is to design better truthful knowledge partitions that align with given utility functions. Moreover, a more challenging avenue of research concerns nonconvex utility functions. In the nonconvex regime, the isotonic partition is no longer truthful (see Proposition~\ref{prop:nonconvex} in the Appendix). One potential strategy for handling nonconvex utility is to add further rewards (provided, for example, by conference organizers) so that the resulting utility function is convex or approximately convex. For instance, the mechanism could include a component that slightly penalizes the author if the reported ranking is very unlikely given the observed review scores.\footnote{For example, the conference could send a warning to an author whose reported ranking is completely opposite to the ranking of the raw review scores. An interesting open question is whether such a policy could be shown to incentivize truthful ranking as the optimal strategy.}

\begin{figure}[!htp]
\centering
\begin{tikzpicture}[every text node part/.style={align=center}]
\node (One) at (-3.5,0) [shape=ellipse] {Principal}; 
\node (Two) at (3.5,-1.5) [text centered] {Owner-Assisted \\Mechanism};
\node (Three) at (10.5,0) [shape=ellipse] {Agent};
\node (Four) at (3.5,1.5) [text centered] {Ground truth};

\draw [-{Latex[scale=1]},thick,postaction={decorate,decoration={raise=1ex,text along path,text align=center,text={|\small|Template of estimator}}}] (-2.5,-0.1) to (2.1,-1.5);
\draw [-{Latex[scale=1]},dashed,postaction={decorate,decoration={raise=1ex,reverse path, text along path,text align=center,text={|\small|Noisy signal}}}] (2.2, 1.4) to [bend right=0] (-2.5,0.1);

\draw [-{Latex[scale=1]},dashed,postaction={decorate,decoration={raise=1ex,text along path,text align=center,text={|\small|Private information}}}] (4.8, 1.4) to (9.8,0.1);
\draw [-{Latex[scale=1.0]},thick,postaction={decorate,decoration={raise=1ex,reverse path,text along path,text align=center,text={|\small|Message about ground truth}}}] (9.8, -0.1) to [bend right=0]  (4.9,-1.5);

\draw [-{Latex[scale=1.0]}, dashed] (3.5, -1.1) -- (3.5,1.3) node[midway] {\small Estimation};

\end{tikzpicture}
\caption{An illustration of the general framework of owner-assisted mechanisms. The principal (conference) observes signals (raw review scores) from a statistical model parameterized by the ground truth (submission quality), while the agent (author) directly observes the ground truth. The principal determines the template for the estimator of the ground truth, which takes as input both the signals observed by the principal and a message from the agent regarding the ground truth. The principal obtains a higher payoff when the estimation of the ground truth is more accurate, whereas the agent's utility function depends on the estimator produced by the mechanism.}
\label{fig:owner}
\end{figure}

From a mechanism-design perspective, although our setting is closely related to aligned delegation \citep{frankel2014aligned} (see Section~\ref{sec:related}), the additional channel of the principal (conference) observing the review scores provides an opportunity to incorporate statistical estimation into the delegation problem. As illustrated in Figure~\ref{fig:owner}, in this owner-assisted setting, the agent's action (i.e., the author's ranking) is not the final outcome; rather, it is an input to the mechanism, which in turn outputs an estimate of the ground truth. This structure enables flexibility in designing the form of the estimator used in the mechanism. For example, it would be worthwhile to investigate alternative estimation approaches for constructing truthful mechanisms or to consider other statistical models in which the raw scores are generated from the ground truth under different noise distributions. Notably, \citet{yan2023isotonic} extended the Isotonic Mechanism to settings where the scores follow exponential family distributions. Such extensions are numerous and open-ended, and developing a unified framework to optimize mechanisms over all possible estimation approaches remains a direction for future work.




%
%
%

\ACKNOWLEDGMENT{We are grateful to two anonymous referees whose constructive comments helped substantially improve the presentation of this paper. We would like to thank Patrick Chao, Tan Gan, Qiyang Han, Nihar Shah, Haifeng Xu, Rakesh Vohra, Yuhao Wang, and Xingtan Zhang for very insightful comments and fruitful discussions.}



\bibliographystyle{informs2014} 
\bibliography{ref} 

\begin{thebibliography}{54}
\providecommand{\natexlab}[1]{#1}
\providecommand{\url}[1]{\texttt{#1}}
\providecommand{\urlprefix}{URL }

\bibitem[{Aitchison \protect\BIBand{} Silvey(1958)}]{aitchison1958maximum}
Aitchison J, Silvey S (1958) Maximum-likelihood estimation of parameters
  subject to restraints. \emph{The Annals of Mathematical Statistics}
  29(3):813--828.

\bibitem[{Amador \protect\BIBand{} Bagwell(2013)}]{amador2013theory}
Amador M, Bagwell K (2013) The theory of optimal delegation with an application
  to tariff caps. \emph{Econometrica} 81(4):1541--1599.

\bibitem[{Arnold(1987)}]{arnold1987majorization}
Arnold BC (1987) \emph{Majorization and the {L}orenz order: {A} brief
  introduction} (Springer-Verlag Lecture Notes in Statistics, vol.~43).

\bibitem[{Arous et~al.(2021)Arous, Yang, Khayati, \protect\BIBand{}
  Cudr{\'e}-Mauroux}]{arous2021peer}
Arous I, Yang J, Khayati M, Cudr{\'e}-Mauroux P (2021) Peer grading the peer
  reviews: A dual-role approach for lightening the scholarly paper review
  process. \emph{Proceedings of the Web Conference 2021}, 1916--1927.

\bibitem[{Aziz et~al.(2019)Aziz, Lev, Mattei, Rosenschein, \protect\BIBand{}
  Walsh}]{aziz2019strategyproof}
Aziz H, Lev O, Mattei N, Rosenschein JS, Walsh T (2019) Strategyproof peer
  selection using randomization, partitioning, and apportionment.
  \emph{Artificial Intelligence} 275:295--309.

\bibitem[{Barlow et~al.(1972)Barlow, Bartholomew, Bremner, \protect\BIBand{}
  Brunk}]{barlow72}
Barlow RE, Bartholomew DJ, Bremner JM, Brunk H (1972) \emph{Statistical
  inference under order restrictions: {T}he theory and application of isotonic
  regression} (Wiley New York).

\bibitem[{Battaglini(2002)}]{battaglini2002multiple}
Battaglini M (2002) Multiple referrals and multidimensional cheap talk.
  \emph{Econometrica} 70(4):1379--1401.

\bibitem[{Bogdan et~al.(2015)Bogdan, Van Den~Berg, Sabatti, Su,
  \protect\BIBand{} Cand{\`e}s}]{bogdan2015slope}
Bogdan M, Van Den~Berg E, Sabatti C, Su W, Cand{\`e}s EJ (2015)
  Slope---{A}daptive variable selection via convex optimization. \emph{The
  Annals of Applied Statistics} 9(3):1103--1140.

\bibitem[{Carlini et~al.(2022)Carlini, Feldman, \protect\BIBand{}
  Nasr}]{carlini2022no}
Carlini N, Feldman V, Nasr M (2022) No free lunch in ``privacy for free: How
  does dataset condensation help privacy''. \emph{arXiv preprint
  arXiv:2209.14987} .

\bibitem[{Chakraborty \protect\BIBand{}
  Harbaugh(2007)}]{chakraborty2007comparative}
Chakraborty A, Harbaugh R (2007) Comparative cheap talk. \emph{Journal of
  Economic Theory} 132(1):70--94.

\bibitem[{Chakraborty \protect\BIBand{}
  Harbaugh(2010)}]{chakraborty2010persuasion}
Chakraborty A, Harbaugh R (2010) Persuasion by cheap talk. \emph{American
  Economic Review} 100(5):2361--2382.

\bibitem[{Chatterjee et~al.(2015)Chatterjee, Guntuboyina, \protect\BIBand{}
  Sen}]{chatterjee2015risk}
Chatterjee S, Guntuboyina A, Sen B (2015) On risk bounds in isotonic and other
  shape restricted regression problems. \emph{The Annals of Statistics}
  43(4):1774--1800.

\bibitem[{Cortes \protect\BIBand{} Lawrence(2021)}]{cortes2021inconsistency}
Cortes C, Lawrence ND (2021) Inconsistency in conference peer review:
  Revisiting the 2014 neurips experiment. \emph{arXiv preprint
  arXiv:2109.09774} .

\bibitem[{Crawford \protect\BIBand{} Sobel(1982)}]{crawford1982strategic}
Crawford VP, Sobel J (1982) Strategic information transmission.
  \emph{Econometrica: Journal of the Econometric Society} 1431--1451.

\bibitem[{Frankel(2014)}]{frankel2014aligned}
Frankel A (2014) Aligned delegation. \emph{American Economic Review}
  104(1):66--83.

\bibitem[{Frankel(2016)}]{frankel2016delegating}
Frankel A (2016) Delegating multiple decisions. \emph{American Economic
  Journal: Microeconomics} 8(4):16--53.

\bibitem[{Gneiting(2011)}]{gneiting2011making}
Gneiting T (2011) Making and evaluating point forecasts. \emph{Journal of the
  American Statistical Association} 106(494):746--762.

\bibitem[{Holmstr{\"o}m(1978)}]{holmstrom1978incentives}
Holmstr{\"o}m BR (1978) \emph{On Incentives and Control in Organizations.}
  (Stanford University).

\bibitem[{Jecmen et~al.(2020)Jecmen, Zhang, Liu, Shah, Conitzer,
  \protect\BIBand{} Fang}]{jecmen2020mitigating}
Jecmen S, Zhang H, Liu R, Shah N, Conitzer V, Fang F (2020) Mitigating
  manipulation in peer review via randomized reviewer assignments.
  \emph{Advances in Neural Information Processing Systems} 33:12533--12545.

\bibitem[{Johnstone(2002)}]{johnstone2002function}
Johnstone IM (2002) Function estimation and {G}aussian sequence models.
  \emph{Unpublished manuscript} 2(5.3):2.

\bibitem[{Kobren et~al.(2019)Kobren, Saha, \protect\BIBand{}
  McCallum}]{kobren2019paper}
Kobren A, Saha B, McCallum A (2019) Paper matching with local fairness
  constraints. \emph{Proceedings of the 25th ACM SIGKDD International
  Conference on Knowledge Discovery \& Data Mining}, 1247--1257.

\bibitem[{Kreps(1990)}]{kreps1990course}
Kreps DM (1990) \emph{A course in microeconomic theory} (Princeton University
  Press).

\bibitem[{Krishna \protect\BIBand{} Maenner(2001)}]{krishna2001convex}
Krishna V, Maenner E (2001) Convex potentials with an application to mechanism
  design. \emph{Econometrica} 69(4):1113--1119.

\bibitem[{Kruskal(1964)}]{kruskal64}
Kruskal JB (1964) Nonmetric multidimensional scaling: a numerical method.
  \emph{Psychometrika} 29(2):115--129.

\bibitem[{Levy \protect\BIBand{} Razin(2007)}]{levy2007limits}
Levy G, Razin R (2007) On the limits of communication in multidimensional cheap
  talk: a comment. \emph{Econometrica} 75(3):885--893.

\bibitem[{Leyton-Brown et~al.(2024)Leyton-Brown, Nandwani, Zarkoob, Cameron,
  Newman, Raghu et~al.}]{leyton2024matching}
Leyton-Brown K, Nandwani Y, Zarkoob H, Cameron C, Newman N, Raghu D, et~al.
  (2024) Matching papers and reviewers at large conferences. \emph{Artificial
  Intelligence} 331:104119.

\bibitem[{Liang et~al.(2024)Liang, Zhang, Cao, Wang, Ding, Yang, Vodrahalli,
  He, Smith, Yin, McFarland, \protect\BIBand{} Zou}]{liang2024large}
Liang W, Zhang Y, Cao H, Wang B, Ding DY, Yang X, Vodrahalli K, He S, Smith DS,
  Yin Y, McFarland DA, Zou J (2024) Can large language models provide useful
  feedback on research papers? {A} large-scale empirical analysis. \emph{NEJM
  AI} 1(8), \urlprefix\url{http://dx.doi.org/10.1056/AIoa2400196}.

\bibitem[{Marshall et~al.(1979)Marshall, Olkin, \protect\BIBand{}
  Arnold}]{marshall1979inequalities}
Marshall AW, Olkin I, Arnold BC (1979) \emph{Inequalities: theory of
  majorization and its applications}, volume 143 (Springer).

\bibitem[{Martimort \protect\BIBand{} Semenov(2006)}]{martimort2006continuity}
Martimort D, Semenov A (2006) Continuity in mechanism design without transfers.
  \emph{Economics Letters} 93(2):182--189.

\bibitem[{Mattei et~al.(2020)Mattei, Turrini, \protect\BIBand{}
  Zhydkov}]{mattei2020peernomination}
Mattei N, Turrini P, Zhydkov S (2020) Peernomination: Relaxing exactness for
  increased accuracy in peer selection. \emph{arXiv preprint arXiv:2004.14939}
  .

\bibitem[{Melumad \protect\BIBand{} Shibano(1991)}]{melumad1991communication}
Melumad ND, Shibano T (1991) Communication in settings with no transfers.
  \emph{The RAND Journal of Economics} 173--198.

\bibitem[{Noothigattu et~al.(2021)Noothigattu, Shah, \protect\BIBand{}
  Procaccia}]{noothigattu2021loss}
Noothigattu R, Shah N, Procaccia A (2021) Loss functions, axioms, and peer
  review. \emph{Journal of Artificial Intelligence Research} 70:1481--1515.

\bibitem[{Ouyang et~al.(2022)Ouyang, Wu, Jiang, Almeida, Wainwright, Mishkin,
  Zhang, Agarwal, Slama, Ray et~al.}]{ouyang2022training}
Ouyang L, Wu J, Jiang X, Almeida D, Wainwright C, Mishkin P, Zhang C, Agarwal
  S, Slama K, Ray A, et~al. (2022) Training language models to follow
  instructions with human feedback. \emph{Advances in neural information
  processing systems} 35:27730--27744.

\bibitem[{{PaperCopilot}(2025{\natexlab{a}})}]{papercopilot2025statsneurips}
{PaperCopilot} (2025{\natexlab{a}})
  \url{https://papercopilot.com/statistics/neurips-statistics/},
  \urlprefix\url{https://papercopilot.com/statistics/neurips-statistics/},
  accessed: 2025-02-09.

\bibitem[{{PaperCopilot}(2025{\natexlab{b}})}]{papercopilot2025statsicml}
{PaperCopilot} (2025{\natexlab{b}})
  \url{https://papercopilot.com/statistics/icml-statistics},
  \urlprefix\url{https://papercopilot.com/statistics/icml-statistics},
  accessed: 2025-02-09.

\bibitem[{{Reddit discussants}(2021)}]{redditors2021neurips}
{Reddit discussants} (2021) [{D}] {Peer} {Review} is still {BROKEN}! {The}
  {NeurIPS} 2021 {Review} {Experiment} ({Yannic} {Kilcher}). Reddit
  r/MachineLearning, retrieved February 9, 2025 from
  \url{https://www.reddit.com/r/MachineLearning/comments/r24rp7/d_peer_review_is_still_broken_the_neurips_2021//}.

\bibitem[{Robertson et~al.(1988)Robertson, Wright, \protect\BIBand{}
  Dykstra}]{minimax}
Robertson T, Wright FT, Dykstra RL (1988) \emph{Order restricted statistical
  inference} (John Wiley \& Sons).

\bibitem[{Rockafellar(2015)}]{rockafellar2015convex}
Rockafellar RT (2015) \emph{Convex Analysis} (Princeton University Press).

\bibitem[{Rogers \protect\BIBand{} Augenstein(2020)}]{rogers2020can}
Rogers A, Augenstein I (2020) What can we do to improve peer review in nlp?
  \emph{arXiv preprint arXiv:2010.03863} .

\bibitem[{Sculley et~al.(2018)Sculley, Snoek, \protect\BIBand{}
  Wiltschko}]{sculley2018avoiding}
Sculley D, Snoek J, Wiltschko A (2018) Avoiding a tragedy of the commons in the
  peer review process. \emph{arXiv preprint arXiv:1901.06246} .

\bibitem[{Shah et~al.(2018)Shah, Tabibian, Muandet, Guyon, \protect\BIBand{}
  Von~Luxburg}]{shah2018design}
Shah NB, Tabibian B, Muandet K, Guyon I, Von~Luxburg U (2018) Design and
  analysis of the {NIPS} 2016 review process. \emph{Journal of machine learning
  research} 19(49):1--34.

\bibitem[{Stelmakh et~al.(2020)Stelmakh, Shah, Singh, \protect\BIBand{}
  Daum{\'e}~III}]{stelmakh2020novice}
Stelmakh I, Shah NB, Singh A, Daum{\'e}~III H (2020) A novice-reviewer
  experiment to address scarcity of qualified reviewers in large conferences.
  \emph{arXiv preprint arXiv:2011.15050} .

\bibitem[{Su et~al.(2025{\natexlab{a}})Su, Collina, Wen, Li, Cho, Fan, Zhao,
  \protect\BIBand{} Su}]{su2025how}
Su B, Collina N, Wen G, Li D, Cho K, Fan J, Zhao B, Su W (2025{\natexlab{a}})
  How to find fantastic papers: {S}elf-rankings as a powerful predictor of
  scientific impact beyond peer review. \emph{arXiv preprint arXiv:2510.02143}
  \urlprefix\url{https://doi.org/10.48550/arXiv.2510.02143}, arXiv:2510.02143
  [stat.AP].

\bibitem[{Su et~al.(2025{\natexlab{b}})Su, Zhang, Collina, Yan, Li, Cho, Fan,
  Roth, \protect\BIBand{} Su}]{su2024analysis}
Su B, Zhang J, Collina N, Yan Y, Li D, Cho K, Fan J, Roth A, Su W
  (2025{\natexlab{b}}) The {ICML} 2023 ranking experiment: {E}xamining author
  self-assessment in {ML/AI} peer review. \emph{Journal of the American
  Statistical Association (with Discussion)} 1--16.

\bibitem[{Su(2021)}]{su2021you}
Su WJ (2021) You are the best reviewer of your own papers: {A}n owner-assisted
  scoring mechanism. \emph{Advances in Neural Information Processing Systems},
  volume~34, 27929--27939.

\bibitem[{Sun(2020)}]{iclr}
Sun SH (2020) {ICLR}2020-openreviewdata.
  \url{https://github.com/shaohua0116/ICLR2020-OpenReviewData}, [Online].

\bibitem[{Ugarov(2023)}]{ugarov2023peer}
Ugarov A (2023) Peer prediction for peer review: designing a marketplace for
  ideas. \emph{arXiv preprint arXiv:2303.16855} .

\bibitem[{Van~Rooyen et~al.(1999)Van~Rooyen, Godlee, Evans, Black,
  \protect\BIBand{} Smith}]{van1999effect}
Van~Rooyen S, Godlee F, Evans S, Black N, Smith R (1999) Effect of open peer
  review on quality of reviews and on reviewers' recommendations: a randomised
  trial. \emph{Bmj} 318(7175):23--27.

\bibitem[{Wang \protect\BIBand{} Shah(2018)}]{wang2018your}
Wang J, Shah NB (2018) Your 2 is my 1, your 3 is my 9: Handling arbitrary
  miscalibrations in ratings. \emph{arXiv preprint arXiv:1806.05085} .

\bibitem[{Wang et~al.(2020)Wang, Stelmakh, Wei, \protect\BIBand{}
  Shah}]{wang2020debiasing}
Wang J, Stelmakh I, Wei Y, Shah NB (2020) Debiasing evaluations that are biased
  by evaluations. \emph{arXiv preprint arXiv:2012.00714} .

\bibitem[{Wu et~al.(2023)Wu, Xu, Guo, \protect\BIBand{} Su}]{wu2024truth}
Wu J, Xu H, Guo Y, Su W (2023) An isotonic mechanism for overlapping ownership.
  \emph{arXiv preprint arXiv:2306.11154} .

\bibitem[{Xu et~al.(2023)Xu, Jecmen, Song, \protect\BIBand{} Fang}]{xu2023one}
Xu Y, Jecmen S, Song Z, Fang F (2023) A one-size-fits-all approach to improving
  randomness in paper assignment. \emph{Advances in Neural Information
  Processing Systems} 36:14445--14468.

\bibitem[{Yan et~al.(2025)Yan, Su, \protect\BIBand{} Fan}]{yan2023isotonic}
Yan Y, Su WJ, Fan J (2025) Isotonic {M}echanism for exponential family
  estimation in machine learning peer review. \emph{Journal of the Royal
  Statistical Society Series B: Statistical Methodology} .

\bibitem[{Zhang(2002)}]{zhang2002risk}
Zhang CH (2002) Risk bounds in isotonic regression. \emph{The Annals of
  Statistics} 30(2):528--555.

\end{thebibliography}




\clearpage


\section*{Appendix}
\label{sec:appendix}

In the Appendix, we provide the proofs of technical details that are omitted in the main text.

\proof{Proof of Proposition~\ref{thm:estimate}}
Consider the (possibly degenerate) triangle formed by $\by, \bR, \widehat\bR^S$. Assuming the angle $\measuredangle(\by, \widehat\bR^S, \bR) \ge 90^\circ$ for the moment, we immediately conclude that $\|\by - \bR\| \ge \|\widehat\bR^S - \bR\|$, thereby proving the proposition. To finish the proof, suppose the contrary that $\measuredangle(\by, \widehat\bR^S, \bR) < 90^\circ$. Then there must exist a point $\bR'$ on the segment between $\widehat\bR^S$ and $\bR$ such that $\|\by - \bR'\| < \|\by - \widehat\bR^S\|$. Since both $\widehat\bR^S$ and $\bR$ belong to the (convex) isotonic cone $\{\bx: x_{\pi^\star(1)} \ge \cdots \ge x_{\pi^\star(n)}\}$, the point $\bR'$ must be in the isotonic cone as well. However, this contradicts the fact that $\widehat\bR^S$ is the (unique) point of the isotonic cone with the minimum distance to $\by$.

\endproof

\proof{Proof of Proposition~\ref{prop:n=1}}

Assume that $\mathcal S$ is a nontrivial knowledge partition. Pick any knowledge element $S \in \mathcal S$ and let $x$ be an interior point of $S$. Consider the noiseless setting with utility function $U(x) = x$, which is a nondecreasing convex function. Write $a = \sup_y\{y: \text{interval } (x, y) \subset S\}$. If $a = \infty$, then $S$ contains all sufficiently large numbers. In this case, we instead pick any different knowledge element in order to ensure $a < \infty$.

Therefore, we can assume $a < \infty$. Let $S'$ be the knowledge element that contains a (small) right neighborhood of $a$. Taking ground truth $R = \frac{x + 2a}{3} < a$, if the author reports $S'$, then the solution would be $a$. Since $U(a) > U(R)$, the author would be better off reporting $S'$ instead of $S$. This contradiction demonstrates that $\mathcal S$ must be trivial.

\endproof

\proof{Proof of Proposition~\ref{prop:improve}}

Recall that $\widehat\bR^S$ is the solution to
\[
\begin{aligned}
&\min_{\br} ~ \| \by - \br\|^2 \\
&\text{~s.t.} ~~ \br \in S.
\end{aligned}
\]
When the noise level in $\by = \bR + \bz$ tends to zero and the ground truth $\bR \in S$, the projection of $\by$ onto $S$ is asymptotically equivalent to the projection of $\by$ onto the tangent cone of $S$ at $\bR$ (see \citet{rockafellar2015convex}). More precisely, letting $T_S(\bR)$ be the tangent cone of $S$ at $\bR$ and writing $\widehat\bR^{T_S(\bR)}$ for the projection of $\by$ onto $T_S(\bR)$, we have $\widehat\bR^S = \widehat\bR^{T_S(\bR)} + o(\|\widehat\bR^S - \bR\|)$. This fact implies that, with probability tending to one,
\[
\limsup_{\sigma \goto 0} \frac{\|\widehat\bR^{T_S(\bR)} - \bR\|^2}{\|\widehat\bR^S - \bR\|^2} = \limsup_{\sigma \goto 0} \frac{\E \|\widehat\bR^{T_S(\bR)} - \bR\|^2}{\E \|\widehat\bR^S - \bR\|^2} = 1.
\]
To prove the first part of Proposition~\ref{prop:improve}, therefore, it suffices to show that
\[
\|\widehat\bR^{T_{S_2}(\bR)} - \bR\|^2 \le \|\widehat\bR^{T_{S_1}(\bR)} - \bR\|^2
\]
with probability one. This inequality follows from the fact that $T_{S_2}(\bR) \subset T_{S_1}(\bR)$ and both cones have apex at $\bR$.

Next, we prove the second part. Because both $S_1$ and $S_2$ are cones, it follows from Moreau's decomposition theorem that
\[
\|\widehat\bR^{S_1} - \by\|^2 + \|\widehat\bR^{S_1}\|^2 = \|\by\|^2
\]
and
\[
\|\widehat\bR^{S_2} - \by\|^2 + \|\widehat\bR^{S_2}\|^2 = \|\by\|^2.
\]
Since $S_2 \subset S_1$, we get $\|\widehat\bR^{S_1} - \by\|^2 \le \|\widehat\bR^{S_2} - \by\|^2$, which in conjunction with the two identities above gives
\begin{equation}\label{eq:s1s2}
\|\widehat\bR^{S_1}\|^2 \ge \|\widehat\bR^{S_2}\|^2.  
\end{equation}
In the limit $\sigma \goto \infty$, we have $\|\widehat\bR^{S_1} - \bR\|^2 = (1 + o(1))\|\widehat\bR^{S_1}\|^2$ and $\|\widehat\bR^{S_2} - \bR\|^2 = (1 + o(1))\|\widehat\bR^{S_2}\|^2$ with probability tending to one. Together with \eqref{eq:s1s2}, this concludes
\[
\limsup_{\sigma \goto \infty} \frac{\E \|\widehat\bR^{S_2} - \bR\|^2}{\E \|\widehat\bR^{S_1} - \bR\|^2} \le \limsup_{\sigma \goto \infty} (1 + o(1)) = 1.
\]

\endproof

\proof{Proof for the example in Section~\ref{sec:exampl-truth-tell}}

First, consider the case $U(x) = x^2$. For simplicity, we start by assuming $\bR = \bm 0$. Due to symmetry, the expected overall utility of reporting $S_1$ is the same as that of reporting an arbitrary isotonic cone $S_{\pi}$. In particular, taking any $S_{\pi} \ne S_1$, we have $S_{\pi} \subset S_2 $. The proof of Proposition~\ref{prop:improve} above shows that 
\begin{equation}\label{eq:lager_coned}
\|\widehat\bR^{S_\pi}\|^2  \le \|\widehat\bR^{S_2}\|^2
\end{equation}
with probability one. By taking the expectation, we get
\[
\E U(\widehat\bR^{S_\pi}) = \E \|\widehat\bR^{S_\pi}\|^2  < \E \|\widehat\bR^{S_2}\|^2 = \E U(\widehat\bR^{S_2})
\]
since \eqref{eq:lager_coned} is a strict inequality with positive probability. Equivalently, we get
\begin{equation}\label{eq:some_thing}
\E U(\widehat\bR^{S_1}) < \E U(\widehat\bR^{S_2}).
\end{equation}
Moving back to the case $\bR = (n\epsilon, (n-1)\epsilon, \ldots, 2\epsilon, \epsilon) \in S_1$, \eqref{eq:some_thing} remains valid for sufficiently small $\epsilon$.

Next, we consider $U(x) = \max\{0, x\}^2$. As earlier, we first assume $\bR = \bm 0$. For any isotonic cone $S_{\pi}$, let us take as given for the moment that the empirical distribution of the entries of $\widehat\bR^{S_\pi}$ is symmetric with respect to the origin over the randomness of the Gaussian noise. This symmetry is also true for $S_2$. Therefore, we get
\[
\E U(\widehat\bR^{S_1}) = \frac12 \E \|\widehat\bR^{S_1}\|^2 < \frac12 \E \|\widehat\bR^{S_2}\|^2 = \E U(\widehat\bR^{S_2}).
\]
This inequality continues to hold for sufficiently small $\epsilon$.

To finish the proof, we explain why the above-mentioned symmetric property of $\widehat\bR^{S_\pi}$ in distribution is true. Let $\pi^-$ be the reverse ranking of $\pi$, that is, $\pi^-(i) = \pi(n +1 - i)$ for all $1 \le i \le n$. For any Gaussian noise vector $\bz = (z_1, \ldots, z_n)$, it is easy to see that the entries (as a set) of the projection of $\bz$ onto $S_\pi$ are negative to the entries (as a set) of the projection of $- \pi^- \circ \bz$ onto $S_\pi$. Finally, note that $- \pi^- \circ \bz$ has the same probability distribution as $\bz$. This completes the proof.

\endproof

\proof{Proof of Proposition~\ref{thm:incomplete}}
Recognizing that the solution to the Isotonic Mechanism takes the form $\pi^{-1} \circ (\pi \circ (\bR + \bz))^+$, the expected overall utility is
\[
\begin{aligned}
\E \util(\pi^{-1} \circ (\pi \circ (\bR + \bz))^+) &= \E \util((\pi \circ (\bR + \bz))^+)\\
&= \E \util((\pi \circ \bR + \pi \circ \bz)^+)\\
&= \E \util((\pi \circ \bR + \bz)^+)\\
&= \E \left[ \sum_{i=1}^n U((\pi \circ \bR + \bz)^+_i) \right],
\end{aligned}
\]
where we use the exchangeability of the distribution of the noise vector $\bz$. Next, the assumption that $\pi_1$ is more consistent than $\pi_2$ with respect to the ground truth $\bR$ implies
\[
\pi_1 \circ \bR \succeqn \pi_2 \circ \bR,
\]
from which it follows that $\pi_1 \circ \bR + \bz \succeqn \pi_2 \circ \bR + \bz$. By the Hardy--Littlewood--P\'olya inequality, therefore, we get
\[
\sum_{i=1}^n U((\pi_1 \circ \bR + \bz)^+_i)  \ge \sum_{i=1}^n U((\pi_2 \circ \bR + \bz)^+_i) 
\]
for all $\bz$. This concludes
\[
\E \util(\pi_1^{-1} \circ (\pi_1 \circ (\bR + \bz))^+) \ge \E \util(\pi_2^{-1} \circ (\pi_2 \circ (\bR + \bz))^+).
\]

\endproof

\proof{Proof of Lemma~\ref{lm:mono}}
  
The proof relies on the min-max formula of isotonic regression (see Chapter 1 of \citet{minimax}):
\[
a^+_k = \max_{v \ge k} \min_{u \le k} \frac{a_u + a_{u+1} + \cdots + a_v}{v - u + 1}
\]
for any $k = 1, \ldots, n$. For simplicity, write $\bb = \ba + \delta \bm\e_i$. Then it is clear that $\bb$ is larger than or equal to $\ba$ in the component-wise sense. Therefore, we get
\[
b^+_k = \max_{v \ge k} \min_{u \le k} \frac{b_u + b_{u+1} + \cdots + b_v}{v - u + 1} \ge \max_{v \ge k} \min_{u \le k} \frac{a_u + a_{u+1} + \cdots + a_v}{v - u + 1} = a^+_k
\]
for all $k = 1, \ldots, n$. This concludes the proof.

\endproof


\proof{Proof of Lemma~\ref{lm:single_val}}
To begin with, assume that $\ba^+$ has constant entries. Suppose the contrary that 
\[
\bar a_k :=\frac{a_1 + \cdots + a_k}{k} > \bar a
\]
for some $k$, which implies
\[
\frac{a_1 + \cdots + a_k}{k} > \bar a > \bar a_{-k} : = \frac{a_{k+1} + \cdots + a_n}{n - k}.
\]
This inequality allows us to get
\[
\begin{aligned}
&\sum_{i=1}^k (a_i - \bar a_k)^2 + \sum_{i=k+1}^n (a_i - \bar a_{-k})^2  \\
&= \sum_{i=1}^k (a_i - \bar a)^2 - k (\bar a_k - \bar a)^2 + \sum_{i=k+1}^n (a_i - \bar a)^2  - (n-k) (\bar a_{-k} - \bar a)^2\\
& <  \sum_{i=1}^n (a_i - \bar a)^2\\
& = \|\ba - \ba^+\|^2.
\end{aligned}
\]
As such, the vector formed by concatenating $k$ copies of $\bar a_k$ followed by $n-k$ copies of $\bar a_{-k}$, which lies in the standard isotonic cone since $\bar a_k > \bar a_{-k}$, leads to a smaller squared error than $\ba^+$. This contradicts the definition of $\ba^+$.

Next, we assume that 
\begin{equation}\label{eq:piece_bal}
\frac{a_1 + \cdots + a_k}{k} \le \bar a
\end{equation} 
for all $k =1, \ldots, n$. To seek a contradiction, suppose that $\ba^+$ has more than one constant piece. That is, a partition of $\{1, 2, \ldots, n\} = I_1 \cup I_2 \cup \cdots \cup I_L$ with $L \ge 2$ from the left to the right satisfies the following: the entries of $\ba^+$ on $I_l$ are constant and, denoting by $a^+_{I_l}$ the value on $I_l$, the isotonic constraint requires 
\begin{equation}\label{eq:dec_piece}
a^+_{I_1} > a^+_{I_2} > \cdots > a^+_{I_L}.
\end{equation} 
Making use of a basic property of isotonic regression, we have
\[
a^+_{I_l} = \frac{\sum_{i \in I_l} a_i}{|I_l|}
\]
for $l = 1, \ldots, L$, where $|I_l|$ denotes the set cardinality. This display, together with \eqref{eq:dec_piece}, shows
\[
\begin{aligned}
\bar a &= \frac{a_1 + \cdots + a_n}{n} \\
&= \frac{|I_1| a^+_{I_1} + |I_2| a^+_{I_2} + \cdots + |I_L| a^+_{I_L}}{n} \\
&< \frac{|I_1| a^+_{I_1} + |I_2| a^+_{I_1} + \cdots + |I_L| a^+_{I_1}}{n} \\
&= a^+_{l_1}\\
& = \frac{a_1 + \cdots + a_{|I_1|}}{|I_1|},
\end{aligned}
\]
a contradiction to \eqref{eq:piece_bal}.

\endproof

\proof{Proof of Proposition~\ref{thm:fixed_util}}

Denote by $\bu$ the unit-norm vector reported by the author (equivalently, $-\bu$). The output of the mechanism is 
\[
\widehat\bR^{\bu} = (\bu \cdot \by) \bu = (\bu \cdot (\bR + \bz)) \bu = (\bu \cdot \bR) \bu + (\bu \cdot \bz) \bu.
\]
The overall utility is
\[
\|\widehat\bR^{\bu}\|^2 = \|(\bu \cdot \bR) \bu + (\bu \cdot \bz) \bu\|^2.
\]
Its expectation is
\[
\E\|(\bu \cdot \bR) \bu + (\bu \cdot \bz) \bu\|^2 = \|(\bu \cdot \bR) \bu\|^2 + \E \|(\bu \cdot \bz) \bu\|^2 + 2\E \left[ \left((\bu \cdot \bR) \bu\right) \cdot \left( (\bu \cdot \bz) \bu \right)\right].
\]
We have $\|(\bu \cdot \bR) \bu\|^2 = (\bu \cdot \bR)^2\|\bu\|^2 = (\bu \cdot \bR)^2$ and
\[
\begin{aligned}
\E \|(\bu \cdot \bz) \bu\|^2 = \E (\bu \cdot \bz)^2 \|\bu\|^2 = \E (\bu \cdot \bz)^2 = \|\bu\|^2 \E z_1^2 = \E z_1^2,
\end{aligned}
\]
where the third equality makes use of the fact that $z_1, \ldots, z_n$ are i.i.d.~centered random variables. Besides, we have
\[
2\E \left[ \left((\bu \cdot \bR) \bu\right) \cdot \left( (\bu \cdot \bz) \bu \right)\right] = 2 \left((\bu \cdot \bR) \bu\right) \cdot \left( (\bu \cdot \E \bz) \bu \right) = 0.
\]
Thus, we get
\[
\E \|\widehat\bR^{\bu}\|^2 = (\bu \cdot \bR)^2 + \E z_1^2 \le \|\bR\|^2 + \E z_1^2,
\]
with equality if and only if $\bu$ has the same direction as $\bR$, that is, $\bR \in \{a \bu: a \in \R\}$. In words, the author would maximize her expected overall utility if and only if she reports the line that truly contains the ground truth.

\endproof

\begin{proposition}\label{prop:nonconvex}
Under Assumptions~\ref{ass:author2} and \ref{ass:noise}, if the utility function $U$ in \eqref{eq:util_form} is nonconvex, then there exists a certain ground truth $\bR$ and a noise distribution such that the author is not truthful under the Isotonic Mechanism.

\end{proposition}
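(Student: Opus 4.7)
The plan is to exhibit an explicit counterexample by taking $n=2$ and a degenerate noise distribution (a point mass at $0$), so that analyzing the Isotonic Mechanism reduces to a deterministic computation. Since $U$ fails to be convex, there exist real numbers $x,y$ and a weight $\lambda \in (0,1)$ with $U(\lambda x + (1-\lambda) y) > \lambda U(x) + (1-\lambda) U(y)$. Assuming for the moment that we can arrange $\lambda = 1/2$, i.e.\ that $U$ is not midpoint convex, so that there are $a > b$ with
\[
2\, U\!\left(\tfrac{a+b}{2}\right) > U(a) + U(b),
\]
the construction is immediate.

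Concretely, take $\bR = (a,b)$ with the numbers just chosen, and let $z_1=z_2=0$ almost surely, so that $\by = \bR$. The true ranking $\pi^\star$ is the identity, so $S_{\pi^\star} = \{\bx : x_1 \ge x_2\}$, and $\bR \in S_{\pi^\star}$. Reporting $\pi^\star$ truthfully yields $\widehat{\bR}_{\pi^\star} = \bR$, for overall utility $U(a) + U(b)$. If instead the owner reports the reversed ranking $\pi = (2,1)$, the mechanism projects $\by$ onto $S_\pi = \{\bx : x_2 \ge x_1\}$; because $a > b$, the projection is $\bigl((a+b)/2,\,(a+b)/2\bigr)$, with overall utility $2\, U((a+b)/2) > U(a) + U(b)$. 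Hence the reverse ranking strictly dominates honesty, and the Isotonic Mechanism fails to be truthful at this $\bR$ under this noise distribution.

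The main obstacle is the step that passes from bare nonconvexity of $U$ to the existence of a pair $(a,b)$ with strict midpoint nonconvexity. For measurable $U$ (in particular, any continuous $U$) this is automatic: a classical theorem asserts that measurable midpoint-convex functions are convex, so nonconvexity forces failure of midpoint convexity somewhere, which, combined with the fact that $U$ can be chosen nonconstant on the relevant interval, produces the desired strict inequality. For pathological non-measurable nonconvex $U$, one instead works with nonconvexity at a rational weight $\lambda = k/m$ and enlarges $n$ to $m$: choosing a ground truth whose $m$ coordinates split into two blocks of sizes $k$ and $m-k$ with values close to $x$ and $y$, and using a reversal-type misreport, the pool adjacent violators algorithm averages the two blocks together, realizing the convex combination $\lambda x + (1-\lambda) y$ on all coordinates and again yielding a strict utility gain for the dishonest report. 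Either way, the owner is strictly better off lying, which proves the proposition.
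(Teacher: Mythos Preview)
Your approach is essentially the same as the paper's: take the noise to be identically zero, exploit a strict failure of midpoint convexity $U(r_1)+U(r_2) < 2U\bigl((r_1+r_2)/2\bigr)$, and show that swapping two coordinates in the reported ranking forces the Isotonic Mechanism to average those two entries, thereby strictly increasing the overall utility. You are in fact more careful than the paper on one point: the paper simply asserts that nonconvexity of $U$ yields such a pair $(r_1,r_2)$, while you correctly note that this requires a regularity hypothesis (measurability suffices, via the Sierpi\'nski-type theorem that measurable midpoint-convex functions are convex).

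One minor issue to fix: the proposition is stated under Assumption~\ref{ass:author2}, so $n\ge 2$ is a \emph{given} ambient dimension and $\bR\in\R^n$ must be constructed for that $n$; you cannot simply set $n=2$. The paper handles this by padding: it takes $R_1=r_1$, $R_2=r_2$, and $R_i=r_2-i$ for $i\ge 3$, so that swapping only the first two positions leaves the remaining coordinates untouched by PAVA and the comparison reduces to the two-dimensional one. Your construction extends the same way. Relatedly, your digression on non-measurable $U$ with rational $\lambda=k/m$ both assumes you can choose $n=m$ (which you cannot) and assumes nonconvexity occurs at a rational weight (which is not guaranteed for pathological $U$); since the paper itself tacitly assumes enough regularity to pass from nonconvexity to midpoint nonconvexity, it is cleanest to simply state that assumption and drop the non-measurable case.
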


\proof{Proof of Proposition~\ref{prop:nonconvex}}
Let the noise vector $\bz = \bm 0$. Since $U$ is not convex, there must exist $r_1 > r_2$ such that
\begin{equation}\label{eq:r1r2_non}
U(r_1) + U(r_2) < 2 U\left( \frac{r_1 + r_2}{2} \right).
\end{equation}
Let the ground truth $\bR$ satisfy $\bR_1 = r_1, \bR_2 = r_2$, and $\bR_{i} = r_2 - i$ for $i = 3, \ldots,n$. Note that $\bR$ is in descending order. If the author reports the ground-truth ranking, the solution to the Isotonic Mechanism is $\bR$ itself and her overall utility is
\begin{equation}\label{eq:r_util1}
U(r_1) + U(r_2) + \sum_{i=3}^n U(r_2 - i).
\end{equation}
However, if the author reports $\pi$ such that $\pi(1) = 2, \pi(2) = 1$, and $\pi(i) = i$ for $i \ge 3$, then the solution with this ranking is 
\[
\widehat\bR^{\pi} = \left(\frac{r_1 + r_2}{2}, \frac{r_1 + r_2}{2}, r_2 - 3, r_2 - 4, \ldots, r_2-n \right).
\]
The corresponding overall utility is
\begin{equation}\label{eq:r_util2}
U\left( \frac{r_1 + r_2}{2} \right) + U\left( \frac{r_1 + r_2}{2} \right) + \sum_{i=3}^n U(r_2 - i) = 2U\left( \frac{r_1 + r_2}{2} \right) + \sum_{i=3}^n U(r_2 - i).
\end{equation}
It follows from \eqref{eq:r1r2_non} that \eqref{eq:r_util2}$>$\eqref{eq:r_util1}, thereby implying that the author would be better off reporting the incorrect ranking $\pi$ instead of the ground-truth ranking.


\endproof


\end{document}